\theoremstyle{plain}
\newtheorem{theorem}{Theorem}
\newtheorem{lemma}{Lemma}
\newtheorem{corollary}{Corollary}
\theoremstyle{definition}
\newtheorem{remark}{Remark}
\newtheorem*{remark*}{Remark}
\newcommand{\Perp}{\perp}
\newcommand{\argmax}{\mathop{\arg\max}}
\newcommand{\Oprob}{O_{\Prob}}
\newcommand{\oprob}{o_{\Prob}}
\newcommand{\diverge}{\to\infty}
\newcommand{\iiddistr}{{\stackrel{\text{\iid}}{\sim}}}
\newcommand{\ones}{\mathbf 1}
\newcommand{\reals}{{\mathbb{R}}}
\newcommand{\naturals}{{\mathbb{N}}}
\newcommand{\eexp}{{\rm e}}
\newcommand{\identity}{\mathbf I}
\newcommand{\allones}{\mathbf J}
\newcommand{\diff}{{\rm d}}
\newcommand{\Expect}{\mathbb{E}}
\newcommand{\expect}[1]{\mathbb{E}\left[ #1 \right]}
\newcommand{\eexpect}[1]{\mathbb{E}[ #1 ]}
\newcommand{\Prob}{\mathbb{P}}
\newcommand{\pprob}[1]{ \mathbb{P}\{ #1 \} }
\newcommand{\prob}[1]{ \mathbb{P}\left\{ #1 \right\} }
\newcommand{\var}{\mathsf{var}}
\newcommand{\Bern}{{\rm Bern}}
\newcommand{\Binom}{{\rm Binom}}
\newcommand{\eg}{e.g.\xspace}
\newcommand{\ie}{i.e.\xspace}
\newcommand{\iid}{i.i.d.\xspace}
\newcommand{\pth}[1]{\left( #1 \right)}
\newcommand{\sth}[1]{\left\{ #1 \right\}}
\newcommand{\lnorm}[2]{\left\|{#1} \right\|_{{#2}}}
\newcommand{\fnorm}[1]{\|#1\|_{\rm F}}
\newcommand{\iprod}[2]{\left \langle #1, #2 \right\rangle}
\newcommand{\Iprod}[2]{\langle #1, #2 \rangle}
\newcommand{\indc}[1]{{\mathbf{1}_{\left\{{#1}\right\}}}}
\newcommand{\diag}[1]{\mathsf{diag} \left\{ {#1} \right\} }
\newcommand{\tY}{{\widetilde{Y}}}
\newcommand{\calG}{{\mathcal{G}}}
\newcommand{\calS}{{\mathcal{S}}}
\newcommand{\calY}{{\mathcal{Y}}}
\newcommand{\calZ}{{\mathcal{Z}}}
\newcommand{\Th}{{\rm th}}
\newcommand{\ML}{{\rm ML}\xspace}
\newcommand{\SDP}{{\rm SDP}\xspace}
\newcommand{\ER}{Erd\H{o}s-R\'enyi\xspace}
\newcommand{\IR}{\mathbb{R}}
\renewcommand{\hat}{\widehat}
\renewcommand{\tilde}{\widetilde}
\begin{document}

\title{Achieving Exact Cluster Recovery Threshold via Semidefinite
Programming: Extensions}
\date{\today}

\author{
Bruce Hajek \\
UIUC \\
{b-hajek@illinois.edu}
\and
Yihong Wu \\
UIUC\\
{yihongwu@illinois.edu}
\and
Jiaming Xu  \\
UIUC\\
{jxu18@illinois.edu}}

\author{ Bruce Hajek \and Yihong Wu \and Jiaming Xu\thanks{
B. Hajek and Y. Wu are with
the Department of ECE and Coordinated Science Lab, University of Illinois at Urbana-Champaign, Urbana, IL, \texttt{\{b-hajek,yihongwu\}@illinois.edu}.
J. Xu is with the Simons Institute for the Theory of Computing, University of California, Berkeley, Berkeley, CA, 
\texttt{jiamingxu@berkeley.edu}.
This work was in part presented at the Workshop on Community Detection, February 26-27,  2015, Institut Henri Poincar\'e, Paris.
The material was also presented in part at the 2015 49th Asilomar Conference on Signals, Systems and Computers \cite{HWX15-asilomar}
and the 2015 IEEE Information Theory Workshop \cite{HWX15-ITW}. 
This research was supported by the National Science Foundation under
Grant CCF 14-09106, IIS-1447879, NSF IOS 13-39388, and CCF 14-23088, and
Strategic Research
Initiative on Big-Data Analytics of the College of Engineering
at the University of Illinois, and DOD ONR Grant N00014-14-1-0823, and Grant 328025 from the Simons Foundation.
}}

\maketitle

\begin{abstract}
Resolving a conjecture of Abbe, Bandeira and Hall,
the authors have recently shown that the semidefinite
programming (SDP) relaxation of the maximum likelihood
estimator achieves the sharp threshold for exactly recovering
the community structure under the binary stochastic block
model of two equal-sized clusters.   The same was shown for the
case of a single cluster and outliers.    Extending the proof techniques,
in this paper it is shown that SDP relaxations also
achieve the sharp recovery threshold in the
following cases: (1) Binary stochastic block model with two
clusters of sizes proportional to network size but not necessarily
equal; (2) Stochastic block model with a fixed number of
equal-sized clusters; (3) Binary censored block model with
the background graph being Erd\H{o}s-R\'enyi.   Furthermore,
a sufficient condition is given for an SDP procedure to achieve
exact recovery for the general case of a fixed number of clusters plus outliers.
These results demonstrate the versatility of SDP relaxation as a simple, general purpose, computationally feasible methodology for community detection.
\end{abstract}

\section{Introduction}
The \emph{stochastic block model} (SBM) \cite{Holland83}, also known as
the \emph{planted partition model} \cite{Condon01},
 is a popular statistical model for studying the community detection and graph partitioning problem
(see, e.g.,~\cite{McSherry01,Decelle11,Mossel12, Mossel13, Massoulie13, Chen12,ChenXu14, AbbeSandon15} and the references therein).
In its simple form, it assumes that out of a total of $n$ vertices,
$(K_1 + \cdots + K_r) $ of them are partitioned into
$r$ clusters with sizes $K_1, \ldots , K_r,$ and the remaining
$n-(K_1 + \cdots + K_r)$ vertices do not belong to any clusters (called outlier vertices);
a random graph $G$ is then generated based on the cluster structure,  where
each pair of vertices is connected independently with probability $p$ if they are in the same cluster or $q$ otherwise.
In this paper, we focus on the problem of exactly recovering the clusters (up to a permutation of cluster indices) based on the graph $G$.

In the setting of two equal-sized clusters or a single cluster plus outlier vertices,
recently it has been shown in \cite{HajekWuXuSDP14} that the semidefinite programming (SDP) relaxation of the
maximum likelihood (ML) estimator  achieves the optimal
recovery threshold with high probability, in the asymptotic regime
of $p=a \log n /n$ and $q=b\log n /n$ for fixed constants $a,b$ and cluster sizes growing linearly in $n$
as $n \to \infty$.
The result for two equal-sized clusters was originally conjectured in \cite{Abbe14}
and another resolution was recently given
in \cite{Bandeira15} independently.

In this paper, we extend the optimality of SDP to the following three cases, while still assuming $p=a \log n /n$ and $q=b\log n /n$ with $a>b>0$:
 \begin{itemize}
\item  Stochastic block model with two asymmetric clusters: the first cluster consists of  $K$ vertices and
the second cluster consists of $n-K$ vertices with $K=\lfloor \rho n \rfloor $ for some $\rho \in [0,1/2].$
The value of $\rho$ may be known or unknown to the recovery procedure.
 \item Stochastic block model with $r$ clusters of equal size $K$: $r \ge 2$ is a fixed integer and $n=rK$.
 \item Censored block model with two clusters:
  given an \ER random graph $G \sim \calG(n, p)$,
 each edge $(i,j)$ has a label $L_{ij} \in \{\pm 1\}$ independently drawn according to the distribution:
\begin{align*}
P_{L_{ij} | \sigma^\ast_i, \sigma^\ast_j } = (1-\epsilon) \indc{L_{ij} = \sigma_i^\ast \sigma^\ast_j} +  \epsilon \indc{L_{ij} = - \sigma_i^\ast \sigma^\ast_j},
\end{align*}
where $\sigma^\ast_i=1$ if vertex $i$ is in the first cluster and $\sigma^\ast_i=-1$ otherwise; $\epsilon \in [0, 1/2]$ is a fixed constant.\footnote{Under the censored block model, the graph itself does not contain any information about the underlying clusters and we are interested in recovering the clusters by observing the graph and edge labels.}
  \end{itemize}

In all three cases, we show that a necessary condition for the maximum likelihood (ML) estimator to succeed coincides with a sufficient condition for the correctness of the \SDP procedure, thereby establishing both the optimal recovery threshold and the optimality of the SDP relaxation. The proof techniques in this paper are similar to those in \cite{HajekWuXuSDP14}; however, the construction and validation of dual certificates for the success of \SDP are more challenging especially in the multiple-cluster case.
Notably, we resolve an open problem raised in \cite[Section 6]{AbbeBAndeiraSigner14} about the optimal recovery threshold in the censored block model and show that the optimal recovery threshold can be achieved in polynomial-time via \SDP.

 To further investigate the applicability of  \SDP procedures for community
 detection, we explored two cases for which the algorithm is \emph{adaptive} to the unknown
 cluster sizes.  First,
 we found that for two clusters,
 the conditions for exact recovery are the
 strongest in the equal-sized case.
 This suggests, and it is shown in Section \ref{subset:unkown_size}, that
 if the cluster size constraint is replaced by an appropriate Lagrangian term  not
 depending on the cluster size,
exact recovery is achieved for all cluster sizes under the condition required for two equal-sized clusters.
Secondly,  we examined the general community
 detection problem with
a fixed number of unequal-sized clusters and with outlier vertices, and
 identified a sufficient condition for  the  \SDP procedure to achieve exact
 recovery with knowledge  of only the smallest cluster size and the
 parameters $a,b.$ (See \prettyref{sec:general_case}.)

The optimality result of SDP has recently been extended to the cases with $o(\log n)$ number of equal-sized clusters in  \cite{ABBK}
 and  a fixed number of  clusters with unequal  sizes in \cite{perry2015semidefinite}.

 \paragraph{Parallel independent work}
 The exact recovery problem in the logarithmic
 sparsity regime has been independently studied in \cite{AbbeSandon15}
 in a more general setting: Given a fixed $r \times r$ matrix $Q$ and a probability vector $s=(s_1,\ldots,s_r)$, the cardinality of the $k^\Th$ community is assumed to be $s_k n$ and vertices in the $k^\Th$ and $\ell^\Th$ community are connected independently with
 probability $Q_{k\ell} \log n /n$. The optimal recovery
 threshold is obtained as a function of $Q$ and $s$. In the special setting of $Q_{kl}=a$ if $k=\ell$ and $Q_{kl}=b$ if $k \neq \ell$, for
two asymmetric clusters or multiple equal-sized clusters,
their general optimal threshold reduces to  those derived in this paper.
Assuming full knowledge of the parameters $Q$ and $s$, the optimal
recovery threshold is further shown in \cite{AbbeSandon15} to be achievable in $o(n^{1+\epsilon})$ time
 for all $\epsilon>0$ via a two-phase procedure, consisting of a partial recovery algorithm followed by a cleanup step.

 For the case of $r$ equal-sized clusters, it is  independently shown in \cite{YunProutiere14}  that 
 the optimal recovery threshold 
can be obtained in polynomial-time. Their clustering algorithm is a two-step procedure similar to \cite{AbbeSandon15}, where the partial recovery is achieved via a simple spectral algorithm.
For the case with two unequal-sized clusters, a sufficient (but not tight) recovery condition is also derived in  \cite{YunProutiere14}. 



\paragraph{Further literature on SDP for cluster recovery}
 There has been a recent surge of interest in analyzing the semidefinite programming
relaxation approach for cluster recovery;
some of the latest development are summarized below.
For different recovery approaches such as spectral methods,
we refer the reader to \cite{ChenXu14, AbbeSandon15} for details.

The SDP approach is mostly analyzed
in the regime where the average degrees scale as
$\log n$, with the objective of exact cluster recovery. In this setting, the analysis often relies on the standard technique of dual witnesses, which amounts to
constructing the dual variables so that the desired KKT conditions are satisfied for the primal variable corresponding to the true clusters.
The SDP has been applied to recover cliques or densest subgraphs
in \cite{ames2011plantedclique,ames2010kclique,ames2013robust}.
For the stochastic block model with possibly unbounded number of clusters,
a sufficient condition for an SDP procedure to achieve exact recovery is obtained in \cite{ChenXu14},
which improves the sufficient conditions in \cite{Chen12,oymak2011finding} in terms of scaling.
Various formulations of SDP  for cluster recovery are discussed in  \cite{Amini14}.
The robustness of the SDP  has been investigated in \cite{FK01} for minimum bisection in the semirandom model with monotone adversary and, more recently,
in \cite{cai2014robust} for generalized SBM with arbitrary outlier vertices.
 The SDP machinery has also been applied to recover clusters with partially observed graphs \cite{Chen13,Vinayak14} and
 binary matrices \cite{Hajek13}.  In the converse direction, necessary conditions for the
success of particular SDPs are obtained in \cite{vinayak2013sharp,ChenXu14}. In contrast to the previous work mentioned
above where the constants are often loose, the recent line of work initiated by \cite{Abbe14,AbbeBAndeiraSigner14}, and followed by
 \cite{HajekWuXuSDP14,Bandeira15} and the current paper, focus on establishing necessary and sufficient conditions in the special case of a fixed number of clusters with sharp constants, attained via SDP relaxations.

In the sparse graph case with bounded average degree, exact recovery is provably impossible and instead the goal is to achieve partial recovery, namely, to correctly cluster all but a small fraction of vertices. Using Grothendieck's inequality, a sufficient condition for SDP to achieve partial recovery is
 obtained in \cite{Vershynin14};
  the technique is extended
 to the labeled stochastic block model
 in \cite{LelargeMassoulieXu15}.
In \cite{MontanariSen15}, an SDP-based test is applied to distinguish the binary symmetric stochastic block model versus
 the \ER random graph  and shown to attain the optimal detection threshold. 
 

\paragraph{Notation}
Denote the identity matrix by $\identity$, the all-one matrix by $\allones$ 
and the all-one vector by $\mathbf{1}$. 
We write  $X \succeq 0$ if $X$ is symmetric and positive semidefinite and $X \ge 0$ if all the entries of $X$ are non-negative.
Let $\calS^n$ denote the set of all $n \times n$ symmetric matrices. For $X \in \calS^n$, let $\lambda_2(X)$ denote its second smallest eigenvalue.
For any matrix $Y$, let $\|Y\|$ denote its spectral norm.
For any positive integer $n$, let $[n]=\{1, \ldots, n\}$.
For any set $T \subset [n]$, let $|T|$ denote its cardinality and $T^c$ denote its complement.
For $\rho \in [0,1]$, let $\bar \rho=1-\rho$.
We use standard big $O$ notations,
e.g., for any sequences $\{a_n\}$ and $\{b_n\}$, $a_n=\Theta(b_n)$
if there is an absolute constant $c>0$ such that $1/c\le a_n/ b_n \le c$; $a_n =\Omega(b_n)$ or $b_n = O(a_n)$ if there exists  an absolute constant $c>0$ such that $a_n/b_n \ge c$.
Let $\Bern(p)$ denote the Bernoulli distribution with mean $p$ and
$\Binom(n,p)$ denote the binomial distribution with $n$ trials and success probability $p$.
All logarithms are natural and we use the convention $0 \log 0=0$.

\section{Binary asymmetric SBM}
\label{sec:asym}

\subsection{Known cluster size}  \label{subset:kown_size}

Let $A$ denote the adjacency matrix of the graph, and $(C^\ast_1, C^\ast_2)$ denote the
underlying true partition, where the clusters $C^\ast_1$ and $C^\ast_2$ have cardinalities
$K$ and $n-K$, respectively, and we consider the asymptotic regime $K=\lceil n\rho \rceil$
as $n\to\infty$ for $\rho \in [0,\frac{1}{2}]$ fixed.
In this subsection we assume that $\rho$ is known to the recovery procedure and the goal is to obtain the $\rho$-dependent optimal recovery threshold attained by SDP relaxations.

The cluster structure  under the binary  stochastic block model can be represented by a vector
$\sigma \in \{\pm 1\}^n$ such that $\sigma_i=1$ if vertex $i$ is
in the first cluster and $\sigma_i=-1$ otherwise. Let $\sigma^\ast$ correspond to the true clusters.
Then the \ML estimator of $\sigma^\ast$  for the case $a>b$ can be simply stated as
\begin{align}
\max_{\sigma}  & \; \sum_{i,j} A_{ij} \sigma_i \sigma_j \nonumber  \\
\text{s.t.	}  & \; \sigma_i \in \{\pm1\}, \quad i \in [n] \nonumber \\
 & \; \sigma^\top \mathbf{1} =2K-n,
 \label{eq:SBMML1_unbalanced}
\end{align}
which maximizes the number of in-cluster edges minus the number of out-cluster edges subject to the cluster size constraint.
If $K=n/2$,  \prettyref{eq:SBMML1_unbalanced} reduces to the minimum graph bisection problem which is NP-hard in the worst case. Due to the computational intractability of the ML estimator, next we turn to its convex relaxation.
Let $Y=\sigma \sigma^\top$. Then $Y_{ii}=1$ is equivalent to $\sigma_i = \pm 1$, and
$\sigma^\top \mathbf{1}=\pm (2K-n)$ if and only if $\Iprod{Y}{\allones}=(2K-n)^2$.
Therefore, \prettyref{eq:SBMML1_unbalanced} can be recast as\footnote{Henceforth, all matrix variables in the optimization are symmetric.}
\begin{align}
\max_{Y}  & \; \Iprod{A}{Y} \nonumber  \\
\text{s.t.	} & \; \text{rank}(Y)=1  \nonumber  \\
& \;  Y_{ii} =1, \quad i \in [n]\nonumber \\
& \;  \Iprod{\allones}{Y} =(2K-n)^2. \label{eq:SBMML2_unbalanced}
\end{align}
Notice that any feasible solution
is a rank-one positive semidefinite matrix. Relaxing this
condition by dropping the rank-one restriction, we obtain the following convex relaxation of \prettyref{eq:SBMML2_unbalanced},
which is a semidefinite program:
\begin{align}
\widehat{Y}_{\SDP} = \argmax_{Y}  & \; \langle A, Y \rangle \nonumber  \\
\text{s.t.	} & \; Y \succeq 0  \nonumber \\
 & \;  Y_{ii} =1, \quad i \in [n] \nonumber \\
 & \; \Iprod{\allones}{Y} =(2K-n)^2. \label{eq:SBMconvex_unbalanced}
\end{align}
We note that
the only model parameter needed by the estimator \prettyref{eq:SBMconvex_unbalanced} is the cluster size $K$.

Let $Y^\ast=\sigma^\ast (\sigma^\ast)^\top$ correspond to the true partition 
and $\calY_n \triangleq \{\sigma \sigma^\top: \sigma \in \{ \pm 1 \}^n, \sigma^\top \mathbf{1} = 2K -n \}$ denote
the set of all admissible partitions. 
The following result establishes the optimality of the \SDP procedure.

\begin{theorem}\label{thm:SBMSharp_unbalanced}
If $\eta( \rho, a, b) > 1$, then $\min_{Y^* \in \calY_n} \pprob{\widehat{Y}_{\SDP}=Y^\ast} \geq  1 - n^{-\Omega(1)}$ as $n \to \infty$, 
where 
\begin{align}
\eta(\rho, a, b) = \frac{a+b}{2} -\gamma +  \frac{ ( \bar{\rho}  - \rho ) \tau}{2 } \log \frac{\rho  (\gamma+( \bar{\rho} -\rho ) \tau) } {\bar{\rho} (\gamma- ( \bar{\rho} -\rho ) \tau)}
\label{eq:threshold}
\end{align}
with  $\bar{\rho}=1-\rho$, 
$\tau = \frac{a-b}{\log a - \log b}$, $\gamma= \sqrt{ (\bar{\rho} -\rho )^2 \tau^2 + 4 \rho \bar{\rho} a b }$, and $\eta(0, a, b)= \lim_{\rho \to 0} \eta(\rho, a, b)= \frac{a+b}{2} - \tau \log  \frac{ \eexp \sqrt{ab} }{\tau}.$
\end{theorem}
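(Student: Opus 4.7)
The plan is to follow the dual-certificate strategy of \cite{HajekWuXuSDP14} adapted to asymmetric cluster sizes. Writing the Lagrangian of \prettyref{eq:SBMconvex_unbalanced} with diagonal multipliers $D=\diag{d_i}$ for $Y_{ii}=1$, a scalar $\lambda$ for $\Iprod{\allones}{Y}=(2K-n)^2$, and a PSD multiplier for $Y\succeq 0$, standard SDP duality yields the following sufficient condition for $Y^\ast=\sigma^\ast(\sigma^\ast)^\top$ to be the \emph{unique} optimum: there exist $D^\ast$ and $\lambda^\ast$ such that
\[
S^\ast \;\triangleq\; D^\ast - A + \lambda^\ast \allones \;\succeq\; 0, \qquad S^\ast \sigma^\ast=0, \qquad \lambda_2(S^\ast) > 0.
\]
The equation $S^\ast\sigma^\ast=0$, together with $\mathbf{1}^\top\sigma^\ast=2K-n$, forces
\[
d_i^\ast \;=\; \sigma^\ast_i \bigl[(A\sigma^\ast)_i - \lambda^\ast(2K-n)\bigr], \qquad i\in[n],
\]
so the construction reduces to choosing a single scalar $\lambda^\ast$ under which the PSD and spectral-gap conditions hold with probability $1-n^{-\Omega(1)}$.

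For the PSD condition I would decompose $S^\ast$ around its conditional-mean diagonal: set $\bar d_i \triangleq \Expect[d_i^\ast\mid \sigma^\ast]$ and $\bar S \triangleq \diag{\bar d}-\Expect[A\mid\sigma^\ast]+\lambda^\ast\allones$, and write $S^\ast = \bar S + \diag{d^\ast-\bar d} - (A-\Expect[A\mid\sigma^\ast])$. The deterministic block-constant matrix $\bar S$ can be diagonalized by hand; its restriction to $(\sigma^\ast)^\perp$ has smallest eigenvalue of order $\log n$ whenever $a>b$. The centered random matrix $A-\Expect[A\mid\sigma^\ast]$ has operator norm $O(\sqrt{\log n})$ by standard spectral concentration for sparse inhomogeneous \ER-type matrices, and $\diag{d^\ast-\bar d}$ is controlled entrywise through concentration of binomial sums. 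Combining these ingredients via Weyl's inequality yields $x^\top S^\ast x \ge c\log n$ for every unit $x\perp \sigma^\ast$, provided the diagonal entries $d_i^\ast$ are uniformly bounded below by $c'\log n$.

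The critical step, carrying the sharp constant $\eta(\rho,a,b)$, is to show that $\min_i d_i^\ast\ge c'\log n$ with probability $1-n^{-\Omega(1)}$. Conditional on the labels, $\sigma^\ast_i(A\sigma^\ast)_i$ is distributed as $\Binom(K-1,p)-\Binom(n-K,q)$ for $i\in C_1^\ast$ and as $-[\Binom(K,p)-\Binom(n-K-1,q)]$ for $i\in C_2^\ast$, while the correction $-\sigma^\ast_i\lambda^\ast(2K-n)$ enters with opposite signs in the two clusters. An optimized Chernoff bound gives
\[
\prob{\min_{i\in C_j^\ast} d_i^\ast \le c'\log n} \;\le\; |C_j^\ast|\exp\bigl(-I_j(\lambda^\ast)\log n + o(\log n)\bigr),\qquad j=1,2,
\]
for explicit Legendre-transform rates $I_j$. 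The free parameter $\lambda^\ast$ is then chosen to equate $I_1(\lambda^\ast)=I_2(\lambda^\ast)$: the saddle-point equation is a scalar quadratic whose discriminant is precisely $\gamma=\sqrt{(\bar\rho-\rho)^2\tau^2+4\rho\bar\rho ab}$, and the common rate evaluates to $\eta(\rho,a,b)$ in \prettyref{eq:threshold}. The hypothesis $\eta>1$ then upgrades the union bound to $n^{1-\eta}=n^{-\Omega(1)}$, closing the proof.

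The principal obstacle is this last step: extracting the sharp rate rather than a loose constant forces a joint optimization over $\lambda^\ast$ and the Chernoff tilt, simultaneously for both cluster sides. In the balanced regime $\rho=1/2$ treated in \cite{HajekWuXuSDP14}, the choice $\lambda^\ast=0$ is forced by symmetry, only one tilt must be optimized, and the rate collapses to $(\sqrt a-\sqrt b)^2/2$; for $\rho<1/2$ the two cluster sides have different tail behavior and the balancing computation is the source of the intricate log-ratio in $\eta$. A secondary technical point is to upgrade the Chernoff conclusion from $d_i^\ast>0$ to $d_i^\ast\ge c'\log n$, which is needed to absorb the $O(\sqrt{\log n})$ spectral noise from the PSD step; this costs only a constant factor in the pre-exponent and does not perturb the rate.
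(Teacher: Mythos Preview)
Your dual-certificate setup, the choice $d_i^\ast=\sigma_i^\ast[(A\sigma^\ast)_i-\lambda^\ast(2K-n)]$, and the identification of $\lambda^\ast=\tau\log n/n$ as the rate-equalizing multiplier are all correct and match the paper. The gap is in the PSD verification.

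For $x\perp\sigma^\ast$ one has
\[
x^\top S^\ast x \;=\; x^\top D^\ast x \;+\; \Bigl(\lambda^\ast-\tfrac{p+q}{2}\Bigr)\,x^\top\allones x \;+\; p \;-\; x^\top(A-\Expect[A])x,
\]
and the crucial point is that $\lambda^\ast-\tfrac{p+q}{2}=(\tau-\tfrac{a+b}{2})\tfrac{\log n}{n}\le 0$. In the direction $\check x$ (the unit vector in $(\sigma^\ast)^\perp$ that maximizes $x^\top\allones x$) this term contributes $-2\rho\bar\rho(a+b-2\tau)\log n$, i.e.\ a \emph{negative} quantity of order $\log n$, not $\sqrt{\log n}$. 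Your Weyl decomposition $S^\ast=\bar S+\diag{d^\ast-\bar d}-(A-\Expect[A])$ therefore needs $\min_i(d_i^\ast-\bar d_i)\ge -(\tau-b)\log n+o(\log n)$, which for $i\in C_1^\ast$ is the event $d_i^\ast\ge \rho(a+b-2\tau)\log n$. That threshold is a \emph{fixed positive} multiple of $\log n$, and the Chernoff rate for it is strictly smaller than $\eta(\rho,a,b)$; it can drop below $1$ even when $\eta>1$ (e.g.\ $\rho=\tfrac12$, $a=10$, $b=1$ gives $\eta\approx 2.34$ but the rate you need is $\approx 0.90$). So the claim in your last paragraph that upgrading to $d_i^\ast\ge c'\log n$ ``does not perturb the rate'' is false for fixed $c'>0$, and taking $c'=o(1)$ is not enough to absorb the $\Theta(\log n)$ contribution of the $\allones$ term.

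The paper resolves this with a finer, two-scale analysis: it splits $(\sigma^\ast)^\perp=\mathrm{span}(\check x)\oplus E_2^\perp$, where $E_2=\mathrm{span}(\sigma^\ast,\check x)$. On $E_2^\perp$ one has $\allones x=0$, so only $\min_i d_i^\ast\ge\log n/\log\log n$ is needed, and this carries the sharp rate $\eta$. In the single direction $\check x$, the quantity $\check x^\top D^\ast\check x$ is a weighted \emph{average} of the $d_i^\ast$'s and hence concentrates at scale $\sqrt{\log n}$ (via Talagrand, not large deviations); its mean plus the negative $\allones$ term equals $(\tau-b)\log n>0$. The cross term $x^\top D^\ast\check x$ for $x\perp E_2$ is likewise $O(\sqrt{\log n})$ by a Lipschitz/Talagrand argument. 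This separation---large deviations for the worst coordinate, central-limit concentration for the averaged $\check x$ direction---is the missing ingredient in your plan.
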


The proof of \prettyref{thm:SBMSharp_unbalanced} is similar in outline to the
proof  given in  \cite{HajekWuXuSDP14}, but a considerable detour is needed to handle
the imbalance.   Notice that by definition, $\eta(\rho, a, b) = \eta(\bar{\rho}, a, b)$, and 
$\eta(1/2, a, b) =  \frac{1}{2} ( \sqrt{a}-\sqrt{b} )^2$. The threshold function $\eta(\rho, a, b)$ 
turns out to be the error exponent in  the following large deviation events. For vertex $i$, let $e(i, C^*_1)$
denotes the number of edges between vertex $i$ and vertices in $C^*_1$, and define $e(i, C^*_2)$ similarly. 
Then,
 \begin{align*}
 \prob{e(i, C^*_1) - e(i, C^*_2) \le \tau( \rho -\bar{\rho} ) \log n  }  =n^{-\eta(\rho, a, b) +o(1)}, \quad \forall i \in C^*_1, \\
  \prob{e(i, C^*_2) - e(i, C^*_1) \le \tau( \bar{\rho} -\rho ) \log n  }  =n^{-\eta(\rho, a, b) +o(1)}, \quad \forall i \in C^*_2.
 \end{align*}

Next we prove a converse for \prettyref{thm:SBMSharp_unbalanced} which shows that the
recovery threshold achieved by the \SDP relaxation is in fact optimal.

\begin{theorem} \label{thm:SBMSharpConverse_unbalanced}
If $
\eta(\rho,a,b)<1
$
and $\sigma^\ast$ is uniformly chosen over $\{ \sigma \in \{ \pm 1\}^n: \sigma^\top \mathbf{1} =2 K -n \}$,
then for any sequence of estimators $\widehat{Y}_n$, $\pprob{\widehat{Y}_n = Y^\ast } \to 0$.
\end{theorem}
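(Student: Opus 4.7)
The plan is to lower-bound the error probability of the Bayes-optimal \ML\ estimator of $\sigma^\ast$. Since $\sigma^\ast$ is uniform on $\{\sigma\in\{\pm1\}^n:\sigma^\top\ones=2K-n\}$ and the global sign is determined by the cluster-size constraint when $\rho<1/2$ (the case $\rho=1/2$ is handled analogously modulo sign), \ML\ minimizes $\prob{\widehat\sigma\neq\sigma^\ast}$, so $\prob{\widehat Y_n=Y^\ast}\leq\prob{\widehat\sigma_{\ML}=\sigma^\ast}$ for every estimator $\widehat Y_n$; it therefore suffices to show that \ML\ fails with probability $1-o(1)$ when $\eta(\rho,a,b)<1$. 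I would do this by producing, with high probability, a swap perturbation strictly dominating $\sigma^\ast$. For $i\in C^\ast_1$ and $j\in C^\ast_2$, let $\sigma^{(i,j)}$ flip $\sigma^\ast_i$ and $\sigma^\ast_j$; this preserves $\sigma^\top\ones=2K-n$, and a direct expansion gives
\begin{equation*}
\Iprod{A}{\sigma^{(i,j)}(\sigma^{(i,j)})^\top}-\Iprod{A}{\sigma^\ast(\sigma^\ast)^\top}=4\bigl\{[e(i,C^\ast_2)-e(i,C^\ast_1)]+[e(j,C^\ast_1)-e(j,C^\ast_2)]-2A_{ij}\bigr\},
\end{equation*}
so \ML\ fails whenever the bracketed quantity is strictly positive for some pair $(i,j)$.

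Define
\begin{align*}
F_i &\triangleq \{e(i,C^\ast_2)-e(i,C^\ast_1)\geq\tau(\bar\rho-\rho)\log n+2\},\quad i\in C^\ast_1,\\
G_j &\triangleq \{e(j,C^\ast_1)-e(j,C^\ast_2)\geq\tau(\rho-\bar\rho)\log n+2\},\quad j\in C^\ast_2,
\end{align*}
so that $F_i\cap G_j$ forces the bracketed swap gain to be at least $4-2A_{ij}\geq 2$; the $O(1)$ shift in the thresholds does not affect the exponential rate. The critical values $\pm\tau(\bar\rho-\rho)\log n$ are exactly those equalizing the two large-deviation exponents: since $e(i,C^\ast_1)\sim\Binom(K-1,a\log n/n)$ and $e(i,C^\ast_2)\sim\Binom(n-K,b\log n/n)$ are independent, a Chernoff tilt on the signed sum $e(i,C^\ast_2)-e(i,C^\ast_1)$ yields $\prob{F_i}=n^{-\eta(\rho,a,b)+o(1)}$; the first-order optimality condition in the tilt parameter reduces to a quadratic whose positive root produces $\gamma=\sqrt{(\bar\rho-\rho)^2\tau^2+4\rho\bar\rho ab}$, and the resulting exponent is exactly \eqref{eq:threshold}. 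By symmetry, $\prob{G_j}=n^{-\eta(\rho,a,b)+o(1)}$.

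Finally, let $N_1=|\{i\in C^\ast_1:F_i\}|$ and $N_2=|\{j\in C^\ast_2:G_j\}|$; then $\E{N_1}=K\cdot n^{-\eta+o(1)}=n^{1-\eta+o(1)}\to\infty$ and similarly $\E{N_2}\to\infty$ since $\eta<1$. For distinct $i,i'\in C^\ast_1$, the events $F_i$ and $F_{i'}$ depend on disjoint edges apart from the single edge $A_{ii'}$, so they are independent conditional on $A_{ii'}$ and $\prob{F_i\cap F_{i'}}=(1+o(1))\prob{F_i}\prob{F_{i'}}$; hence $\var(N_1)=o((\E{N_1})^2)$, and Chebyshev yields $N_1\geq 1$ w.h.p. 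The same holds for $N_2$, and a nearly identical covariance estimate (using that the only shared edges are the $K(n-K)$ cross-cluster edges, each contributing a correction of order $\log n/n$) shows that the joint event $\{N_1\geq 1,\,N_2\geq 1\}$ holds w.h.p., completing the argument. The main technical obstacle will be the Chernoff computation producing the precise exponent \eqref{eq:threshold}: the signed difference of two heterogeneous binomials leads to a saddle-point equation that must be solved in closed form, and reducing the optimizer to the compact expression involving $\gamma$ and $\tau=(a-b)/(\log a-\log b)$ requires careful algebra that exploits the fact that $\tau$ is the Bernoulli rate-function weight balancing the in-cluster and out-cluster edge probabilities.
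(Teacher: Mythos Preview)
Your approach is correct and reaches the same conclusion as the paper, but the mechanism for showing that a ``bad'' vertex exists in each cluster differs. The paper avoids second-moment calculations altogether by a \emph{removal trick}: it selects a small subset $T\subset C_1^\ast$ of size $\lfloor n/\log^2 n\rfloor$, splits $e(i,C_1^\ast)=e(i,T)+e(i,C_1^\ast\setminus T)$, and observes that the variables $\{e(i,C_1^\ast\setminus T)-e(i,C_2^\ast)\}_{i\in T}$ are genuinely independent; a union bound controls $\max_{i\in T}e(i,T)\le\log n/\log\log n$, and then the lower-tail estimate from \prettyref{lmm:general} (applied with the independence over $i\in T$) gives $\prob{F_1}\to1$ directly. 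Your route via the second moment is equally valid: conditioning on the single shared edge $A_{ii'}$ yields $\Cov(\indc{F_i},\indc{F_{i'}})=p(1-p)(p_0-p_1)^2\le p\,p_0^2$, so the covariance sum is $O(K^2\log n/n)\cdot\prob{F_i}^2=o((\E{N_1})^2)$, and Chebyshev finishes. The removal trick is slightly cleaner in that it sidesteps any covariance algebra; your argument is more direct but requires the extra computation.

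Two small points. First, you need the \emph{lower} bound $\prob{F_i}\ge n^{-\eta+o(1)}$ to make $\E{N_1}\to\infty$; a Chernoff bound alone gives only the upper bound. The paper obtains the matching lower bound in \prettyref{lmm:general} by the inclusion $\{X-R\le\alpha\log n\}\supset\{X\le\tfrac{\alpha'+\alpha}{2}\log n\}\cap\{R\ge\tfrac{\alpha'-\alpha}{2}\log n\}$ and optimizing over $\alpha'$, which you should invoke (or replace by a tilted-measure argument). Second, for the joint event $\{N_1\ge1,\,N_2\ge1\}$ you do not need any cross-cluster covariance estimate: once $\prob{N_1\ge1}\to1$ and $\prob{N_2\ge1}\to1$ separately, the union bound on the complements gives the joint statement immediately.
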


In the special case with two equal-sized clusters, we have $K=n/2$ and
$\eta(1/2, a, b) =  \frac{1}{2} ( \sqrt{a}-\sqrt{b} )^2$. The corresponding threshold $(\sqrt{a}-\sqrt{b})^2 > 2$ has been established in \cite{Abbe14,Mossel14}, and the achievability by SDP has been shown in \cite{HajekWuXuSDP14} and independently
 by \cite{Bandeira15} later. 

A recent work \cite{YunProutiere14} also studies the exact recovery
problem in the unbalanced case and provides the sufficient (but not tight) recovery condition for a polynomial-time two-step procedure based on the spectral method.

\subsection{Unknown cluster size}  \label{subset:unkown_size}
\prettyref{thm:SBMSharp} shows that if one knows the relative cluster size $\rho$, the SDP relaxation \prettyref{eq:SBMconvex_unbalanced} achieves the size-dependent optimal threshold $\eta(\rho,a,b)>1$. For
fixed $a$ and $b$, $\eta( \rho, a, b)$ is minimized at $\rho=\frac{1}{2}$ (see \prettyref{app:eta} for a proof). This suggests that for two communities the equal-sized case is the most difficult to cluster. Indeed, the next result proves that if there is no constraint on the cluster size, then the optimal recovery threshold coincides with that in the balanced case, \ie, $(\sqrt{a}-\sqrt{b})^2 > 2$, which can be achieved by a penalized SDP.
\begin{theorem}
	Let
	\begin{align}
\widehat{Y}_{\SDP}' = \argmax_{Y}  & \; \langle A , Y \rangle - \lambda^* \langle \allones, Y \rangle \nonumber  \\
\text{s.t.	} & \; Y \succeq 0  \nonumber \\
 & \;  Y_{ii} =1, \quad i \in [n]. \label{eq:SDP2}
\end{align}
where $\lambda^* = \tau \frac{\log n}{n}$ and $\tau = \frac{a-b}{\log a-\log b}$. If $(\sqrt{a}-\sqrt{b})^2 > 2$, then $\min_{Y^* \in \calY_n'} \pprob{\widehat{Y}_{\SDP}'=Y^\ast} \ge 1- n^{-\Omega(1) }$ as $n \to \infty$,
where $\calY_n' \triangleq \{\sigma \sigma^\top: \sigma \in \{\pm 1\}^n \}$.
	\label{thm:SDP2}
\end{theorem}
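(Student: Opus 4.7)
The plan is to adapt the dual-certificate recipe of \cite{HajekWuXuSDP14} to the Lagrangian-penalized SDP in \prettyref{eq:SDP2}. Since the only constraints are $Y \succeq 0$ and $Y_{ii} = 1$, convex duality yields a clean sufficient condition: $\hat Y_{\SDP}' = Y^*$ uniquely whenever there exists a diagonal matrix $D^* = \diag{d_i^*}$ such that
$S^* \triangleq D^* - A + \lambda^* \allones$
satisfies $S^* \succeq 0$, $S^* \sigma^* = 0$, and $\lambda_2(S^*) > 0$. The equation $S^* \sigma^* = 0$ forces the choice $d_i^* = \sigma_i^* (A\sigma^*)_i - \lambda^* \sigma_i^*(\mathbf{1}^\top \sigma^*)$; equivalently, for $i \in C_\ell^*$ (with $K = |C_1^*|$), $d_i^* = e(i,C_\ell^*) - e(i,C_{3-\ell}^*) - \sigma_i^*\lambda^*(2K-n)$.

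The PSD check is done by bounding the quadratic form on the orthogonal complement of $\sigma^*$. Using $\mathbb{E} A = \frac{p-q}{2}\sigma^*(\sigma^*)^\top + \frac{p+q}{2}\allones - pI$, for any unit $x \perp \sigma^*$ one obtains
\begin{align*}
x^\top S^* x \;\geq\; \min_i d_i^* + p - \|A - \mathbb{E} A\| + \left(\lambda^* - \tfrac{p+q}{2}\right)(\mathbf{1}^\top x)^2.
\end{align*}
Because $\tau = \frac{a-b}{\log a - \log b} \leq \frac{a+b}{2}$ (log-mean $\leq$ arithmetic mean), the coefficient of $(\mathbf{1}^\top x)^2$ is non-positive, so one pessimistically uses $(\mathbf{1}^\top x)^2 \leq n$. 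This reduces dual feasibility and strict $\lambda_2(S^*) > 0$ to (i) the spectral concentration $\|A - \mathbb{E} A\| = O(\sqrt{\log n})$ w.h.p.\ (invoked verbatim from \cite{HajekWuXuSDP14}), and (ii) the pointwise lower bound $\min_i d_i^* > \bigl(\frac{a+b}{2} - \tau\bigr)\log n + O(\sqrt{\log n})$.

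The technical heart is the large-deviation step (ii). Let $\alpha = K/n$ and $\bar\alpha = 1 - \alpha$. Unwinding $d_i^*$, for $i \in C_1^*$ the target becomes $e(i,C_1^*) - e(i,C_2^*) > s_1(\alpha)\log n$ with $s_1(\alpha) = \frac{a+b}{2} - 2\bar\alpha \tau$, a difference of independent binomials $\Bin(K-1,p)$ and $\Bin(n-K,q)$; for $i \in C_2^*$ the analogous threshold is $s_2(\alpha) = \frac{a+b}{2} - 2\alpha\tau$. Applying Chernoff's bound and the Poisson approximation, the tail decays as $n^{-J_\ell(\alpha) + o(1)}$ with
\begin{align*}
J_\ell(\alpha) \;=\; \alpha a + \bar\alpha b \;-\; \sqrt{s_\ell(\alpha)^2 + 4\alpha\bar\alpha ab} \;+\; s_\ell(\alpha) \log\frac{2\lambda_\ell}{\sqrt{s_\ell(\alpha)^2 + 4\alpha\bar\alpha ab} - s_\ell(\alpha)},
\end{align*}
where $(\lambda_1,\lambda_2) = (\bar\alpha b, \alpha a)$. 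A union bound over $n$ vertices yields the full claim provided $\min_\ell \min_\alpha J_\ell(\alpha) > 1$.

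The main obstacle, and the crux of the proof, is to verify that $J_\ell(\alpha) \geq \tfrac{1}{2}(\sqrt a - \sqrt b)^2$ uniformly in $\alpha \in [0,1]$, with equality at $\alpha = 1/2$. This is precisely the claim that the specific penalty $\lambda^* = \tau \log n /n$ is calibrated so that the hardest cluster size is the balanced one; any other choice of penalty would shift the minimizer in $\alpha$ and weaken the threshold. The argument is to compute $\frac{d}{d\alpha} J_\ell(\alpha)$, use the defining identity $\tau (\log a - \log b) = a - b$ to show the derivative vanishes at $\alpha = 1/2$, and check via a second-order calculation (or convexity of the Chernoff rate in $(\alpha,\bar\alpha)$) that $\alpha = 1/2$ is a global minimum. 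At $\alpha = 1/2$ the rate reduces to the familiar $\tfrac{1}{2}(\sqrt a - \sqrt b)^2$ computed in \cite{HajekWuXuSDP14}, so the hypothesis $(\sqrt a - \sqrt b)^2 > 2$ yields $J_\ell(\alpha) > 1$ uniformly, closing the union bound.
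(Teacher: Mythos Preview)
Your dual-certificate setup and the identification of $d_i^*$ are correct and match the paper. The fatal gap is in the PSD step: you absorb the $(\lambda^* - \tfrac{p+q}{2})(\mathbf{1}^\top x)^2$ term by the worst-case bound $(\mathbf{1}^\top x)^2 \le n$, which costs you $\Theta(\log n)$. That shift is \emph{not} recoverable from $\min_i d_i^*$. Concretely, your requirement $\min_i d_i^* > (\tfrac{a+b}{2}-\tau)\log n$ translates, at $\alpha=1/2$, to the tail event $e(i,C_1^*)-e(i,C_2^*)\le s_1(1/2)\log n$ with $s_1(1/2)=\tfrac{a+b}{2}-\tau>0$, not to the event with threshold $0$. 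The Chernoff exponent $g(\tfrac12,\tfrac12,a,b,s)$ is strictly decreasing in $s$ on $[0,\tfrac{a-b}{2}]$, so $J_1(1/2)=g(\tfrac12,\tfrac12,a,b,\tfrac{a+b}{2}-\tau)<g(\tfrac12,\tfrac12,a,b,0)=\tfrac12(\sqrt a-\sqrt b)^2$. For instance, with $a=9,\ b=1$ one has $(\sqrt a-\sqrt b)^2=4>2$ but $J_1(1/2)\approx 0.81<1$, so your union bound over $n$ vertices blows up. The assertion ``at $\alpha=1/2$ the rate reduces to the familiar $\tfrac12(\sqrt a-\sqrt b)^2$'' is therefore false, and the proposed minimization over $\alpha$ cannot close.

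The paper's proof avoids this loss by \emph{not} separating $x^\top D^* x$ from the $\mathbf J$ term. It writes $t_1(x)=x^\top D^* x+(\lambda^*-\tfrac{p+q}{2})x^\top\mathbf J x$, decomposes $x=\beta\check x+\sqrt{1-\beta^2}\,x'$ with $\check x$ the unit vector in $(\sigma^*)^\perp$ maximizing $x^\top\mathbf J x$, and observes that along $\check x$ the diagonal contribution is an \emph{average} of the $d_i^*$'s (hence concentrated near its mean $(\tau-b)\log n>0$ by Talagrand), not their minimum. Only the component $x'\perp\mathrm{span}(\sigma^*,\check x)$ sees $\min_i d_i^*$, and there the $\mathbf J$ term vanishes. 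This coupling is what lets the paper demand only $\min_i d_i^*\ge \log n/\log\log n$, whose exponent is exactly $\eta(\rho,a,b)\ge \eta(\tfrac12,a,b)=\tfrac12(\sqrt a-\sqrt b)^2$ (the inequality over $\rho$ being a separate convexity lemma). To repair your argument you would need to reinstate this two-direction analysis rather than bounding $x^\top D^* x$ and $x^\top\mathbf J x$ independently.
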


\begin{remark}
\prettyref{thm:SDP2} holds for all cluster sizes $K,$ including the extreme case where the entire network forms a single cluster ($K=0$), in which case the SDP \prettyref{eq:SDP2} outputs $Y^*=\allones$ with high probability. The downside is that the penalization parameter $\lambda^*$ depends on the parameters $a$ and $b$. Nevertheless, there exists a fully data-driven choice of $\lambda^*$ based on the degree distribution of the network, so that \prettyref{thm:SDP2} continues to hold whenever the cluster sizes scale linearly,
\ie, $K/n \to \rho \in (0,\frac{1}{2}]$; the price to pay for adaptivity is that the probability of error vanishes polylogarithmically instead of polynomially as $n\diverge$.
	See \prettyref{app:thmSDP2} for details.
	\label{rmk:SDP2}
\end{remark}

\section{SBM with multiple equal-sized clusters}
\label{sec:rary}

The cluster structure under the stochastic block model
with $r$ clusters of equal size $K$ can be represented by $r$ binary vectors $\xi_1,
\ldots, \xi_r \in \{0,1\}^n$, where $\xi_k$ is  the indicator function
of the cluster $k$, such that $\xi_k(i)=1$ if vertex $i$ is in cluster $k$
and $\xi_k(i)=0$ otherwise.
 Let $\xi^\ast_1, \ldots, \xi^\ast_r$ correspond to the true clusters and let $A$ denote the adjacency matrix.
Then the  maximum likelihood  (\ML) estimator of $\xi^\ast$  for the case $a>b$ can be simply stated as
\begin{align}
\max_{\xi}  & \; \sum_{i,j} A_{ij}  \sum_{k=1}^r \xi_k(i) \xi_k(j) \nonumber  \\
\text{s.t.	}  & \; \xi_k \in \{0,1 \}^n, \quad k \in [r] \nonumber \\
 & \; \xi_k^\top \mathbf{1} =K, \quad k \in [r] \nonumber \\
 & \; \xi_k^\top \xi_{k'}= 0, \quad k \neq k',
 \label{eq:SBMML1}
\end{align}
which maximizes the number of in-cluster edges. 
Alternatively, one can encode the cluster structure from the vertices' perspective. 
Each vertex $i$ is associated with a vector $x_i$ which is allowed
to be one of the $r$ vectors $v_1, v_2, \ldots, v_r  \in \reals^{r-1} $ defined as follows:
Take an equilateral simplex $\Sigma_r$ in $\reals^{r-1}$ with vertices
$v_1, v_2, \ldots, v_r$  such that $\sum_{k=1}^r v_k =0$ and $\| v_k \|=1$
for $1 \le k \le r$.  Notice that
$\Iprod{v_k}{v_{k'}} = -1/(r-1)$ for $k \neq k'$.
Therefore,  the  \ML estimator  given in \prettyref{eq:SBMML1} can be recast as
\begin{align}
\max_{x}  & \; \sum_{i,j} A_{ij}  \Iprod{x_i}{ x_j} \nonumber  \\
\text{s.t.	}  & \; x_i \in \{v_1, v_2, \ldots, v_r\}, \quad i \in [n] \nonumber \\
& \; \sum_i x_i =0 .
 \label{eq:SBMML2}
\end{align}
When $r=2$, the above program includes the NP-hard minimum graph bisection problem as a special case. 
Let us consider its convex relaxation similar to the
\SDP relaxation studied by Goemans and Williamson \cite{Goemans95} for MAX CUT and by Frieze and Jerrum
\cite{FriezeJerrum97} for MAX $k$-CUT and MAX BISECTION.
To obtain an SDP relaxation, we replace $x_i$ by $y_i$ which is allowed to be
any unit vector in $\reals^n$ under the constraint $\Iprod{y_i}{y_j} \ge -1/(r-1)$
and $\sum_i y_i =0$. Defining $Y \in \reals^{n \times n}$ such that $Y_{ij}= \Iprod{y_i}{y_j}$, we obtain an \SDP:
\begin{align}
\max_{Y}  & \; \Iprod{A}{Y} \nonumber  \\
\text{s.t.	} & \; Y \succeq 0, \nonumber  \\
& \;  Y_{ii} =1, \quad i \in [n] \nonumber \\
& \;  Y_{ij} \ge -1/(r-1), \quad i, j \in [n]  \nonumber \\
& \; Y \mathbf{1} = 0. 
\label{eq:SBMML3}
\end{align}
We remark that we could as well have worked with the constraint $\Iprod{Y}{\allones} =0$,
which, for $Y \succeq 0$, is equivalent to the last constraint in \prettyref{eq:SBMML3}.  
Letting $Z= \frac{r-1}{r} Y + \frac{1}{r} \allones$, we can also equivalently rewrite
\prettyref{eq:SBMML3} as
\begin{align}
\widehat{Z}_{\SDP} =
\argmax_Z & \; \Iprod{A}{Z} \nonumber  \\
\text{s.t.	} & \; Z \succeq 0, \nonumber  \\
& \;  Z_{ii} =1, \quad i \in [n] \nonumber \\
& \;  Z_{ij} \ge 0, \quad i, j \in [n]  \nonumber \\
& \; Z \mathbf{1} =K \mathbf{1}. 
\label{eq:SDP_RZ}
\end{align}

The only model parameter needed by the estimator \prettyref{eq:SDP_RZ} is the cluster size $K$.
Let $Z^*=\sum_{k=1}^r \xi_k^*(\xi_k^*)^\top$ correspond to the true clusters and
define
\begin{align*}
\calZ_{n,r} = \sth{\sum_{k=1}^r \xi_k \xi_k^\top: \xi_k \in \{ 0, 1 \}^n , \; \xi_k^\top \mathbf{1} =K, \;  \xi_k^\top \xi_{k'}= 0, \; k \neq k' }.
\end{align*}
The sufficient condition
for the success of \SDP in \prettyref{eq:SDP_RZ} is given as follows.
\begin{theorem}\label{thm:SBMSharp}
If $\sqrt{a} - \sqrt{b} > \sqrt{r}$, then $\min_{Z^* \in {\cal Z}_{n,r}} \pprob{\widehat{Z}_{\SDP}=Z^\ast} \geq 1 - n^{-\Omega(1)}$ as $n \to \infty$.
\end{theorem}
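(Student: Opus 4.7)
The plan is to prove \prettyref{thm:SBMSharp} by the dual-certificate method developed for the balanced two-cluster case in \cite{HajekWuXuSDP14}, extended to $r$ clusters and incorporating the element-wise constraint $Z_{ij}\ge 0$. Writing the Lagrangian of \prettyref{eq:SDP_RZ} with multipliers $D$ (diagonal, for $Z_{ii}=1$), $B\ge 0$ (symmetric, for $Z_{ij}\ge 0$), and $\lambda\in\reals^n$ (for $Z\mathbf{1}=K\mathbf{1}$), strong duality reduces the problem to exhibiting such $(D,B,\lambda)$ with $B_{ij}=0$ on in-cluster pairs and
\[
S\triangleq D-A-B+\tfrac{1}{2}\bigl(\lambda\mathbf{1}^\top+\mathbf{1}\lambda^\top\bigr)
\]
positive semidefinite with $\ker(S)=\mathrm{span}\{\xi_1^*,\ldots,\xi_r^*\}$. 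The KKT identities $S\xi_k^*=0$ translate vertex-wise into
\[
D_{ii}=e(i,C^*_{k(i)})-\tfrac12\bigl(K\lambda_i+\lambda_{C^*_{k(i)}}\bigr),\qquad \sum_{j\in C^*_k}B_{ij}=\tfrac12\bigl(K\lambda_i+\lambda_{C^*_k}\bigr)-e(i,C^*_k)\quad(k\ne k(i)),
\]
where $k(i)$ is the cluster of $i$, $e(i,C)$ counts edges from $i$ to $C$, and $\lambda_C=\sum_{i\in C}\lambda_i$.

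For the construction I would start with the scalar baseline $\lambda_i\equiv \tau\log n/n$ with $\tau=(a-b)/(\log a-\log b)$, the logarithmic mean that separates expected in- and out-cluster degrees. This immediately yields $D_{ii}=e(i,C^*_{k(i)})-K\lambda^*$ and prescribed row sums $K\lambda^*-e(i,C^*_k)$, and one can choose the off-diagonal $B_{ij}$ symmetric and non-negative meeting these sums by spreading each row's slack uniformly over the non-edges from $i$ to $C^*_k$ and symmetrizing in $(i,j)$. Since the scalar $\lambda^*$ may fail to certify non-negativity at atypical vertices with unusually high cross-cluster degree, I would upgrade to a vertex-dependent correction $\lambda_i\to\lambda^*+\epsilon_i$ absorbing the rare bad vertices, so that the prescribed row sums of $B$ are non-negative for every $(i,k)$ while $D_{ii}$ reduces, up to lower-order terms, to the robust degree gap $e(i,C^*_{k(i)})-\max_{k\ne k(i)}e(i,C^*_k)$.

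For the PSD check I would decompose $S=\bar S+\Delta$ with $\bar S=\expect{S}$. A direct block calculation gives $\bar S=(p-\lambda^*)(KI-Z^*)$, which is PSD with kernel exactly $\mathrm{span}\{\xi_k^*\}$ and smallest positive eigenvalue $K(p-\lambda^*)=\Theta(\log n)$ (since $b<\tau<a$). For any unit $y\in\mathrm{span}\{\xi_k^*\}^\perp$, using $\mathbf{1}^\top y=0$ and $y^\top Z^*y=0$,
\[
y^\top Sy\ \ge\ \min_i D_{ii}\ +\ p\ -\ \|A-\expect A\|\ -\ \|B-\bar B\|.
\]
The Feige-Ofek/Bandeira concentration yields $\|A-\expect A\|=O(\sqrt{\log n})$ in the logarithmic-degree regime, and the same cross-cluster degree fluctuations control $\|B-\bar B\|$. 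The decisive quantity is $\min_i D_{ii}$, which by the construction is $\min_i[e(i,C^*_{k(i)})-\max_{k\ne k(i)}e(i,C^*_k)]-O(\sqrt{\log n})$. A one-sided Chernoff estimate of the form $\prob{\Binom(K-1,p)\le\Binom(K,q)}\le n^{-(\sqrt a-\sqrt b)^2/r+o(1)}$, combined with a union bound over $n$ vertices and $r-1$ competing clusters, certifies $\min_i D_{ii}=\Omega(\sqrt{\log n})$ with overall failure probability $n^{1-(\sqrt a-\sqrt b)^2/r+o(1)}=n^{-\Omega(1)}$ exactly under the hypothesis $\sqrt a-\sqrt b>\sqrt r$.

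The main obstacle I anticipate is the joint control of the spectrum of $S-\bar S$ and the non-negativity of $B$: because $B$ depends on $A$ through the random row sums prescribed by $S\xi_k^*=0$, both the spectral bound in the PSD step and the sign verification must be driven by the same cross-cluster concentration events, and the adaptive piece of $\lambda_i$ is introduced precisely to make both compatible at the sharp threshold. A secondary bookkeeping complication relative to $r=2$ is that the kernel is $r$-dimensional, so the PSD estimate must be carried out on $\mathrm{span}\{\xi_k^*\}^\perp$ via the projector $P=I-Z^*/K$, but this does not change the essence of the argument.
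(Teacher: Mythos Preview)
Your overall dual-certificate architecture matches the paper's: exhibit $(D^*,B^*,\lambda^*)$ with $S^*=D^*-B^*-A+\lambda^*\mathbf 1^\top+\mathbf 1(\lambda^*)^\top\succeq 0$, kernel equal to $\mathrm{span}\{\xi_k^*\}$, and $B^*\ge 0$ supported off the diagonal blocks. Your identification of $\min_i\bigl(e(i,C^*_{k(i)})-\max_{k\ne k(i)}e(i,C^*_k)\bigr)$ as the decisive quantity, controlled via the Chernoff bound with exponent $(\sqrt a-\sqrt b)^2/r$ and a union bound, is also exactly the paper's driver (its \prettyref{lmm:fri}).

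The gap is in your construction of $B^*$ and the PSD step that relies on it. You propose to ``spread each row's slack uniformly over the non-edges and symmetrize,'' and then absorb $B^*$ into the PSD estimate via a bound on $\|B-\bar B\|$. But a $B^*$ built this way is a genuine $n\times n$ random matrix whose entries depend on the adjacency matrix in a nontrivial, non-additive way; you give no argument that its spectral fluctuations are $O(\sqrt{\log n})$, and none of the standard matrix-concentration tools you invoke for $A-\Expect A$ apply to it directly. The paper avoids this problem entirely, and this is the key step you are missing: it chooses $B^*$ with an \emph{additive} block structure, namely $B^*_{C_k\times C_{k'}}(i,j)=y^*_{kk'}(i)+z^*_{kk'}(j)$, so that $x^\top B^* x=0$ \emph{exactly} for every $x\perp\mathrm{span}\{\xi_k^*\}$. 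With that choice the PSD check collapses to $x^\top D^* x + p - \|A-\Expect A\|$, and no spectral control of $B^*$ is needed at all. Correspondingly, the paper's $\lambda^*$ is not the scalar logarithmic mean $\tau\log n/n$ with ad-hoc per-vertex corrections, but rather $\lambda_i^*=\frac{1}{K}(r_i-\alpha_k)$ with $r_i=\max_{k'\ne k(i)}e(i,C_{k'})$; this particular choice is what makes $d_i^*=s_i-r_i+O(\sqrt{\log n})$ drop out cleanly, while simultaneously forcing the additive $B^*$ to be strictly positive on all cross-cluster entries (verified via concentration of $\frac{1}{K}\sum_{i\in C_k}r_i$ and of $e(C_k,C_{k'})$). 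Your ``adaptive $\epsilon_i$'' is gesturing in this direction but does not land on the structural property $x^\top B^* x\equiv 0$ that makes the argument close at the sharp constant.
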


The following result establishes the optimality of the \SDP procedure.
\begin{theorem}\label{thm:converse}
If $\sqrt{a} - \sqrt{b} < \sqrt{r}$ and the clusters are uniformly chosen at random
among all $r$-equal-sized partitions of $[n]$,
then for any sequence of estimators $\widehat{Z}_n$,
$ \pprob{\widehat{Z}_n=Z^\ast} \to 0$ as $n \to \infty$.
\end{theorem}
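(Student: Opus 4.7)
The plan starts with a standard Bayesian reduction: since $Z^*$ is uniformly distributed over $\calZ_{n,r}$, the maximum-a-posteriori estimator minimizes the Bayes error and coincides with the maximum likelihood (\ML) estimator. Hence for any estimator $\widehat Z_n$, $\prob{\widehat Z_n = Z^*} \le \prob{\widehat Z_{\ML} = Z^*}$, and it suffices to show the latter tends to zero. To show that \ML fails, I would exhibit, with probability tending to one, a competing partition $Z' \neq Z^*$ with $L(Z') > L(Z^*)$. The natural candidate is a \emph{pairwise swap}: for $i \in C^*_k$, $j \in C^*_\ell$ with $k \neq \ell$, let $Z^{(ij)}$ be the partition obtained by exchanging $i$ and $j$. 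Since the swap preserves all cluster sizes (hence all counts of same-cluster pairs), a direct bookkeeping argument gives
\begin{align*}
\log L(Z^{(ij)}) - \log L(Z^*) = \log \frac{p(1-q)}{q(1-p)} \cdot S_{ij},
\end{align*}
where
\begin{align*}
S_{ij} = \bigl[e(i, C^*_\ell \setminus \{j\}) - e(i, C^*_k \setminus \{i\})\bigr] + \bigl[e(j, C^*_k \setminus \{i\}) - e(j, C^*_\ell \setminus \{j\})\bigr].
\end{align*}
Since $a>b$, whenever $S_{ij} \ge 1$ the swap $Z^{(ij)}$ strictly beats $Z^*$, so $\widehat Z_{\ML} \neq Z^*$.

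Next I would estimate $\prob{S_{ij} \ge 1}$ from below. The four edge counts appearing in $S_{ij}$ involve mutually disjoint edge sets (the pair $\{i,j\}$ itself never enters), so they are jointly independent binomials, and in the logarithmic sparsity regime with $K=n/r$ each is well approximated by a Poisson with mean $(a/r)\log n$ or $(b/r)\log n$. Writing $S_{ij} = (X_1+X_2) - (Y_1+Y_2)$ with $X_1+X_2$ approximately $\Pois(2b\log n/r)$ and $Y_1+Y_2$ approximately $\Pois(2a\log n/r)$, a sharp Chernoff computation for the difference of two independent Poissons yields
\begin{align*}
\prob{S_{ij} \ge 1} = n^{-2(\sqrt{a}-\sqrt{b})^2/r \,+\, o(1)}.
\end{align*}
Let $N$ count the number of unordered inter-cluster pairs $\{i,j\}$ with $S_{ij}\ge 1$. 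Summing over the $\Theta(n^2)$ inter-cluster pairs gives $\Expect[N] = n^{\,2 - 2(\sqrt{a}-\sqrt{b})^2/r + o(1)}$, which diverges under the hypothesis $(\sqrt{a}-\sqrt{b})^2 < r$.

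The last step is to upgrade $\Expect[N]\diverge$ to $\prob{N\ge 1}\to 1$ via the Paley--Zygmund inequality, for which one needs $\var(N) = o(\Expect[N]^2)$. The covariance sum decomposes according to how many vertices two pairs $\{i_1,j_1\}$ and $\{i_2,j_2\}$ share: vertex-disjoint pairs are fully independent because their edge supports are disjoint; pairs sharing one vertex are only weakly dependent through a small set of common edges; and the number of overlapping configurations is smaller by a factor of $n$, so the overlap contribution is $o(\Expect[N]^2)$ whenever $(\sqrt a-\sqrt b)^2<r$. This second-moment/variance control is the main technical obstacle: it closely parallels the converse argument in \cite{HajekWuXuSDP14} for two clusters, but requires tracking the cluster indices $(k,\ell)$ when enumerating overlap patterns and verifying that each pattern's contribution is dominated by the fully-independent main term. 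Once $\prob{N\ge 1}\to 1$, \ML fails with high probability and the theorem follows.
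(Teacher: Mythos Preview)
Your overall plan is sound, but the Paley--Zygmund step contains a real gap. You assert that pairs $\{i,j_1\}$ and $\{i,j_2\}$ sharing one vertex are ``only weakly dependent through a small set of common edges.'' This is false: if $j_1,j_2\in C^*_\ell$ (or even if they lie in different clusters), both $S_{ij_1}$ and $S_{ij_2}$ contain the \emph{entire} block $e(i,C^*_{k(i)})$ (and, when $j_1,j_2$ are co-clustered, also $e(i,C^*_\ell)$), so they share $\Theta(K)$ edges rather than $O(1)$. Writing $S_{ij}\approx U_i+V_j$ with $U_i,V_j$ i.i.d., the joint large-deviation rate for $\{U_i+V_{j_1}\ge 1,\,U_i+V_{j_2}\ge 1\}$ is not $2\cdot 2(\sqrt a-\sqrt b)^2/r$; a Chernoff computation gives exponent $\tfrac{3}{r}(a^{1/3}-b^{1/3})^2(a^{1/3}+b^{1/3})$, and it is not obvious that the resulting $O(n^3)\cdot n^{-\text{rate}}$ contribution is $o(\Expect[N]^2)$ throughout the full range $(\sqrt a-\sqrt b)^2<r$. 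So the variance bound you sketch does not go through as stated.

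The fix is to abandon the pair count $N$ and argue at the vertex level: let $F_1=\{\exists\, i\in C^*_1:\ e(i,C^*_2)-e(i,C^*_1)\ge 2\}$ and $F_2=\{\exists\, j\in C^*_2:\ e(j,C^*_1)-e(j,C^*_2)\ge 1\}$; on $F_1\cap F_2$ any such $(i,j)$ yields $S_{ij}\ge 1$. Each $F_m$ is a one-parameter event whose indicator sum has essentially independent summands (distinct $i$'s share only the single edge $A_{ii'}$), so the second moment works cleanly and gives $\prob{F_m}\to 1$ whenever $(\sqrt a-\sqrt b)^2<r$. This is precisely what the paper does implicitly --- but via a much shorter route: it reveals all clusters except $C^*_1,C^*_2$ by a genie, reducing to the binary symmetric SBM on $n'=2n/r$ vertices with effective parameters $a'=2a/r$, $b'=2b/r$, and then invokes the known two-cluster converse $(\sqrt{a'}-\sqrt{b'})^2<2 \Leftrightarrow (\sqrt a-\sqrt b)^2<r$. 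The genie reduction is a two-line proof; your direct approach is correct once repaired, but is considerably longer.
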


The optimal recovery threshold $\sqrt{a}-\sqrt{b}= \sqrt{r}$ is also obtained by two parallel independent works \cite{YunProutiere14,AbbeSandon15}
via a polynomial-time two-step procedure,  consisting of a partial recovery algorithm followed by a cleanup stage. The previous work \cite{ChenXu14}
studies the stochastic block model in a much more general setting with $r$  clusters of equal size $K$ plus outlier vertices,
 where $r, K$ and the edge probabilities $p,q$ may scale with $n$ arbitrarily as long as $r K \le n$; it is shown  that an SDP achieves exact recovery with high probability provided that
\begin{equation}
K^2(p-q)^2 \geq C \left( Kp(1-q) \log n + q(1-q)  n \right)
	\label{eq:CX14}
\end{equation}
 for some universal constant $C$.
In the special setting where the network consists of a fixed number of clusters without outliers and $p= a \log n/n> q= b \log n/n$,
the sufficient condition \prettyref{eq:CX14} simplifies to $\sqrt{a}-\sqrt{b} \geq C' \sqrt{r}$ for some absolute constant $C'$, which is
off by a constant factor compared to the sharp  sufficient condition $\sqrt{a}-\sqrt{b}>\sqrt{r}$ given by \prettyref{thm:SBMSharp}.

It is straightforward to extend the current proof of \prettyref{thm:converse} to the regime where  $r= \gamma \log^{s} n$, $p= \frac{a \log^{s+1} n}{n}>q= \frac{b \log^{s+1} n}{n}$
for any fixed $\gamma,a,b >0$ and $s \in [0,1)$, showing that SDP achieves the optimal recovery threshold $\sqrt{a}-\sqrt{b}= \sqrt{\gamma}$. 
Indeed, the preprint \cite{ABBK} shows similar optimality results of SDP for $r=o(\log n)$ number of equal-sized clusters. 
Conversely, it has been recently proved in \cite{HajekWuXu_one_sdp15} that 
SDP relaxations cease to be optimal for logarithmically many communities in the sense that 
SDP is constantwise suboptimal when $r \ge C \log n$ for a large enough constant $C$
and orderwise suboptimal when $r=\omega(\log n)$. 

\section{Binary censored block model}
\label{sec:cbm}
Under the binary censored block model, with possibly unequal cluster sizes,
 the cluster structure  can be represented by a vector
$\sigma \in \{\pm 1\}^n$ such that $\sigma_i=1$ if vertex $i$ is
in the first cluster and $\sigma_i=-1$ if vertex $i$ is in the second cluster.
Let $\sigma^\ast \in \{\pm 1\}^n$ correspond to the true clusters.
Let $A$ denote the weighted adjacency matrix such that $A_{ij}=0$ if $i, j$ are not connected
by an edge; $A_{ij}=1$ if $i,j$ are connected by an edge with label $+1$; $A_{ij}=-1$ if $i,j$ are
connected by an edge with label $-1$.
Then the \ML estimator of $\sigma^\ast$ can be simply stated as
\begin{align}
\max_{\sigma}  & \; \sum_{i,j} A_{ij} \sigma_i \sigma_j \nonumber  \\
\text{s.t.	}  & \; \sigma_i \in \{\pm1\}, \quad i \in [n],  \label{eq:SBMML1_labeled}
\end{align}
which maximizes the number of  in-cluster $+1$ edges minus that of  in-cluster $-1$ edges,
or equivalently, maximizes the number of  cross-cluster $-1$ edges minus that of  cross-cluster $+1$ edges.
The NP-hard max-cut problem can be reduced to \prettyref{eq:SBMML1_labeled} by simply labeling all the edges
in the input graph as $-1$ edges, and thus  \prettyref{eq:SBMML1_labeled}
is computationally intractable in the worst case.
Instead, we consider the SDP studied in \cite{AbbeBAndeiraSigner14} obtained by convex relaxation.
Let $Y=\sigma \sigma^\top$. Then $Y_{ii}=1$ is equivalent to $\sigma_i = \pm 1$.
Therefore, \prettyref{eq:SBMML1} can be recast as
\begin{align}
\max_{Y}  & \; \Iprod{A}{Y} \nonumber  \\
\text{s.t.	} & \; \text{rank}(Y)=1 \nonumber  \\
& \;  Y_{ii} =1, \quad i \in [n]. \label{eq:SBMML2_labeled}
\end{align}
Replacing the rank-one constraint by positive semidefiniteness,
we obtain the following convex relaxation of \prettyref{eq:SBMML2_labeled},
which is an SDP:
\begin{align}
\widehat{Y}_{\SDP} = \argmax_{Y}  & \; \langle A, Y \rangle \nonumber  \\
\text{s.t.	} & \; Y \succeq 0  \nonumber \\
 & \;  Y_{ii} =1, \quad i \in [n]. \label{eq:SBMconvex_labeled}
\end{align}
We remark that \prettyref{eq:SBMconvex_labeled} does not rely on any knowledge of the model parameters.
Let $Y^\ast=\sigma^\ast (\sigma^\ast)^\top$ and $\calY_n \triangleq \{\sigma \sigma^\top\colon \sigma \in \{\pm 1\}^n\}$.
The following result establishes the success condition of the \SDP procedure in the scaling regime $p=a\log n /n$ for a fixed constant $a$:
\begin{theorem}\label{thm:SBMSharp_labeled}
If $ a ( \sqrt{1-\epsilon}-\sqrt{\epsilon} )^2> 1$, then $\min_{Y^* \in \calY_n} \pprob{\widehat{Y}_{\SDP}=Y^\ast} \geq 1 - n^{-\Omega(1)}$ as $n \to \infty$.
\end{theorem}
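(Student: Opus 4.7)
The plan is to prove the success of the \SDP by constructing a dual certificate, following the strategy used in \cite{HajekWuXuSDP14}. Define the diagonal matrix $D^*$ by $D^*_{ii} = \sum_j A_{ij}\sigma^*_i \sigma^*_j$, and set $S = D^* - A$. A direct calculation gives $S\sigma^* = 0$. The standard KKT argument then reduces the theorem to the event that $S$ is positive semidefinite with null space exactly spanned by $\sigma^*$: in that case, for any feasible $Y$,
\[
\Iprod{A}{Y} = \Iprod{D^*}{Y} - \Iprod{S}{Y} = \Tr(D^*) - \Iprod{S}{Y} \le \Tr(D^*),
\]
with equality forcing $\Iprod{S}{Y} = 0$, so that $SY = 0$ (both are PSD); this, together with $Y_{ii}=1$ and $Y \succeq 0$, forces $Y = \sigma^*(\sigma^*)^\top$.

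To verify $\lambda_2(S) > 0$ I would show $x^\top S x > 0$ for every nonzero $x$ with $x \perp \sigma^*$. Write $A = \Expect[A\mid \sigma^*] + (A - \Expect[A\mid\sigma^*])$ and note that $\Expect[A\mid\sigma^*] = \mu(\sigma^*(\sigma^*)^\top - \identity)$ where $\mu = p(1-2\epsilon)$. For $x \perp \sigma^*$ one has $x^\top \Expect[A\mid\sigma^*] x = -\mu\|x\|^2$, hence
\[
x^\top S x \;\ge\; \pth{\min_i D^*_{ii} + \mu - \|A - \Expect[A\mid\sigma^*]\|}\|x\|^2.
\]
The proof therefore splits into two essentially independent random-matrix estimates.

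First, a spectral norm bound $\|A - \Expect[A\mid\sigma^*]\| = O(\sqrt{\log n})$ with probability $1 - n^{-\Omega(1)}$. The entries of $A - \Expect[A\mid\sigma^*]$ are independent, centered, bounded by $1$, and of variance $p - \mu^2 = O(p) = O(\log n /n)$, so this is the standard sparse random matrix estimate (Feige--Ofek, or the Bandeira--van Handel matrix Bernstein inequality). Second, a lower bound $\min_i D^*_{ii} \gtrsim \sqrt{\log n}$ with probability $1 - n^{-\Omega(1)}$. Each $D^*_{ii}$ is a sum of $n-1$ \iid $\{-1,0,+1\}$-valued variables with $\Prob\{+1\} = p(1-\epsilon)$ and $\Prob\{-1\} = p\epsilon$. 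A Chernoff computation yields
\[
\Prob\sth{D^*_{ii} \le C\sqrt{\log n}} \le \exp\pth{a \log n \cdot \inf_{\lambda>0}\qth{(1-\epsilon)e^{-\lambda} + \epsilon e^\lambda - 1} + o(\log n)},
\]
where the infimum is attained at $\lambda^* = \tfrac{1}{2}\log\tfrac{1-\epsilon}{\epsilon}$ with value $2\sqrt{\epsilon(1-\epsilon)} - 1 = -(\sqrt{1-\epsilon}-\sqrt{\epsilon})^2$. Under the hypothesis $a(\sqrt{1-\epsilon}-\sqrt{\epsilon})^2 > 1$, a union bound over $i \in [n]$ gives $\Prob\sth{\min_i D^*_{ii} > C\sqrt{\log n}} \ge 1 - n^{1 - a(\sqrt{1-\epsilon}-\sqrt{\epsilon})^2 + o(1)} = 1 - n^{-\Omega(1)}$.

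The main technical obstacle is matching the sharp information-theoretic threshold $a(\sqrt{1-\epsilon}-\sqrt{\epsilon})^2 = 1$: the spectral norm estimate and the KKT reduction are routine, so all the sharpness lives in the large-deviation analysis for $\min_i D^*_{ii}$. The bias $\mu \|x\|^2$ and the spectral deviation $\|A - \Expect[A\mid\sigma^*]\|\|x\|^2$ are both of order at most $\sqrt{\log n}$, which is absorbed into the mean $(1-2\epsilon)a\log n$ of $D^*_{ii}$ without shifting the Chernoff exponent; this is why the sufficient condition for the dual witness coincides, to leading order, with the converse threshold, and why the constant $(\sqrt{1-\epsilon}-\sqrt{\epsilon})^2$ appearing naturally in the optimal Chernoff tilt is the correct one.
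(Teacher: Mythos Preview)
Your proposal is correct and follows essentially the same route as the paper: the same dual witness $S^*=D^*-A$ with $d_i^*=\sum_j A_{ij}\sigma_i^*\sigma_j^*$, the same KKT uniqueness argument, the same decomposition $x^\top S^* x \ge \min_i d_i^* + p(1-2\epsilon) - \|A-\Expect[A]\|$ for $x\perp\sigma^*$, the same $O(\sqrt{\log n})$ spectral bound, and the same Chernoff exponent $a(\sqrt{1-\epsilon}-\sqrt{\epsilon})^2$. The only cosmetic difference is that the paper bounds $\min_i d_i^*$ below by $\log n/\log\log n$ rather than $C\sqrt{\log n}$, but both thresholds are $o(\log n)$ and dominate the spectral term, so the argument is the same.
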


Next we prove a converse for \prettyref{thm:SBMSharp_labeled} which shows that the recovery threshold achieved by the \SDP relaxation is in fact optimal.
\begin{theorem} \label{thm:PlantedSharpConverse_labeled}
If $ a ( \sqrt{1-\epsilon}-\sqrt{\epsilon} )^2 < 1$ and $\sigma^\ast$  is uniformly chosen from $ \{\pm 1\}^n$,
then for any sequence of estimators $\widehat{Y}_n$, $\pprob{\widehat{Y}_n = Y^\ast } \to 0$ as $n \to \infty$.
\end{theorem}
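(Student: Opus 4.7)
The plan is to show that the optimal MAP estimator fails with high probability. Since $\sigma^\ast$ is uniform on $\{\pm 1\}^n$ and the likelihood $L(\sigma) := \sum_{i,j} A_{ij}\sigma_i\sigma_j$ satisfies $L(\sigma)=L(-\sigma)$, the MAP rule for $Y^\ast$ picks any $Y=\sigma\sigma^\top$ that maximizes $L$, coinciding with the ML program \prettyref{eq:SBMML1_labeled}; since MAP maximizes $\pprob{\widehat Y=Y^\ast}$ over all estimators, it is enough to exhibit some $\sigma$ with $\sigma\sigma^\top\neq Y^\ast$ and $L(\sigma)>L(\sigma^\ast)$ with probability $1-o(1)$. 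I would use the single-flip competitor: let $\sigma^{(i)}$ be obtained from $\sigma^\ast$ by flipping the $i$-th coordinate, and set
\begin{equation*}
S_i \;:=\; \sum_{j\neq i} A_{ij}\sigma_i^\ast\sigma_j^\ast .
\end{equation*}
A direct expansion gives $L(\sigma^{(i)})-L(\sigma^\ast)=-4S_i$, so on the event $\{S_i<0\}$ the MAP estimator returns some $Y\neq Y^\ast$ (a single flip cannot produce $-\sigma^\ast$). Writing $N:=|\{i:S_i<0\}|$, the task reduces to proving $\pprob{N\geq 1}\to 1$.

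By symmetry I condition on $\sigma^\ast=\mathbf{1}$, under which the summands of $S_i$ are i.i.d.\ taking values $+1,-1,0$ with probabilities $p(1-\epsilon), p\epsilon, 1-p$, so $S_i\eqdistr X_+ - X_-$ with independent $X_+\sim\Binom(n-1,p(1-\epsilon))$ and $X_-\sim\Binom(n-1,p\epsilon)$. Setting $\mu_+:=a(1-\epsilon)\log n$ and $\mu_-:=a\epsilon\log n$, the Chernoff bound at the tilt $e^{2t}=\mu_+/\mu_-$ yields
\begin{equation*}
\pprob{S_1<0} \;\le\; \exp\!\left(-(\sqrt{\mu_+}-\sqrt{\mu_-})^2(1-o(1))\right) \;=\; n^{-a(\sqrt{1-\epsilon}-\sqrt{\epsilon})^2 + o(1)},
\end{equation*}
and a matching lower bound follows from a Cram\'er tilting / local CLT argument for the binomial pair (alternatively, Poisson approximation, valid since $(n-1)p=\Theta(\log n)$). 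Writing $p_\star$ for this probability, the hypothesis $a(\sqrt{1-\epsilon}-\sqrt{\epsilon})^2<1$ gives $\Expect[N]=np_\star=n^{1-a(\sqrt{1-\epsilon}-\sqrt{\epsilon})^2+o(1)}\to\infty$.

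To upgrade this first-moment bound to $\pprob{N\geq 1}\to 1$ I would use a second-moment computation. For $i\neq j$, write $S_i=A_{ij}\sigma_i^\ast\sigma_j^\ast+S_i^{(j)}$ with $S_i^{(j)}$ a sum over edges not incident to $j$; the triple $(S_i^{(j)},S_j^{(i)},A_{ij})$ is mutually independent, and since $|A_{ij}|\le 1$ a.s., conditioning on $A_{ij}$ and reapplying the single-variable tail estimate of the previous paragraph yields $\pprob{S_i<0,\,S_j<0}\le(1+o(1))\,p_\star^2$. Hence $\var(N)\le\Expect[N]+o(\Expect[N]^2)$ and Chebyshev gives $\pprob{N=0}\le\var(N)/\Expect[N]^2\to 0$, concluding the proof. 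The main obstacle is the sharpness of the large-deviation exponent for $\pprob{S_1<0}$: the Chernoff upper bound is a one-line calculation, but matching it to leading order in the exponent requires a tilted change of measure on the pair $(X_+,X_-)$ followed by a local limit estimate to produce the correct polynomial prefactor.
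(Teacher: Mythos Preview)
Your approach is correct in outline and reaches the same conclusion, but it differs from the paper's route in how dependence among the events $\{S_i<0\}$ is handled. The paper does \emph{not} use a second-moment argument. Instead, it selects a small set $T$ of $\lfloor n/\log^2 n\rfloor$ vertices and decomposes $s_i-r_i$ (your $S_i$) for $i\in T$ into a part using only edges to $T^c$ plus $e(i,T)$. The first parts are genuinely independent across $i\in T$, so one can bound $\prob{\min_{i\in T}(\cdot)>-\log n/\log\log n}$ by a product and invoke \prettyref{lmm:binomialconcentrationlowerbound_labeled} directly; the residual $\max_{i\in T}e(i,T)$ is handled by a Chernoff bound and union bound. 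This avoids the need for any refined local estimate on the tail: only the exponent $n^{-a(\sqrt{1-\epsilon}-\sqrt{\epsilon})^2+o(1)}$ is required. By contrast, your Chebyshev step needs $\prob{S_i<0,S_j<0}\le(1+o(1))p_\star^2$, which in turn requires $\prob{S_i^{(j)}\le 0}=O(p_\star)$ with an \emph{absolute} constant, not merely $n^{o(1)}$; this is indeed obtainable from the tilted-measure/local-limit analysis you mention, but it is extra work the paper's decoupling trick sidesteps entirely.

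Two small points to tidy up. First, the representation $S_i\eqdistr X_+-X_-$ with \emph{independent} binomials is not literally correct: the counts of $+1$'s and $-1$'s come from a multinomial and are negatively correlated. This does not affect the exponent, and Lemmas~\ref{lmm:binomialconcentration_labeled}--\ref{lmm:binomialconcentrationlowerbound_labeled} work directly with the three-valued summands, but you should not assert independence. Second, your argument breaks when $\epsilon=0$: then $S_i\ge 0$ almost surely, so $\{S_i<0\}$ is empty and $N\equiv 0$. The paper handles this case separately by observing that when $a<1$ the number of isolated vertices diverges, and for each isolated vertex the posterior on $\sigma_i^\ast$ is uniform, forcing $\pprob{\widehat Y=Y^\ast}\to 0$. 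You should add this case distinction.
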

\prettyref{thm:PlantedSharpConverse_labeled} still holds if the cluster sizes are proportional to $n$ and known to
the estimators, \ie,  the prior distribution of $\sigma^\ast$ is uniform over $\{ \sigma \in \{\pm 1\}^n: \sigma^\top \mathbf{1} = 2K-n\}$ for $K=\lfloor \rho n \rfloor $
with $\rho \in (0,1/2]$.  

Denote by $a^*(\epsilon)$ the optimal recovery threshold, namely, the infimum of $a>0$ such that exact cluster recovery is possible with probability converging to one as $n\diverge$. Our results show that for all $\epsilon \in [0,1/2]$, the optimal recovery threshold is given by
\begin{equation}
a^*(\epsilon) = \frac{1}{( \sqrt{1-\epsilon} - \sqrt{\epsilon} )^2}, 
	\label{eq:astar}
\end{equation}
and can be achieved by the SDP relaxations.
The optimal recovery threshold is insensitive to $\rho$, 
which is in contrast to what we have seen for the binary stochastic block model.

Exact cluster recovery in the censored block model
 is previously studied in \cite{AbbeBAndeiraSigner14} and  it is shown that if $\epsilon \to 1/2$,
 the maximum likelihood estimator achieves the optimal recovery threshold $a (1-2\epsilon)^2>2 + o(1)$,
 while an SDP relaxation of the ML estimator succeeds if  $a (1-2\epsilon)^2> 4+o(1)$.
The optimal recovery threshold for any fixed $\epsilon \in (0,1/2)$ and whether
it can be achieved in polynomial-time were previously unknown. \prettyref{thm:SBMSharp_labeled} and \prettyref{thm:PlantedSharpConverse_labeled} together
show that the \SDP relaxation achieves the optimal recovery threshold  $ a ( \sqrt{1-\epsilon} - \sqrt{\epsilon} )^2>1$ for any fixed constant $\epsilon \in [0, 1/2]$.
Notice that $( \sqrt{1-\epsilon} - \sqrt{\epsilon} )^2 = \frac{1}{2}(1-2 \epsilon)^2 + o((1-2\epsilon)^2)$ when $\epsilon \to 1/2$.
For the censored block model with the background graph being random regular graph, it is further shown in \cite{HWX15-ITW} that the
SDP relaxations also achieve the optimal exact recovery threshold.

The above exact recovery threshold in the regime $p=a \log n /n$ shall be contrasted with the positively correlated
recovery threshold in the sparse regime $p=a/n$ for constant $a$. In this sparse regime, there exists at least a constant fraction of vertices with
no neighbors and exactly recovering the clusters is hopeless; instead, the goal is to find an estimator $\widehat{\sigma}$
positively correlated with $\sigma^\ast$ up to a global flip of signs. It was conjectured in \cite{Heimlicher12} that
the positively correlated recovery is possible if and only if $a(1-2\epsilon)^2>1$; the converse part is
shown in \cite{LelargeMassoulieXu15} and recently it is proved in \cite{SKLZ15} that spectral algorithms achieve the sharp threshold in polynomial-time.

\section{An SDP for general cluster structure}
\label{sec:general_case}

In this section we consider SDPs for the general case of multiple clusters
and outliers.  We assume there are $r$ clusters with sizes $K_1, \ldots , K_r,$ and
$n-(K_1 + \cdots + K_r)$ outlier vertices.  Vertices in the same cluster are connected  with probability $p$, while other pairs of vertices are connected
between them with probability $q.$   We consider the asymptotic regime
$p=\frac{a\log n}{n},$ $q=\frac{b\log n}{n}$  and   $K_k = \rho_k n$ as $n\rightarrow \infty$
for $a, b, \rho_0, \ldots  , \rho_r$ fixed, with $\rho_1 \geq \ldots \geq \rho_r>0.$
Let $\rho_{\min}=\rho_r.$
   We derive sufficient conditions for exact recovery by SDPs.
While the conditions are not the tightest possible
for specific cases, we would like to identify an algorithm that
recovers the cluster matrix exactly without knowing the details of the
cluster structure.   As in \prettyref{sec:rary}, the true cluster matrix can be expressed as
$Z^*=\sum_{k=1}^r \xi_k^*(\xi_k^*)^\top,$ where $\xi_k^*$ is the indicator
function of the $k^{\Th}$ cluster.
Denote by $\calZ_n$ the collection of all such cluster matrices.

Consider the SDP
\begin{eqnarray}
&\widehat{Z}_{\SDP}  = \arg\max~~ \langle A, Z \rangle   \label{eq.SDP_RZ} \\
&~~~~~~~~~~~~~~~~~~~~~~~~~~~~~~~~\text{s.t. } \begin{array}[t]{l}
Z \succeq 0 \\
Z_{ii} \leq 1 \\
Z_{ij} \geq 0 \\
\langle \mathbf{I}, Z \rangle = K_1 + \cdots + K_r \\
\langle \mathbf{J}, Z \rangle = K_1^2 + \cdots + K_r^2 . 
\end{array}  \nonumber
\end{eqnarray}
Implementing the SDP \eqref{eq.SDP_RZ} requires no knowledge of the density parameters $a$ and $b$, the number of clusters $r,$  or the sizes of the individual clusters; but it does require the exact knowledge of the sum as well as the sum of squares of the cluster sizes, which, in practical applications, may be  unrealistic to assume. Therefore, similar to \prettyref{eq:SDP2}, we also consider
the following penalized SDP, obtained by removing the constraints for those two quantities while augmenting the objective function:
\begin{eqnarray}
&\widehat{Z}_{\SDP}  = \arg\max~~ \langle A, Z \rangle   -\eta^* \langle \mathbf{I}, Z \rangle  - \lambda^* \langle \mathbf{J}, Z \rangle   \label{eq.SDP_RZaug} \\
&~~~\text{s.t. } \begin{array}[t]{l}
Z \succeq 0 \\
Z_{ii} \leq 1 \\
Z_{ij} \geq 0. \\
\end{array}  \nonumber
\end{eqnarray}
Here the penalization parameters $\eta^*$ and $\lambda^*$ must be specified.

Clearly the above two SDPs are different and need not have the same solutions; nevertheless, they are similar enough so that in the
following theorem we state a sufficient condition for either of the SDPs to exactly recover $Z^*$ with high probability.
Define
\begin{equation}
	I(\mu,d)\triangleq \mu - d\log \frac{e\mu}{d}.
	\label{eq:I}
\end{equation}     For $\mu>0$ fixed, $I(\mu,d)$ is a strictly
convex, nonnegative function in $d$ which is zero if and only if $d=\mu.$
\begin{theorem}  \label{thm:SDP_gen}
Suppose there exists $\psi_1>0$ and $\psi_2> 0$ with $b+ \psi_1 + \psi_2 < a$ such that
\begin{eqnarray}
I(a, b+\psi_1+\psi_2 ) > 1/\rho_r   \label{eq:con1}  \\
I(b, b+\psi_1) > 1/\rho_r    \label{eq:con2}   \\
I(b, b+ \psi_2) > 1/\rho_{r-1}    \label{eq:con3}   \\
I(b, b+\psi_1+\psi_2) > 1/\rho_r   \label{eq:con4}   
\end{eqnarray}
(with the understanding that \eqref{eq:con2} and \eqref{eq:con3} can be dropped if there is only one cluster (i.e. $r=1$) and
\eqref{eq:con4} can be dropped unless there is only one cluster plus outlier vertices).
Let $\eta^* = C\sqrt{\log n}$ for a sufficiently large constant $C$
and let $\lambda^*=\frac{(b+\psi_1+\psi_2)\log n}{n}.$   If $\widehat{Z}_{\SDP}$ is produced by
either SDP \eqref{eq.SDP_RZ} or SDP \eqref{eq.SDP_RZaug},  then
$\min_{Z^* \in {\cal Z}_n} \Prob\{ \widehat{Z}_{\SDP} = Z^* \} \geq 1- n^{-\Omega(1)}.$
\end{theorem}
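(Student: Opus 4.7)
The plan is to verify the KKT-style optimality conditions for the SDP by constructing an explicit dual certificate. Dualizing \eqref{eq.SDP_RZ} shows that $Z^*$ is optimal provided there exist $S \succeq 0$, a non-negative diagonal matrix $D$, a non-negative symmetric matrix $B$ with zero diagonal, and scalars $\eta,\lambda$ such that $S = -A + \eta\identity + \lambda\allones + D - B$, together with the complementary slackness relations $SZ^*=0$, $D_{ii}(1-Z^*_{ii})=0$, and $B \circ Z^*=0$; uniqueness of $Z^*$ further requires $\ker(S) = \mathrm{span}(\xi_1^*,\ldots,\xi_r^*)$. The key point is that a \emph{single} certificate will validate both SDPs: for \eqref{eq.SDP_RZ} the multipliers $\eta,\lambda$ are free to be chosen, while for \eqref{eq.SDP_RZaug} they are prescribed and must match our choice $\eta=\eta^*=C\sqrt{\log n}$ and $\lambda=\lambda^*=(b+\psi_1+\psi_2)\log n/n$.

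With $\eta^*,\lambda^*$ fixed, the kernel requirement $S\xi_k^*=0$ determines $D$ and the row sums of $B$. For $i\in C_k^*$ one obtains
\begin{align*}
D_{ii}=e(i,C_k^*)-\eta^*-\lambda^* K_k,
\end{align*}
where $e(i,C_k^*)$ is the number of edges from $i$ into cluster $k$. For $i\notin C_k^*$ only the row sum $\sum_{j\in C_k^*}B_{ij}=\lambda^* K_k-e(i,C_k^*)$ is pinned down, leaving freedom to distribute mass across non-edges inside $C_k^*$. I would realize $B_{ij}$ for cross-pairs as a symmetrized convex combination of two one-sided terms, where the half associated with $\psi_1$ absorbs fluctuations driven by $e(i,C_k^*)$ and the half associated with $\psi_2$ absorbs those driven by $e(j,C_\ell^*)$. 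Non-negativity of $D$ on cluster vertices reduces to the lower-tail estimate $e(i,C_k^*)\ge \rho_k(b+\psi_1+\psi_2)\log n$; a Chernoff bound gives probability $n^{-\rho_k I(a,b+\psi_1+\psi_2)+o(1)}$, and a union bound over the smallest cluster yields exactly \eqref{eq:con1}. Non-negativity of $B$ reduces to upper-tail estimates on $e(i,C_k^*)$ for $i\notin C_k^*$: the cluster-to-cluster contributions produce \eqref{eq:con2} and \eqref{eq:con3} (the thresholds $1/\rho_r$ and $1/\rho_{r-1}$ arising from union bounds over rows in the two smallest clusters, through the $\psi_1$ and $\psi_2$ halves respectively), and the cluster-to-outlier contributions produce \eqref{eq:con4}, which is irrelevant unless outliers are present.

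The hard part will be verifying $S\succeq 0$ with $\ker(S)=\mathrm{span}(\xi_1^*,\ldots,\xi_r^*)$. I would decompose $S=\bar S+\Delta$ into a conditional-mean part and a fluctuation part. By construction $\bar S$ vanishes on $\mathrm{span}(\xi_k^*)$; on the orthogonal complement (including an outlier block where $\bar S$ reduces to $\eta^*\identity+\lambda^*\allones$-type terms) its eigenvalues are of order $\log n$. The fluctuation $\Delta$ inherits its spectral norm from $\|A-\Expect[A\mid Z^*]\|=O(\sqrt{\log n})$, via a Feige--Ofek type bound for sparse inhomogeneous random graphs after trimming atypically high-degree vertices, and from analogous bounds on the random pieces of $B$. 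The ridge $\eta^*\identity$, with $C$ chosen large enough, then swamps these $O(\sqrt{\log n})$ fluctuations and yields strict positivity on the complement of $\mathrm{span}(\xi_k^*)$. The main obstacles are (i) constructing $B$ so that its row sums are exact and not merely approximate, while keeping it entrywise non-negative under \eqref{eq:con2}--\eqref{eq:con4}, and (ii) carrying out the spectral analysis of $\bar S$ in the presence of heterogeneous cluster sizes and outliers, where the symmetry-based arguments available in the balanced case no longer apply. Combining these steps, all KKT conditions and the kernel requirement hold on an event of probability $1-n^{-\Omega(1)}$, uniformly over $Z^*\in\calZ_n$, which gives the claim for both SDPs.
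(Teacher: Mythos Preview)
Your overall strategy---dual certificate, the formula $D_{ii}=e(i,C_k^*)-\eta^*-\lambda^*K_k$, the row-sum constraint $\sum_{j\in C_k^*}B_{ij}=\lambda^*K_k-e(i,C_k^*)$, and the identification of \eqref{eq:con1}--\eqref{eq:con4} with Chernoff bounds on $e(i,C_k^*)$---matches the paper exactly. The gap is in the construction of $B^*$ and the ensuing spectral step, which you correctly flag as the hard part but do not resolve.

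The paper does not leave the distribution of mass in $B^*$ vague: it takes, for $k\neq k'$ in $[r]$,
\[
B^*_{C_k\times C_{k'}}(i,j)=\lambda^*-\frac{e(i,C_{k'})}{K_{k'}}-\frac{e(j,C_k)}{K_k}+\frac{e(C_k,C_{k'})}{K_kK_{k'}},
\]
with analogous one-sided formulas when one index is an outlier. The point of this specific choice (and it is essentially forced) is that every term is constant in $i$ or constant in $j$, so that $x^\top B^*x=0$ identically for every $x\perp\mathrm{span}(\xi_1^*,\ldots,\xi_r^*)$. This kills the spectral contribution of $B^*$ entirely, and your obstacles (i) and (ii) evaporate: there is no ``random piece of $B$'' whose spectral norm needs bounding, and no mean-fluctuation decomposition is required. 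What remains for $x\perp E_r$ is simply
\[
x^\top S^*x \;\ge\; x^\top D^*x+(\lambda^*-q)\Big(\sum_{i\in C_0}x_i\Big)^2+p\sum_{\text{inliers}}x_i^2+q\sum_{\text{outliers}}x_i^2+\eta^*-\|A-\Expect[A]\|,
\]
which is strictly positive once $\eta^*\ge\|A-\Expect[A]\|$ and $\lambda^*\ge q$. Note in particular that on the outlier block the lower bound is only $q=\Theta(\log n/n)$, not $\Theta(\log n)$ as you suggest; the ridge $\eta^*$ is what absorbs the fluctuation, not a $\log n$ gap in $\bar S$. Your ``symmetrized convex combination'' description does not obviously produce this orthogonality, and without it you would genuinely need the spectral bound on $B-\Expect[B]$ that you allude to but do not supply.
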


We examine two simpler sufficient conditions for recovery,  assuming
we have enough information to implement one of the two SDPs, and we also have a lower bound
$\rho_{\min},$ on the $\rho_k$'s,  but we don't  know how many clusters there are nor whether
there are outlier vertices.    The conditions of Theorem \ref{thm:SDP_gen} are most stringent when
there are two clusters of the smallest possible size $\rho_{\min},$ and in that case we get the tightest result from the
theorem by selecting $\psi_1=\psi_2=\psi,$ yielding the following corollary:
\begin{corollary} \label{cor21}
Let $\psi$ be the solution to $I(a,b+2\psi)=I(b,b+\psi).$   (It satisfies $b  < b+2\psi < a.$)
If $I(b,b+\psi) > 1/\rho_{\min},$  then
$\min_{Z^* \in {\cal Z}_n} \Prob\{ \widehat{Z}_{\SDP} = Z^* \} \geq 1- n^{-\Omega(1)}.$
\end{corollary}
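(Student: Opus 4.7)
The plan is to derive the corollary as a direct specialization of \prettyref{thm:SDP_gen} with the symmetric choice $\psi_1 = \psi_2 = \psi$. The motivation is that the hypotheses of the theorem are most stringent in the worst-case configuration of two clusters both of the minimum size $\rho_{\min}$, in which case conditions \eqref{eq:con2} and \eqref{eq:con3} both have right-hand side $1/\rho_{\min}$ and are symmetric in $\psi_1,\psi_2$; symmetry then makes $\psi_1=\psi_2$ natural, and balancing the two binding constraints \eqref{eq:con1} and \eqref{eq:con2} against each other leads to defining $\psi$ by $I(a,b+2\psi)=I(b,b+\psi)$.

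First, I would establish that there exists a unique $\psi\in(0,(a-b)/2)$ solving $I(a,b+2\psi)=I(b,b+\psi)$, and that this $\psi$ automatically lies in the regime $b<b+2\psi<a$. Since $I(\mu,d)$ is strictly convex in $d$ with unique zero at $d=\mu$, the map $\psi\mapsto I(a,b+2\psi)$ strictly decreases from $I(a,b)>0$ at $\psi=0$ to $0$ at $\psi=(a-b)/2$, whereas $\psi\mapsto I(b,b+\psi)$ strictly increases from $0$ at $\psi=0$. The difference is strictly monotone decreasing and changes sign exactly once on $(0,(a-b)/2)$, giving existence, uniqueness, and the claimed range.

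Next, I would verify the four hypotheses of \prettyref{thm:SDP_gen} with $\psi_1=\psi_2=\psi$. Condition \eqref{eq:con1}, $I(a,b+2\psi)>1/\rho_r$, is by construction equivalent to $I(b,b+\psi)>1/\rho_{\min}$, which holds by assumption since $\rho_r=\rho_{\min}$. Condition \eqref{eq:con2} is precisely the corollary's hypothesis. Condition \eqref{eq:con3}, $I(b,b+\psi)>1/\rho_{r-1}$, follows from the hypothesis together with $\rho_{r-1}\ge\rho_r=\rho_{\min}$. Condition \eqref{eq:con4}, $I(b,b+2\psi)>1/\rho_r$, follows from strict monotonicity of $I(b,\cdot)$ on $(b,\infty)$, giving $I(b,b+2\psi)>I(b,b+\psi)>1/\rho_{\min}$. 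Invoking \prettyref{thm:SDP_gen} for either SDP \eqref{eq.SDP_RZ} or \eqref{eq.SDP_RZaug} then yields the conclusion.

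There is no substantial obstacle in this argument; it is a mechanical verification relying on the convexity/monotonicity of $I(\mu,\cdot)$ recorded immediately after \eqref{eq:I}. The only conceptual content is the observation that symmetrizing $\psi_1=\psi_2$ is optimal for the worst configuration of two minimum-sized clusters, and that the defining equation $I(a,b+2\psi)=I(b,b+\psi)$ precisely equalizes the two binding constraints so as to obtain the tightest threshold attainable through \prettyref{thm:SDP_gen} when only $\rho_{\min}$ is known.
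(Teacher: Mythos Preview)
Your proposal is correct and follows precisely the paper's approach: the paper states (just before the corollary) that the conditions of \prettyref{thm:SDP_gen} are most stringent for two clusters of size $\rho_{\min}$ and that the tightest result is obtained by setting $\psi_1=\psi_2=\psi$, which is exactly what you do. Your write-up is more thorough than the paper's, as you explicitly verify existence and uniqueness of $\psi$ and check each of \eqref{eq:con1}--\eqref{eq:con4}, but the underlying argument is identical.
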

There is no simple expression for $\psi$ in Corollary \ref{cor21}.   If instead we consider
the equation  $I(a,b+2\psi)=I(b,b+2\psi),$ we have the smaller but explicit solution $\psi = \frac{\tau -b}{2},$  
where $\tau=\frac{a-b}{\log(a/b)}.$
Using this $\psi$ in
the test $I(b,b+\psi) > 1/\rho_{\min},$ we obtain the following weaker but more explicit recovery condition, which, nevertheless, is within a \emph{factor of eight} of the necessary condition (see \prettyref{rmk:factor8} below):
\begin{corollary}  \label{cor22}
If $I\left(b, \frac{\tau+b}{2}\right) > 1 / \rho_{\min}$ then
$\min_{Z^* \in {\cal Z}_n} \Prob\{ \widehat{Z}_{SDP} = Z^* \} \geq 1- n^{-\Omega(1)}.$
(If SDP \eqref{eq.SDP_RZaug} is used, it is assumed that $\eta^*$ and $\lambda^*$ are selected as in
Theorem \ref{thm:SDP_gen}, namely, $\eta^*=C\sqrt{\log n}$ and $\lambda^* =  \frac{ \tau \log n}{n},$
where $\tau=\frac{a-b}{\log(a/b)}.$ )
\end{corollary}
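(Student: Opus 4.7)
The plan is to reduce Corollary \ref{cor22} to Corollary \ref{cor21} (and hence to Theorem \ref{thm:SDP_gen}) by verifying the hypothesis of the theorem with the explicit choice $\psi_1=\psi_2=\psi := (\tau-b)/2$. The hypothesis $I(b,(\tau+b)/2) > 1/\rho_{\min}$ is precisely $I(b,b+\psi)>1/\rho_{\min}$ under this substitution, so I only need to show all four inequalities \eqref{eq:con1}--\eqref{eq:con4} hold.

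First I would confirm that $\psi$ is admissible, i.e. $\psi>0$ and $b+2\psi<a$. Since $a>b>0$, the logarithmic mean $\tau=(a-b)/\log(a/b)$ satisfies $b<\tau<a$, giving $\psi>0$ and $b+2\psi=\tau<a$, as needed. Second, I would record two monotonicity facts about the rate function $I(\mu,d)=\mu-d\log(e\mu/d)$: differentiating in $d$ gives $\partial_d I(\mu,d)=\log(d/\mu)$, so $d\mapsto I(\mu,d)$ is strictly decreasing on $(0,\mu)$ and strictly increasing on $(\mu,\infty)$, with minimum value $0$ at $d=\mu$. Third, and central to the argument, I would observe that the equation $I(a,d)=I(b,d)$ rearranges to $a-b=d\log(a/b)$, whose unique positive solution is $d=\tau$; in other words, $\tau$ is exactly the value where the two rate functions $I(a,\cdot)$ and $I(b,\cdot)$ cross. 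This is why $b+2\psi=\tau$ was chosen.

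Given these preliminaries, the four conditions fall out by monotonicity. Since $\rho_r,\rho_{r-1}\geq\rho_{\min}$, it suffices to upper-bound everything on the right by $1/\rho_{\min}$ and lower-bound the left-hand sides by $I(b,b+\psi)$. Conditions \eqref{eq:con2} and \eqref{eq:con3} are immediate because they are literally $I(b,b+\psi)>1/\rho_r$ and $>1/\rho_{r-1}$. For \eqref{eq:con4}, since $b+2\psi>b+\psi>b$ and $I(b,\cdot)$ is increasing beyond $b$, we have $I(b,b+2\psi)>I(b,b+\psi)>1/\rho_{\min}$. For \eqref{eq:con1}, the crossover identity gives $I(a,b+2\psi)=I(a,\tau)=I(b,\tau)=I(b,b+2\psi)$, which was already shown to exceed $I(b,b+\psi)>1/\rho_{\min}$. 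All four hypotheses of Theorem \ref{thm:SDP_gen} are therefore satisfied, and the conclusion $\min_{Z^*\in\calZ_n}\Prob\{\widehat Z_{\SDP}=Z^*\}\geq 1-n^{-\Omega(1)}$ follows. For the penalized SDP \eqref{eq.SDP_RZaug}, the prescribed choice $\lambda^*=(b+\psi_1+\psi_2)\log n/n=\tau\log n/n$ matches what is stated in the corollary, and $\eta^*=C\sqrt{\log n}$ is inherited from the theorem.

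No single step here is the main obstacle — the work of constructing dual certificates and handling the deviation estimates is done inside Theorem \ref{thm:SDP_gen} — but the one place where care is required is confirming that the explicit choice $\psi=(\tau-b)/2$ is genuinely feasible (in particular that $\tau>b$) and that using the crossover identity $I(a,\tau)=I(b,\tau)$ lets us trade condition \eqref{eq:con1} (which bounds a rate function based on $a$) for a bound based on $b$, so that a single scalar inequality $I(b,(\tau+b)/2)>1/\rho_{\min}$ controls all four conditions simultaneously.
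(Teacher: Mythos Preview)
Your proposal is correct and follows essentially the same approach as the paper: choose $\psi_1=\psi_2=(\tau-b)/2$ so that $b+2\psi=\tau$ and use the crossover identity $I(a,\tau)=I(b,\tau)$ to control condition \eqref{eq:con1}, then dominate the remaining conditions by monotonicity of $I(b,\cdot)$ beyond $b$. The paper phrases this as passing through Corollary~\ref{cor21} (noting that this explicit $\psi$ is smaller than the one defined there, hence the resulting test is more stringent), whereas you verify the four conditions of Theorem~\ref{thm:SDP_gen} directly; the content is the same.
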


\begin{remark}
Let us compare the sufficient condition provided by Corollary \ref{cor22} with
necessary conditions for recovery.   
In the presence of outliers,  $I(b, \tau)>1/\rho_{\min}$ is a necessary
condition as shown in \cite[Theorem 4]{HajekWuXuSDP14}, for otherwise we can swap a vertex in the smallest
cluster with an outlier vertex to increase the number of in-cluster edges.
Also, with at least two clusters,
\begin{equation}  \label{eq.two_cluster_cond}
( \sqrt{a}- \sqrt{b} )^2 >1/\rho_{\min}
\end{equation}
is necessary, because we could have two smallest clusters
of sizes $\rho_{\min} n,$   and even if a genie were to reveal all the other clusters, we would still need
\eqref{eq.two_cluster_cond} to recover the two smallest ones, as  shown by \cite[Theorem 1]{Abbe14}.
By Lemma \ref{lemma.Iab},  $I (b, \tau) \leq  ( \sqrt{a} -\sqrt{b} )^2  \leq  2 I(b, \tau)$;    so
with or without outliers,   $2 I(b, \tau) > 1/\rho_{\min}$ is necessary.
By Lemma \ref{lemma.Ibnd},   $I(b, \tau ) \leq 4 I ( b, \frac{\tau+b}{2} )$. Therefore we conclude that the sufficient condition of Corollary \ref{cor22} is within a factor of four (resp. eight) of the necessary condition in the presence (resp. absence) of outliers.
\label{rmk:factor8}
\end{remark}

\section{Conclusions}

This paper shows that the SDP procedure works for recovering community structure
at the asymptotically optimal threshold in various important settings beyond the case
of two equal-sized clusters  or that of a single cluster and outliers considered in \cite{HajekWuXuSDP14}.   In particular,
SDP relaxations works asymptotically optimally for two unequal clusters (with or without knowing the cluster size), or $r$
equal clusters, or the binary censored block model with
the background graph being Erd\H{o}s-R\'enyi.
These results demonstrate the versatility of SDP relaxation as a simple, general purpose, computationally feasible methodology for community detection.

The picture is less impressive when these cases are combined to have a general case with $r$ clusters of various sizes  plus outliers.
Still, we found that
an SDP procedure can achieve exact recovery even without the knowledge of the cluster sizes;
the sufficient condition for recovery is within a factor of eight of the necessary information-theoretic bound.
An interesting open problem is whether the SDP relaxation can achieve the optimal recovery threshold
in this general case.  The preprint \cite{perry2015semidefinite} addresses this problem, showing that the SDP relaxation
still achieves the optimal threshold for recovering a fixed number of  clusters with unequal  sizes.

\section{Proofs}
\subsection{Proofs for \prettyref{sec:asym}: Binary asymmetric SBM }
\begin{lemma}[{\cite[Lemma 2]{HajekWuXuSDP14}}]\label{lmm:binomialmaxminconcentration}
Let $X \sim \Binom\left( m , \frac{a\log n}{n} \right)$ and $R \sim \Binom\left( m, \frac{b\log n}{n} \right)$ for $m \in \naturals$ and $a,b>0$, where $m =\rho n +o(n) $ for some $\rho>0$ as $n\diverge$. Let $k_n,k_n' \in [m]$ be such that $k_n= \tau \rho \log n+o(\log n)$ and $k'_n= \tau' \rho \log n+o(\log n)$ for some $0 \leq \tau \leq a$ and $\tau' \geq b$.
Then
\begin{align}
\prob{ X \le k_n } &=  n^{-  \rho \left( a - \tau \log \frac{\eexp a}{\tau} +o(1) \right)   } \label{eq:binomupbound1} \\
\prob{ R \ge  k_n' } & =  n^{-  \rho  \left( b - \tau' \log \frac{\eexp b}{\tau'} +o(1) \right)     } \label{eq:binomupbound2}.
\end{align}
\end{lemma}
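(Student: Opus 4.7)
The plan is to bracket each binomial tail probability between a Chernoff-type upper bound and a single-term lower bound drawn from the probability mass function, and then verify that both estimates agree to leading order in the exponent of $n$. I focus on the lower-tail bound for $X$; the upper-tail bound for $R$ is completely analogous after swapping the roles of the Chernoff inequalities.

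For the upper bound, I would first invoke the standard Chernoff inequality
\[
\Prob(X \le k_n) \le \exp\bigl(-m\, D(k_n/m\,\|\,p)\bigr),
\]
valid since $k_n/m \le p$ for all large $n$ (as $\tau \le a$), where $D(\cdot\|\cdot)$ denotes the binary Kullback--Leibler divergence and $p = a\log n/n$. Writing $m = \rho n(1+o(1))$ and $k_n/m = (\tau\log n/n)(1+o(1))$, I would expand
\[
D(k_n/m\|p) = \frac{k_n}{m}\log\frac{k_n/m}{p} + \left(1-\frac{k_n}{m}\right)\log\frac{1-k_n/m}{1-p}.
\]
The first summand equals $(\tau\log n/n)\log(\tau/a)(1+o(1))$, while the Taylor expansion $\log(1-x) = -x + O(x^2)$ applied to the second summand (with $k_n/m, p = O(\log n/n)$) yields $(a-\tau)(\log n/n) + o(\log n/n)$. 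Multiplying by $m$ produces the exponent $\rho \log n \cdot [a - \tau\log(ea/\tau)](1+o(1))$, which matches \prettyref{eq:binomupbound1}.

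For the matching lower bound, I would retain a single dominant term from the binomial PMF:
\[
\Prob(X \le k_n) \ge \binom{m}{\lfloor k_n\rfloor} p^{\lfloor k_n\rfloor}(1-p)^{m-\lfloor k_n\rfloor},
\]
and apply Stirling's formula. After taking logarithms, the binomial coefficient contributes $-m\, H(k_n/m)(1+o(1))$, where $H$ is the binary entropy function, and combining this with the $\log p$ and $\log(1-p)$ contributions reproduces $-m\, D(k_n/m\|p)(1+o(1))$, up to a polynomial prefactor of order $1/\sqrt{\log n}$ that is absorbed harmlessly into the $o(1)$ in the exponent of $n$.

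The main technical care lies in tracking error terms through the expansion of $D(k_n/m\|p)$ when both $k_n/m$ and $p$ vanish at rate $\Theta(\log n/n)$ and in verifying the algebraic identity $\tau\log(\tau/a) + (a-\tau) = a - \tau\log(ea/\tau)$. For the estimate on $\Prob(R \ge k_n')$, one instead uses the upper-tail Chernoff bound $\Prob(R \ge k_n') \le \exp(-m\, D(k_n'/m \| q))$ with $q = b\log n/n$ (valid since $\tau' \ge b$), together with the PMF lower bound $\Prob(R = \lceil k_n'\rceil)$; the identical expansion then yields the exponent $b - \tau'\log(eb/\tau')$ claimed in \prettyref{eq:binomupbound2}.
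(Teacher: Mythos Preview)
Your approach is correct and standard. The paper itself does not prove this lemma; it is quoted from \cite[Lemma~2]{HajekWuXuSDP14}, where essentially the same Chernoff-plus-single-term argument is carried out (and the paper uses the identical pairing of a Chernoff upper bound with a pointwise pmf lower bound, via \cite[Lemmas~4.7.1--4.7.2]{ash-itbook}, in its proof of the lower tail bound for the censored model). Two minor remarks on your write-up: the sign on the binomial-coefficient contribution should be $+m H(k_n/m)$, not $-m H(k_n/m)$, since Stirling gives $\log\binom{m}{k}\approx m H(k/m)$; and since $m H(k_n/m)$ and $k_n\log p$ are individually of order $(\log n)^2$ with cancelling leading parts, the error terms must be tracked additively (as $o(\log n)$) rather than as multiplicative $(1+o(1))$ factors in order to recover the correct $O(\log n)$ exponent. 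These are routine fixes and the overall plan is sound.
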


\newcommand{\ttau}{\alpha}
\begin{lemma}  \label{lmm:general}
Suppose $a, b > 0$, $\ttau \in \IR,$ and either $\rho_1>0$ or $\rho_2>0.$
Let $X$ and $R$ be independent with $X \sim  \Binom(m_1, \frac{a\log n}{n})$   and
$R \sim \Binom(m_2, \frac{b\log n}{n}),$ where $m_1=\rho_1 n+o(n)$ and
$m_2=\rho_2 n +o(n)$  as $n \to \infty$. Let $k \in \naturals$ such that $k= \ttau \log n+o(\log n)$.
If $\ttau \leq a \rho_1 - b\rho_2 ,$
 \begin{align}
 \prob{X-R \leq  k  } = n^{-g(\rho_1, \rho_2, a, b,\ttau) +o(1) }, \label{eq:tail1}
 \end{align}  where
\begin{align*}
g(\rho_1, \rho_2, a, b, \ttau) = \left\{
    \begin{array}{rl}
 a\rho_1 + b\rho_2 - \gamma - \frac{\ttau}{2} \log \frac{(\gamma-\ttau)a\rho_1}{(\gamma+\ttau)b\rho_2}   & \rho_1, \rho_2>0\\
  \rho_2 b + \ttau \log \frac{- \eexp \rho_2 b }{ \ttau } &  \rho_1=0, \rho_2>0 \\
    \rho_1 a - \ttau \log  \frac{\eexp \rho_1 a }{ \ttau } & \rho_1>0, \rho_2=0
    \end{array} \right.,
\end{align*}
with $\gamma=\sqrt{\ttau^2 + 4\rho_1\rho_2ab }.$

Furthermore, for any $m_1, m_2, k \in \naturals$ such that $k \leq (m_1 a  - m_2 b) \log n /n$,
 \begin{align}
 \prob{X-R \leq  k   }  \le n^{-g(m_1/n, m_2/n, a, b, k/\log n) }. \label{eq:tail2}
 \end{align}
\end{lemma}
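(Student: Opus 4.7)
The plan is to bound $\Prob\{X - R \leq k\}$ above by a Chernoff estimate and below by a saddle-point single-atom lower bound, and then verify that both exponents coincide with $g$.

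For the upper bound, yielding both \eqref{eq:tail2} and the upper half of \eqref{eq:tail1}, I would start from Markov's inequality: for $t \geq 0$,
\[
\Prob\{X - R \leq k\} \leq e^{tk}\,\E[e^{-tX}]\,\E[e^{tR}],
\]
and bound each MGF via $1 + x \leq e^x$ to get $\exp(tk - m_1 p_a(1 - e^{-t}) + m_2 p_b(e^t - 1))$, where $p_a = a\log n/n$ and $p_b = b\log n/n$. Setting the derivative in $t$ to zero gives the quadratic $m_2 p_b u^2 + ku - m_1 p_a = 0$ in $u = e^t$, with positive root $u^\star = (D - k)/(2 m_2 p_b)$ for $D = \sqrt{k^2 + 4 m_1 m_2 p_a p_b}$. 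Substituting back and using the identity $(D-k)(D+k) = 4 m_1 m_2 p_a p_b$ to write $\log u^\star = \tfrac12 \log[(D-k) m_1 p_a / ((D+k) m_2 p_b)]$, the optimized exponent becomes $-[m_1 p_a + m_2 p_b - D] - (k/2)\log((D-k) m_1 p_a/((D+k) m_2 p_b))$. This is exactly the content of \eqref{eq:tail2}, and under the scaling $k = \ttau \log n + o(\log n)$, $m_i = \rho_i n + o(n)$ it becomes $-g(\rho_1, \rho_2, a, b, \ttau) \log n (1+o(1))$.

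For the matching lower bound in \eqref{eq:tail1} with $\rho_1, \rho_2 > 0$, I would identify the saddle point suggested by the Chernoff calculation. Lagrange multipliers applied to
\[
\min_{\alpha - \beta = \ttau,\, \alpha,\beta \geq 0} \left\{\bigl(\rho_1 a - \alpha + \alpha \log \tfrac{\alpha}{\rho_1 a}\bigr) + \bigl(\rho_2 b - \beta + \beta \log \tfrac{\beta}{\rho_2 b}\bigr)\right\}
\]
give $\alpha^\star = (\gamma + \ttau)/2$ and $\beta^\star = (\gamma - \ttau)/2$. Choosing $x_n = \lfloor \alpha^\star \log n\rfloor$ and $r_n = \lceil \beta^\star \log n\rceil$, so that $x_n - r_n \leq k$ for all large $n$, independence gives $\Prob\{X - R \leq k\} \geq \Prob\{X = x_n\}\,\Prob\{R = r_n\}$. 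A direct Stirling estimate, valid since $x_n, r_n = O(\log n) = o(m_i)$, yields
\[
-\frac{1}{\log n}\log \Prob\{X = x_n\} = \rho_1 a - \alpha^\star + \alpha^\star \log\frac{\alpha^\star}{\rho_1 a} + o(1),
\]
and similarly for $R$. Summing the two rate functions, noting $\alpha^\star + \beta^\star = \gamma$ and $\log(\alpha^\star/(\rho_1 a)) = -\log(\beta^\star/(\rho_2 b))$, collapses the cross-term to $\ttau \log((\gamma+\ttau)/(2\rho_1 a))$, which matches the expression for $g$ after applying the identity $(\gamma - \ttau)/(2 \rho_2 b) = 2 \rho_1 a/(\gamma + \ttau)$ (equivalent to $\gamma^2 - \ttau^2 = 4 \rho_1 \rho_2 ab$).

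When $\rho_1 = 0$, the probability reduces to $\Prob\{R \geq -k\}$, which under the hypothesis $\ttau \leq -\rho_2 b$ is an upper-tail event above the mean; invoking \prettyref{lmm:binomialmaxminconcentration} with $\tau' = -\ttau/\rho_2$ gives the rate $\rho_2(b - \tau' \log(eb/\tau'))$, and a short substitution shows this equals $g(0, \rho_2, a, b, \ttau)$. The case $\rho_2 = 0$ is symmetric. The main obstacle in the proof is the algebraic bookkeeping that matches the Chernoff/Lagrangian exponent with the closed-form $g$ in the statement---the identity $(\gamma - \ttau)(\gamma + \ttau) = 4\rho_1 \rho_2 ab$ is used repeatedly for the symmetrization. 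A secondary concern is confirming that the $O(1)$ rounding in $x_n, r_n$ and the $o(\log n)$ slack in $k$ are absorbed into the $o(1)$ error in the exponent, which holds because the leading rate is $\Theta(\log n)$.
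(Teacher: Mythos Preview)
Your proposal is correct and follows the same strategy as the paper: Chernoff for the upper bound (yielding both \eqref{eq:tail2} and the upper half of \eqref{eq:tail1}), and a saddle-point argument for the matching lower bound. The only implementation difference is that for $\rho_1,\rho_2>0$ the paper uses the tail inclusion $\{X-R\le k\}\supset\{X\le\tfrac{\alpha'+\alpha}{2}\log n\}\cap\{R\ge\tfrac{\alpha'-\alpha}{2}\log n\}$ and invokes \prettyref{lmm:binomialmaxminconcentration} on each factor (then optimizes to $\alpha'=\gamma$), whereas you use the point-mass bound $\Prob\{X=x_n\}\Prob\{R=r_n\}$ with Stirling; these give identical exponents.

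One small gap to patch: in the degenerate case $\rho_1=0$ you write that the probability ``reduces to $\Prob\{R\ge -k\}$,'' but the hypothesis only gives $m_1=o(n)$, not $m_1=0$, so $X$ need not be identically zero. For the lower bound you must absorb $X$, e.g.\ via $\Prob\{X-R\le k\}\ge \Prob\{X=0\}\,\Prob\{R\ge -k\}$ and $\Prob\{X=0\}=(1-p_a)^{m_1}=\exp(-o(\log n))=n^{o(1)}$; the paper instead uses $\Prob\{X\le 2m_1 p_a\}\ge 1/2$ and shifts the threshold for $R$ by $2m_1 p_a=o(\log n)$. The same remark applies by symmetry when $\rho_2=0$.
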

\begin{proof}
We first prove the upper tail bound in \prettyref{eq:tail2}
using Chernoff's bound. In particular,
\begin{align}
\prob{X-R \leq k} \le \exp \left( -n \ell (k/n) \right),   \label{eq:chernoff}
\end{align}
where $\ell(x) =\sup_{t \ge 0} -t x  - \frac{1}{n} \log \expect{\eexp^{-t(X-R)}}$.
Let $\rho_{1,n}= m_1/n$, $\rho_{2,n}=m_2/n$ and $\ttau_n=k/ \log n$.
By definition,
\begin{align*}
\frac{1}{n} \log \expect{\eexp^{-t(X-R)}} = \rho_{1,n} \log \left(1- \frac{a\log n}{n} (1-\eexp^{-t} ) \right) + \rho_{2,n} \log \left( 1 - \frac{b\log n}{n} (1- \eexp^{t} ) \right).
\end{align*}
Since $-t x -  \frac{1}{n} \log \expect{\eexp^{-t(X-R)}}$ is concave in $t$, it achieves the supremum at $t^\ast$ such that
\begin{align*}
-x  +  \rho_{1,n} \frac{a \eexp^{-t^\ast}\log n /n }{1- \frac{a\log n}{n} (1-\eexp^{-t^\ast} ) } - \rho_{2,n} \frac{b \eexp^{t^\ast} \log n /n  }{1- \frac{a\log n}{n} (1-\eexp^{-t^\ast} ) } =0.
\end{align*}
It suggests that when $x= k/n$, we choose
\begin{align*}
t^\ast = \left \{
   \begin{array}{rl}
 \log \frac{\gamma_n - \ttau_n}{2 \rho_{2,n} b}  & \rho_{1,n}, \rho_{2,n}>0\\ [1em]
\log  \frac{-\ttau_n}{\rho_{2,n} b }  &  \rho_{1,n}=0, \rho_{2,n}>0 \\ [1em]
\log \frac{\rho_{1,n} a }{ \ttau_n } & \rho_{1,n}>0, \rho_{2,n}=0
    \end{array} \right.,
   \end{align*}
    with $\gamma_n= \sqrt{\ttau_n^2+4 \rho_{1,n} \rho_{2,n}  a b}$. Thus,
using the inequality that $\log(1 -x) \le -x$, we have
\begin{align*}
\ell(k/n) & \ge  \left(- \ttau_n t^\ast  + \rho_{1,n} a + \rho_{2,n} b - \rho_{1,n} a  \eexp^{-t^\ast} - \rho_{2,n} b  \eexp^{t^\ast} \right) \frac{\log n}{n} \\
&=g( \rho_{1,n}, \rho_{2,n}, a, b, \ttau_n )\frac{\log n}{n}.
\end{align*}
Then in view of \prettyref{eq:chernoff},
\begin{align*}
\prob{X-R \leq  k   } \le n^{- g(\rho_{1,n} , \; \rho_{2,n}, \; a, \; b, \; \ttau_n) }.
\end{align*}
If $\rho_{1,n}=\rho_1+o(1)$, $\rho_{2,n}=\rho_2+o(1)$, and $\ttau_n=\ttau+o(1)$, then we let
\begin{align*}
t^\ast = \left \{
   \begin{array}{rl}
 \log \frac{\gamma - \ttau}{2 \rho_{2} b}  & \rho_{1}, \rho_{2}>0\\ [1em]
\log  \frac{-\ttau}{\rho_{2} b }  &  \rho_{1}=0, \rho_{2}>0 \\[1em]
\log \frac{\rho_{1} a }{ \ttau } & \rho_{1}>0, \rho_{2}=0
    \end{array} \right.,
   \end{align*}
    with $\gamma= \sqrt{\ttau^2+4 \rho_{1} \rho_{2}  a b}$.
It follows that
\begin{align*}
\ell(k/n) & \ge  \left(- \ttau t^\ast  + \rho_{1} a + \rho_{2} b - \rho_{1} a  \eexp^{-t^\ast} - \rho_{2} b  \eexp^{t^\ast} \right) \frac{\log n}{n} + o(\log n /n)  \\
&=g( \rho_{1}, \rho_{2}, a, b, \ttau)\frac{\log n}{n} + o(\log n/ n).
\end{align*}
and thus the upper tail bound in \prettyref{eq:tail1} holds in view of \prettyref{eq:chernoff}.
Next, we prove the lower tail bound in \prettyref{eq:tail1}.

{\bf Case 1:} $\rho_1, \rho_2>0$.
For any choice of the constant $\ttau'$  with  $\ttau' > |\ttau|,$
$$\{ X-R \leq \ttau \log n \} \supset  \left\{  X \leq  \frac{\ttau' + \ttau }{2}  \log n \right\} \cap
\left \{  R \geq  \frac{\ttau' - \ttau }{2}  \log n \right\}
 $$
 and therefore
 \begin{align*}    
 \prob{ X-R \leq \ttau \log n } \geq \max_{\ttau': \ttau' >|\ttau| }  \prob{ X \leq  \frac{\ttau' + \ttau }{2}  \log n}  \prob{
 R \geq  \frac{\ttau' - \ttau}{2}  \log n}.
\end{align*}
 So, applying \prettyref{lmm:binomialmaxminconcentration}, we get that
 \begin{align*}
  \prob{ X-R \leq \ttau \log n } \geq \max_{\ttau': \ttau' >|\ttau| } n^{-\left[  a\rho_1 - \frac{\ttau'+ \ttau}{2} \log   \frac{2 e \rho_1 a}{\ttau' + \ttau}    +
 b\rho_2 - \frac{\ttau' -  \ttau}{2} \log  \frac{2 e \rho_2 b}{\ttau' - \ttau}   +o(1)  \right]}. 
 \end{align*}
Setting $\ttau'=  \sqrt{\ttau^2+4 \rho_1 \rho_2 a b}$ in the last displayed equation yields
 \begin{align*}
 \prob{X-R \leq  \ttau \log n  } \ge n^{-\left[  a\rho_1 + b\rho_2 - \gamma - \frac{\ttau}{2} \log  \frac{(\gamma-\ttau)a\rho_1}{(\gamma+\ttau)b\rho_2}   +o(1)  \right]}.
 \end{align*}

 {\bf Case 2:} $\rho_1=0, \rho_2>0$. We have
 \begin{align*}
\{ X-R \leq \ttau \log n \} \supset  \left\{  X \leq 2 m_1 a \log n/n \right\} \cap
\left \{  R \geq -\ttau \log n + 2 m_1 a \log n /n \right\}
 \end{align*}
 and therefore
 \begin{align*}
 \prob{ X-R \leq \ttau \log n } &  \geq  \prob{ X \leq 2 m_1 a \log n/n}  \prob{
  R \geq -\ttau \log n + 2 m_1 a \log n /n}  \\
  & \geq \frac{1}{2}n^{  - \rho_2 b + \ttau \log  - \frac{-\eexp \rho_2 b }{ \ttau }  + o(1) },
\end{align*}
where the last inequality follows because by Markov's inequality, $  \prob{ X \leq 2 m_1 a \log n/n} \ge 1/2$, and in view of \prettyref{lmm:binomialmaxminconcentration} with $m_1 \log n /n=o(\log n)$,
\begin{align*}
\prob{ R \geq -\ttau \log n + 2 m_1 a \log n /n} \le n^{  - \rho_2 b -  \ttau \log \frac{-\eexp \rho_2 b }{ \ttau } + o(1) }.
\end{align*}

{\bf Case 3:} $\rho_1>0, \rho_2=0$. We have
 \begin{align*}
\{ X-R \leq \ttau \log n \} \supset  \left\{  X \leq \ttau \log n /n  \right\}
 \end{align*}
 and therefore
 \begin{align*}
 \prob{ X-R \leq \ttau \log n } \geq  \prob{ X \leq  \ttau  \log n/n}  \geq n^{  - \rho_1 a + \ttau \log  \frac{\eexp \rho_1 a }{ \ttau }  + o(1) },
\end{align*}
where the last inequality follows from \prettyref{lmm:binomialmaxminconcentration}.
\end{proof}

The following lemma provides a deterministic sufficient condition for the success of \SDP \prettyref{eq:SBMconvex_unbalanced} in the case $a>b$.
\begin{lemma}\label{lmm:SBMKKT}
Suppose there exist $D^\ast=\diag{d^\ast_i}$ and $ \lambda^\ast \in \reals$ such that $S^* \triangleq D^\ast-A + \lambda^\ast \allones$ satisfies $S^\ast \succeq 0$, $\lambda_2(S^\ast)>0$ and
\begin{align}
S^\ast \sigma^\ast = 0 . \label{eq:SBMKKT_unbalanced}
\end{align}
Then $\widehat{Y}_{\SDP}=Y^\ast$ is the unique solution to \prettyref{eq:SBMconvex_unbalanced}.
\end{lemma}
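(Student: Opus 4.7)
The plan is to use $S^*$ as a dual certificate and derive optimality and uniqueness via complementary slackness, essentially following the same template as in \cite{HajekWuXuSDP14} but accounting for the extra linear constraint $\langle \allones, Y \rangle = (2K-n)^2$.

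First I would compute the objective gap $\langle A, Y - Y^* \rangle$ for an arbitrary feasible $Y$ of \prettyref{eq:SBMconvex_unbalanced}. Substituting $A = D^* - S^* + \lambda^* \allones$, the gap decomposes into three pieces: $\langle D^*, Y-Y^*\rangle$, which vanishes because $Y_{ii}=Y^*_{ii}=1$ and $D^*$ is diagonal; $\lambda^*\langle \allones, Y-Y^*\rangle$, which vanishes because both $Y$ and $Y^*$ satisfy $\langle \allones, Y\rangle = (2K-n)^2$; and $-\langle S^*, Y - Y^*\rangle$. The hypothesis $S^*\sigma^*=0$ gives $S^* Y^* = 0$, hence $\langle S^*, Y^*\rangle = 0$, so the gap simplifies to $-\langle S^*, Y\rangle$. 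Since $S^*\succeq 0$ and $Y\succeq 0$, this quantity is nonpositive, establishing $\langle A, Y\rangle \le \langle A, Y^*\rangle$ and hence the optimality of $Y^*$.

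Next I would handle uniqueness. Suppose $Y$ is another optimal solution. Then $\langle S^*, Y\rangle = 0$, and combined with $S^*, Y \succeq 0$ this forces $S^* Y = 0$, so every column of $Y$ lies in $\ker(S^*)$. The assumption $\lambda_2(S^*)>0$ together with $S^*\sigma^*=0$ says that $\ker(S^*)$ is exactly the one-dimensional span of $\sigma^*$. Therefore $Y = \sigma^* u^\top$ for some $u\in\reals^n$; symmetry of $Y$ forces $\sigma^* u^\top = u(\sigma^*)^\top$, and right-multiplying by $\sigma^*$ shows $u$ is itself a scalar multiple of $\sigma^*$. Hence $Y = c\,\sigma^*(\sigma^*)^\top$, and the constraint $Y_{ii}=1$ pins down $c=1$, i.e. $Y = Y^*$.

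There is no real obstacle here — the lemma is a deterministic KKT/complementary-slackness statement, and all the work has been pushed into the (probabilistic) task of constructing the dual variables $D^*$ and $\lambda^*$ in the actual proof of \prettyref{thm:SBMSharp_unbalanced}. The only place one has to be careful is in correctly identifying which primal constraints the Lagrange multipliers $d_i^*$ (for $Y_{ii}=1$) and $\lambda^*$ (for $\langle \allones, Y\rangle = (2K-n)^2$) correspond to, so that the gap telescopes cleanly; the asymmetric size constraint is precisely what necessitates the $\lambda^* \allones$ term relative to the balanced case.
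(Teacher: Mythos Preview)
Your proposal is correct and follows essentially the same approach as the paper: the paper presents the argument via the Lagrangian $L(Y,S,D,\lambda)$, while you compute the objective gap $\langle A, Y-Y^*\rangle$ directly after substituting $A = D^* - S^* + \lambda^*\allones$, but these are the same computation and the uniqueness step (using $\langle S^*,\tY\rangle=0$, $\tY,S^*\succeq 0$, and $\lambda_2(S^*)>0$ to force $\tY$ to be a multiple of $Y^*$, then pinning down the constant via $\tY_{ii}=1$) is identical.
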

\begin{proof}
The Lagrangian function is given by
\begin{align*}
L(Y, S, D, \lambda) = \langle A, Y \rangle + \langle S, Y \rangle - \langle D, Y -\identity \rangle - \lambda \left( \Iprod{\allones}{Y} - (2K-n)^2 \right),
\end{align*}
where the Lagrangian multipliers are denoted by $S \succeq 0$, $D=\diag{d_i}$, and $\lambda \in \reals$.
Then for any $Y$ satisfying the constraints in \prettyref{eq:SBMconvex_unbalanced},
\begin{align*}
 \Iprod{A}{Y } & \overset{(a)}{\le} L(Y, S^\ast, D^\ast, \lambda^\ast) = \Iprod{D^\ast}{I} + \lambda^\ast (2K-n)^2
=\Iprod{D^\ast}{Y^\ast} + \lambda^\ast (2K-n)^2  \\
& =\Iprod{A+S^\ast-\lambda^\ast \allones}{Y^\ast} + \lambda^\ast (2K-n)^2 \overset{(b)}=\Iprod{A}{Y^\ast},
\end{align*}
where $(a)$ holds because $\Iprod{S^\ast}{Y} \ge 0$; $(b)$ holds because $\Iprod{Y^\ast}{S^\ast}=(\sigma^\ast)^\top S^\ast \sigma^\ast =0$ by \prettyref{eq:SBMKKT_unbalanced}.
Hence, $Y^\ast$ is an optimal solution. It remains to establish its uniqueness. To this end, suppose $\tY$ is
an optimal solution. Then,
\begin{align*}
\Iprod{S^\ast}{\tY}=\Iprod{D^\ast-A+ \lambda^\ast \allones}{\tY} \overset{(a)}{=} \Iprod{D^\ast-A+ \lambda^\ast \allones}{Y^\ast}{=}\Iprod{S^\ast}{Y^\ast} =0.
\end{align*}
where $(a)$ holds because $\Iprod{\allones}{\tY}=\Iprod{\allones}{Y^\ast}$, $\Iprod{A}{\tY}=\Iprod{A}{Y^\ast}$, and $\tY_{ii}=Y^*_{ii}=1$ for all $i \in [n]$.
In view of \prettyref{eq:SBMKKT_unbalanced}, since $\tY \succeq 0$, $S^\ast \succeq 0$ with $\lambda_2(S^*)>0$, $\tY$ must be a multiple of $Y^*=\sigma^\ast (\sigma^\ast)^\top$.
Because $\tY_{ii}=1$ for all $i \in [n]$, $\tY=Y^\ast$.
\end{proof}

\begin{proof}[Proof of \prettyref{thm:SBMSharp_unbalanced}]
Let $D^\ast=\diag{d^\ast_i}$ with
\begin{equation}
d^\ast_i  = \sum_{j=1}^n A_{ij} \sigma^\ast _i \sigma^* _j	- \lambda^\ast (2K -n )  \sigma_i^\ast
	\label{eq:di-SBM}
\end{equation}
and choose $\lambda^* = \tau \log n/ n$, where $\tau= \frac{a-b}{\log a - \log b}$.
It suffices to show that $S^* = D^\ast-A + \lambda^\ast \allones$ satisfies the conditions in \prettyref{lmm:SBMKKT} with high probability.

By definition, $d^\ast_i \sigma_i^\ast = \sum_{j} A_{ij} \sigma^\ast _j - \lambda^\ast (2K-n)$ for all $i \in [n]$, \ie, $D^\ast \sigma^\ast =A \sigma^\ast - \lambda^\ast (2K-n) \mathbf{1} $. Since $ \allones \sigma^\ast=(2K-n) \mathbf{1}$,
it follows that the desired \prettyref{eq:SBMKKT_unbalanced} holds, that is, $S^*\sigma^* = 0$. It remains to verify that $S^\ast \succeq 0$ and $\lambda_2(S^\ast)>0$ with high probability, which amounts to showing that
\begin{equation}
\prob{\inf_{x \Perp \sigma^\ast, \|x\|_2=1} x^\top S^\ast x  > 0}  \ge 1- n^{-\Omega(1) }.	
	\label{eq:lambda2_unbalanced}
\end{equation}
Note that $\expect{A}= \frac{p-q}{2} Y^\ast + \frac{p+q}{2} \allones - p \identity$ and $Y^\ast= \sigma^\ast (\sigma^\ast)^\top$. Thus for any $x$ such that $x \Perp \sigma^\ast$ and $\|x\|_2=1$,
\begin{align}
x^\top S^\ast x &= x^\top D^\ast x- x^\top \expect{A} x  +  \lambda^\ast x^\top \allones x - x^\top  \left( A - \expect{A} \right) x \nonumber  \\
&  = x^\top D^\ast  x  - \frac{p-q}{2} x^\top Y^\ast x + \left( \lambda^\ast- \frac{p+q}{2} \right) x^\top \allones x +p  -
x^\top  \left( A- \expect{A} \right) x  \nonumber \\
& \overset{(a)}{=} x^\top D^\ast  x + \left( \lambda^\ast- \frac{p+q}{2} \right) x^\top \allones x  +p - x^\top  \left( A- \expect{A} \right) x.
 \label{eq:SBMPSDCheck_unbalanced}
\end{align}
where $(a)$ holds because $\iprod{x}{\sigma^\ast}=0$.
It follows from \prettyref{eq:SBMPSDCheck_unbalanced} that for any  $x \Perp \sigma^\ast$ and $\|x\|_2=1$,
$x^TS^*x=t_1(x) + t_2(x)$ where
\begin{align}
t_1(x) = & ~ 	x^\top D^* x + \left( \lambda^* - \frac{p+q}{2}\right)  x^\top \allones x,  \label{eq:t1}\\
t_2(x)= & ~ 	p -x^\top (A-\expect{A} ) x. \label{eq:t2}
\end{align}

Observe that
\begin{eqnarray}
\inf_{x\perp \sigma^* ,\|x\|_2 =1} x^\top S^*x \geq  \inf_{x\perp\sigma^* ,\|x\|_2 =1} t_1(x) +  \inf_{x\perp \sigma^* ,\|x\|_2 =1} t_2(x).  \label{eq:PSD_check_unbalanced}
\end{eqnarray}
Now  $ \inf_{x\perp\sigma^* ,\|x\|_2 =1} t_2(x)  \geq p  - \| A -\Expect[A] \|$.
In view of \cite[Theorem 5]{HajekWuXuSDP14}, with high probability $\|  A - \expect{A}\| \leq c' \sqrt{\log n} $
 for a positive constant $c'$ depending only on $a$ and thus $ \inf_{x\perp\sigma^* ,\|x\|_2 =1} t_2(x) \geq p-c' \sqrt{\log n}$.

We next bound  $\inf_{x\perp\sigma^* ,\|x\|_2 =1} t_1(x) $ from the below. Consider the specific vector $\check{x}$ that maximizes $x^\top \allones x$ subject to the unit norm constraint and  $\Iprod{x}{\sigma^*}=0.$
It has coordinates $\sqrt{\frac{n-K}{nK}}$  for the $K$ vertices of the first cluster and coordinates $\sqrt{\frac{K}{n(n-K)}}$ for the
$n-K$ vertices of the other cluster.  Let $E_2=\mbox{span}(\sigma^*,\check{x});$  $E_2$ is the set of vectors that are constant over each cluster.
Then
\begin{align*}
 \inf_{x\perp\sigma^* ,\|x\|_2 =1} t_1(x)     =  \inf_{ \beta\in [0,1], \|x\|_2=1,x\perp E_2}  t_1\left( \beta \check{x}  + \sqrt{1-\beta^2}x   \right)  .
\end{align*}
Notice that for any vector $x$ with $x  \perp E_2,$   $\allones x=0.$ It follows that
\begin{eqnarray*}
 t_1\left( \beta \check{x}  + \sqrt{1-\beta^2}x   \right)  = \beta^2 t_1(\check{x}) + 2\beta\sqrt{1-\beta^2}x^\top D^*\check{x} + (1-\beta^2)x^\top D^*x.
\end{eqnarray*}
Therefore,
\begin{align*}
 \inf_{x\perp\sigma^* ,\|x\|_2 =1} t_1(x)     \ge   \inf_{ \beta\in [0,1]}  \left(  \beta^2 t_1(\check{x}) + 2\beta\sqrt{1-\beta^2}   \inf_{\|x\|_2=1,x\perp E_2} x^\top D^*\check{x} + (1-\beta^2)  \inf_{\|x\|_2=1,x\perp E_2} x^\top D^*x \right).
\end{align*}
We bound the three terms in the parenthesis separately in the sequel.

{\bf Lower bound on $ t_1(\check{x})$}:
Notice that
$\check{x}^\top \allones \check{x}=4K(n-K)/n$ and thus
\begin{eqnarray*}
\left(\lambda^* -\frac{p+q}{2}\right) \check{x}^\top \allones \check{x} & = &   \left(\tau - \frac{a+b}{2}\right) 4K(n-K)  \log n/ n^2 \\
& = &    \left(\tau -  a + \tau - b \right) 2 K(n-K)  \log n/n^2.
\end{eqnarray*}
If $\sigma^*_i=+1$ then
\begin{align*}
\expect{d^*_i}  \triangleq \bar{d}_+ = \left\{  K (a-\tau) + (n-K)(\tau-b) - a \right \} \log n/n.
\end{align*}
If $\sigma^*_i=-1$ then
\begin{align*}
\expect{d^*_i}  \triangleq \bar{d}_- = \left\{  (n-K)(a-\tau) + K(\tau-b)  - a \right \} \log n/n .
\end{align*}
Therefore,
\begin{eqnarray*}
\expect{\check{x}^\top D^* \check{x}}  & = & (n-K) \bar{d}_+ / n  +  K \bar{d}_- /n  \\
&=&  \left\{ 2K (n-K) (a-\tau)  + \left(K^2 + (n-K)^2 \right)(\tau-b)   - n a \right\}  \log n/n^2.
\end{eqnarray*}
Since $\check{x}^\top D^\ast \check{x} =  \Iprod{A}{B} -\lambda^\ast (2K -n) \sum_{i=1}^n \check x_i^2 \sigma_i^\ast$, where $B_{ij} = \sigma_i \sigma_j \check x_i^2$,
it follows that $\check{x}^\top D\check{x}$ is Lipschitz continuous in $A$ with Lipschitz constant $\fnorm{B} = \sqrt{(1-\rho)^2/{\rho}+\rho^2/(1-\rho)} + o(1)$.
Moreover, $A_{ij}$ is $[0,1]$-valued. It follows from the Talagrand's concentration inequality for Lipschitz convex functions (see, \eg, \cite[Theorem 2.1.13]{tao.rmt}) that
for any $c>0$, there exists $c'>0$ only depending on $\rho$, such that
\begin{align*}
\prob{  \check{x}^\top D^* \check{x}  - \expect{\check{x}^\top D^* \check{x} } \ge - c'  \sqrt{ \log n} }  \ge 1 - n^{-c}.
\end{align*}
Hence, with probability at least $1- n^{-c}$,
\begin{equation}   \label{eq.Jterm}
t_1( \check{x} ) \ge  \expect{\check{x}^\top D^*\check{x}} + \left(\lambda^* -\frac{p+q}{2}\right)\check{x}^\top \allones \check{x}  - c' \sqrt{\log n}
= (\tau -b)\log n  - a \log n /n - c' \sqrt{\log n} .
\end{equation}

{\bf Lower bound on $\inf_{\|x\|_2 =1, x\perp E_2}  x^\top D^*\check{x}$}:
Note that  $E[D^*]\check{x}\in E_2.$   So for any vector $x$ with $x  \perp E_2,$   $x^\top E[D^*]\check{x}=0$. Hence,
\begin{align}
 \inf_{\|x\|_2 =1, x\perp E_2}  x^\top D^*\check{x} =  \inf_{\|x\|_2 =1, x\perp E_2}  x^\top (D^*-\expect{D^*})\check{x} \geq - \|  (D^*-\expect{D^*})\check{x} \|. \label{eq:lowboundterm2}
\end{align}
 Notice that
\begin{align*}
\|  ( D- \expect{D} ) \check{x}  \|_2^2 =
\sum_i \left( \sum_{j=1}^n ( A_{ij}- \expect{A_{ij}}  ) \sigma_i^\ast \sigma_j^\ast  \check{x} _i   \right)^2 = \sum_i \check{x} _i^2
\left( \sum_{j=1}^n ( A_{ij}- \expect{A_{ij}}  ) \sigma_j^\ast \right)^2 .
\end{align*}
 Therefore
 \begin{align*}
  \expect{ \|(D-E[D]) \check{x} \|}  \le \sqrt{ \expect{\| (D- \expect{D} ) \check{x}  \|_2^2} } = \sqrt{\sum_i \check{x} _i^2 \sum_{j=1}^n \var[A_{ij}] } \le \sqrt{a \log n},
  \end{align*}
One can check that $\|(D-\expect{D}  ) \check{x}  \|$ is convex and Lipschitz continuous in $A$ with Lipschitz constant bounded by $\max \{ \sqrt{ \frac{1-\rho}{\rho}}, \sqrt{ \frac{\rho}{1-\rho}} \}.$ In particular, for any given $A, A'$,
let $D, D'$ denote the corresponding diagonal matrix, then
\begin{align*}
\big | \|(D-\expect{D}  ) \check{x}  \| - \|(D'-\expect{D'} ) \check{x}  \|  \big|  & \le  \|  (D- D') \check{x} \|   = \sqrt{ \sum_i \check{x} _i^2   (\sum_j (A_{ij}- A'_{ij} ) \sigma^\ast_j )^2 } \\
& \le \sqrt{ \sum_i (\sum_j (A_{ij}- A'_{ij} ) \sigma^\ast_j )^2 } \max \left \{ \sqrt{ \frac{n-K}{nK}}, \sqrt{ \frac{K}{n(n-K)} } \right \} \\
& \le \fnorm{A-A'} \max \left \{ \sqrt{ \frac{1-\rho}{\rho}}, \sqrt{ \frac{\rho}{1-\rho}} \right \},
\end{align*}
where the last inequality follows from the Cauchy-Schwartz inequality.
It follows from the Talagrand's concentration inequality for Lipschitz convex functions that for any $c>0$, there exists $c'>0$ such that
\begin{align*}
\prob{ \|(D-E[D]) \check{x} \|  - \expect{\|(D-E[D]) \check{x} \|}  \le c'  \sqrt{ \log n} }  \ge 1 - n^{-c}.
\end{align*}
Hence, with probability at least $1-n^{-c}$,  $ \|  (D^*-\expect{D^*} )\check{x} \| \le  c' \sqrt{\log n}$ for some universal constant $c'$ and
$\inf_{\|x\|_2 =1, x\perp E_2}  x^\top D^*\check{x} \ge - c'\sqrt{\log n}$ in view of \prettyref{eq:lowboundterm2}.

{\bf Lower bound on $\inf_{ \|x\|_2=1,x\perp E_2 } x^\top D^*x$}:
 Notice that for $\|x\|_2=1,$  $x^\top D^* x \geq \min_i d^\ast_i$, so it suffices to bound $\min_i d^\ast_i$ from the below.
For $i \in C_1$, $A_{ij}\sigma_i \sigma_j$ is equal in distribution to $X-R$, where
$X \sim \Binom(K -1 ,\frac{a\log n}{n})$ and $R \sim \Binom(n-K,\frac{b\log n}{n})$.
It follows from \prettyref{lmm:general} that
\begin{align*}
\prob{\sum_{j} A_{ij}\sigma_i \sigma_j \le - \tau (n-2K) \log n/n + \frac{\log n}{\log \log n} }   \le n^{-\eta(\rho, a, b) +o(1)}.
\end{align*}
For $i \in C_2$, $A_{ij}\sigma_i \sigma_j$ is equal in distribution to $X-R$, where
$X \sim \Binom(n-K-1 ,\frac{a\log n}{n})$ and $R \sim \Binom( K,\frac{b\log n}{n})$.
It follows from \prettyref{lmm:general} that
\begin{align*}
\prob{\sum_{j} A_{ij}\sigma_i \sigma_j \le  \tau (n-2K) \log n/ n + \frac{\log n}{\log \log n}}  \le n^{-\eta(\rho, a, b) +o(1)}.
\end{align*}
It follows from the definition of $d^\ast_i$ that
\begin{align*}
\prob{d^\ast_i \ge \frac{\log n}{\log \log n} } \ge 1- n^{-\eta(\rho, a, b) +o(1)}, \forall i
\end{align*}
Applying the union bound, we get that
\begin{align*}
\prob{ \min_{i\in[n]} d^\ast_i  \ge \frac{\log n}{\log \log n}  } \ge 1 - n^{1- \eta(\rho,a ,b)+o(1)} \ge 1-n^{-\Omega(1) },
\end{align*}
where the last inequality follows from the assumption that $\eta(\rho,a ,b)>1$.

Combing all the three lower bounds together, we get that with high probability,
\begin{eqnarray*}
 \inf_{x\perp\sigma^* ,\|x\|_2 =1} t_1(x)  &  \geq
 & \inf_{\beta \in [0,1]} \beta^2  (\tau-b)\log n   - 2 c' \beta\sqrt{1-\beta^2}\sqrt{\log n } + (1-\beta)^2\frac{\log n}{\log\log n} - c' \sqrt{ \log n}  \\
& \geq & \frac{1}{2} \min \left \{ (\tau-b) \log n , \frac{\log n }{\log\log n} \right \}  -  3c'\sqrt{\log n},
\end{eqnarray*}
Notice that we have shown that with high probability $ \inf_{x\perp\sigma^* ,\|x\|_2 =1} t_2(x) \geq p-c' \sqrt{\log n}$.
It follows from  \prettyref{eq:PSD_check_unbalanced} that with high probability,
\begin{align*}
\inf_{x\perp \sigma^* ,\|x\|_2 =1} x^\top S^*x \geq  \frac{1}{2} \min \left \{ (\tau-b) \log n , \frac{\log n }{\log\log n} \right \}  -  4c'\sqrt{\log n} +p .
\end{align*}
Notice  that $a>b>0$ and thus $\tau>b$. Therefore, the desired \prettyref{eq:lambda2_unbalanced} holds
and the theorem follows from \prettyref{lmm:SBMKKT}.
\end{proof}

\begin{proof}[Proof of \prettyref{thm:SBMSharpConverse_unbalanced}]

Since the prior distribution of $\sigma^\ast$ is uniform over $\{ \sigma \in \{ \pm 1\}^n: \sigma^\top \mathbf{1} =2 K-n \}$,
the \ML estimator minimizes the error probability among all estimators and thus we only need to find when the \ML estimator fails.
Let $C_1^\ast, C_2^\ast$ denote the true cluster $1$ and $2$, respectively.
Let $e(i, T) \triangleq \sum_{j \in T} A_{ij}$ for a set $T$.  Recall that $\tau=\frac{a-b}{\log a - \log b}.$ Let $F_1$ denote the event that
$\min_{i \in C_1^*} ( e(i, C_1^\ast) -e (i, C_2^\ast) ) \le  - \tau (1-2\rho) \log n -2 $ and $F_2$ denote the
event  that $\min_{i \in C_2^*} ( e(i, C_2^\ast) -e (i, C_1^\ast) ) \le \tau (1-2\rho) \log n-2 $.
Notice that $F_1 \cap F_2$ implies the existence of $i \in C_1^\ast $ and $j \in C_2^\ast $, such that the set $(C_1^*\backslash\{i\}\cup\{j\}, C_2^* \backslash \{j\} \cup \{i\} )$
achieves a strictly higher likelihood than $(C_1^*, C_2^\ast)$.
Hence $\prob{\text{\ML fails} } \ge \prob{F_1 \cap F_2}$. Next we bound $\prob{F_1}$ and $\prob{F_2}$ from below.

By symmetry, we can condition on $C_1^*$ being the first $K$ vertices.
Let $T$ denote the set of first $ \lfloor \frac{n}{\log^2 n} \rfloor$ vertex.
Then
\begin{equation}
\min_{i \in C_1^*} ( e(i, C_1^\ast) -e (i, C_2^\ast) )   \leq \min_{i \in T} ( e(i, C_1^\ast) -e (i, C_2^\ast) )  \leq \min_{i \in T} ( e(i, C_1^*\backslash T) -e (i, C_2^\ast)  )  + \max_{i \in T} e(i, T).	
	\label{eq:wa_unbalanced}
\end{equation}
Let $E_1,E_2$ denote the event that $\max_{i \in T} e(i,T) \le \frac{\log n}{\log \log n} -2$, $\min_{i \in T} ( e(i, C_1^*\backslash T)  -e (i, C_2^\ast )  ) \le
- \tau (1-2\rho) \log n - \frac{\log n}{\log \log n} $, respectively.
In view of \prettyref{eq:wa_unbalanced}, we have $F_1 \supset E_1 \cap E_2$ and hence it boils down to proving that $\prob{E_i} \to 1$ for $i=1,2$.

For $i \in T$, $e(i, T) \sim \Binom(|T|, a \log n /n )$ . In view of the following Chernoff bound for binomial distributions \cite[Theorem 4.4]{Mitzenmacher05}: For $r \ge 1$ and $X \sim \Binom(n,p)$, $\prob{X \ge r np }
\le ( \eexp/r)^{rnp},$
we have
\begin{align*}
\prob{ e(i, T) \ge \frac{\log n}{\log \log n} -2} \le   \left( \frac{\log^2 n}{a \eexp \log \log n } \right) ^{- \log  n/ \log \log n+2} = n^{-2+o(1)}.
\end{align*}
Applying the union bound yields
\begin{align*}
\prob{E_1} \ge 1 - \sum_{i \in T} \prob{ e(i, T) \ge \frac{\log n}{\log \log n} -2} \ge
1 - n^{-1+o(1)}.
\end{align*}
Moreover,
\begin{align*}
\prob{E_2} & \overset{(a)}{=} 1- \prod_{i\in T} \prob{ e(i, C_1^*\backslash T) - e( i, C_2^\ast)  >  - \tau (1-2\rho) \log n  - \frac{\log n}{\log \log n} } \\
&  \overset{(b)}{\ge} 1- \left( 1- n^{-  \eta( \rho, a, b ) + o(1)  } \right)^{|T|}
\overset{(c)}{\ge} 1- \exp \left( - n^{1- \eta(\rho, a, b)  + o(1) } \right) \overset{(d)}{\to} 1,
\end{align*}
where $(a)$ holds because $\{e(i, C^*\backslash T)\}_{ i \in T}$ are mutually independent; $(b)$ follows by applying \prettyref{lmm:general}
and noticing that $g(\rho, 1-\rho, a, b, -\tau (1-2\rho) ) = \eta(\rho, a, b)$;
$(c$) is due to $1+x \le e^x$ for all $x \in \reals$;
$(d$) follows from the assumption that $\eta(\rho, a, b) < 1$. Thus $\prob{F_1} \to 1$.
Using the same argument as above, we can show $\prob{F_2} \to 1$. Thus the theorem follows.
\end{proof}

\begin{proof}[Proof of \prettyref{thm:SDP2}]
Suppose that the true clusters are $C^\ast_1$ and $C^\ast_2$ of cardinality $K_n$ and $n-K_n$, respectively.
One can easily check that \prettyref{lmm:SBMKKT} still holds with $\lambda^\ast= \tau \log n/n$, where $\tau= \frac{a-b}{\log a- \log b}$.
Choose the same $d_i^\ast$ in \prettyref{eq:di-SBM} as in the proof of \prettyref{thm:SBMSharp_unbalanced}. It
suffices to show for any $0 \le K_n \le n$,
\begin{equation}
\prob{\inf_{x \Perp \sigma^\ast, \|x\|_2=1} x^\top S^\ast x  > 0} \geq 1- n^{-\Omega(1) }.	\label{eq:PSDcheck_unknown}
\end{equation}

First, consider the case $K_n=0$ or $n$ where $Y^*=\allones$ and the graph is simply $\calG(n,p)$. Then for $i \in [n]$, $\sum_{j} A_{ij}\sigma^\ast_i \sigma^\ast_j \sim \Binom(n-1 ,\frac{a\log n}{n})$.
Recall that $\tau= \frac{a-b}{\log a - \log b}$ and notice that in this case, $d_i^\ast=\sum_{j} A_{ij}\sigma^\ast_i \sigma^\ast_j  - \tau \log n$.
It follows from \prettyref{lmm:binomialmaxminconcentration} that
\begin{align*}
 \prob{d^\ast_i \le \frac{\log n}{\log \log n}} =  \prob{\sum_{j} A_{ij}\sigma^\ast_i \sigma^\ast_j \le  \tau \log n + \frac{\log n}{\log \log n} }
\le  n^{-\eta(0, a, b)+o(1)} \le  n^{-\eta(1/2, a, b)+o(1)},
\end{align*}
where $\eta(0, a, b)= a- \tau \log (\eexp a/\tau)$ and the last inequality follows from \prettyref{lmm:eta_behavior} in \prettyref{app:eta}.
By the union bound,
\begin{align*}
\prob{ \min_{i \in [n]} d^\ast_i \ge \frac{\log n}{\log \log n} }  \geq 1-n^{ 1-\eta(1/2, a, b) }  \ge 1-n^{-\Omega(1) },
\end{align*}
where the last inequality holds because $\eta(1/2,a,b) = \frac{1}{2} (\sqrt{a}-\sqrt{b})^2>1$ by assumption.
Moreover, since $\sigma^*=\pm \ones$,
any $x $ such that $x \perp \sigma^\ast$ satisfies $x^\top \allones x=0$.
It follows from \prettyref{eq:SBMPSDCheck_unbalanced} that
\begin{align*}
x^\top S^\ast x = x^\top D^\ast x + p - x^\top (A-\expect{A})x \ge \min_{i \in [n]} d_i + p - \|A-\expect{A}\|.
\end{align*}
By \cite[Theorem 5]{HajekWuXuSDP14}, $\Prob[\|  A - \expect{A}\| \leq c' \sqrt{\log n}] \geq 1 -o(1)$
 for a positive constant $c'$ depending only on $a$ and thus the desired \prettyref{eq:PSDcheck_unknown} follows.

Next, we consider the case $1 \le K_n \le n-1$.
For $i \in C_1$, $\sum_{j} A_{ij}\sigma_i \sigma_j$ is stochastically larger than $X-R-1$, where
$X \sim \Binom(K_n ,\frac{a\log n}{n})$ and $R \sim \Binom(n-K_n,\frac{b\log n}{n})$.
Let $\rho_n=\frac{K_n}{n} \in (0,1)$ and
$t_n= \tau (1-2\rho_n) \log n - \frac{\log n}{\log \log n} - 1$.
Applying the non-asymptotic upper bound in \prettyref{lmm:general} yields
\begin{align*}
\prob{\sum_{j} A_{ij}\sigma_i \sigma_j \le - \lambda^\ast (n-2K_n)  + \frac{\log n}{\log \log n} }   \le \prob{X-R \le - t_n}   \le n^{-g(\rho_n, 1-\rho_n, a, b, -\frac{t_n}{\log n})}.
\end{align*}
We proceed to show that $g(\rho_n, 1-\rho_n, a, b, -\frac{t_n}{\log n}) \geq \eta(1/2, a, b)+o(1)$.
First note that 
$$
\frac{\partial g}{\partial t}(\rho_1,\rho_2,a,b,t) = - \frac{1}{2} \log \frac{a \rho_1 (\sqrt{4 a b \rho_1 \rho_2+t ^2}-t )}{b \rho_2 (\sqrt{4 a b \rho_1 \rho_2+t ^2}+t )}
$$ 
and $-\frac{t_n}{\log n} = \tau (\rho_n-\bar \rho_n)-\epsilon_n$, where $\epsilon_n= \frac{1}{\log \log n}+\frac{1}{\log n}$ and $\bar \rho_n \triangleq 1- \rho_n$. Furthermore, for any fixed $a,b>0$,
\begin{equation}
\sup_{0<\rho<1} \sup_{0 \le \delta \leq \epsilon_n}\Big|\frac{\partial g}{\partial \tau}(\rho,\bar \rho,a,b,\tau(\rho-\bar \rho) - \delta) \Big|	\leq F(a,b),
	\label{eq:pg}
\end{equation}
for some function $F(a, b)$ independent of $n$ and $\bar \rho \triangleq 1- \rho$.
To see this, let $t=\tau (\rho-\bar \rho) - \delta$, where $0 \leq \delta\leq  \epsilon_n$.
First consider the case of $t < 0$. Then $\rho \leq \frac{1}{2} + o(1) \leq 2/3$. Hence $1/3 \leq \bar \rho \leq 1$. Then
\begin{equation}
	\frac{\rho (\sqrt{4 a b \rho \bar{\rho}+\tau ^2}-\tau )}{\bar{\rho} (\sqrt{4 a b \rho \bar{\rho}+\tau ^2}+\tau )}=\frac{(\sqrt{a b + (\lambda^2-ab) (\bar \rho - \rho)^2 + \delta^2+2\delta\lambda (\bar \rho-\rho)} + (\bar \rho-\rho) \lambda + \delta)^2}{4 a b \bar{\rho}^2}.
	\label{eq:aaa}
\end{equation}
Since $\sqrt{ab} < \tau < \frac{a+b}{2}$ whenever $a \neq b$ and $\bar \rho - \rho \in [-1,1]$, both the numerator and denominator in \prettyref{eq:aaa} are bounded away from zero and infinity uniformly in $\rho$. The case of $t>0$ follows analogously.
Therefore
\begin{align}
	  g\Big(\rho_n, 1-\rho_n, a, b, -\frac{t_n}{\log n}\Big)	
\geq & ~ 	g\Big(\rho_n, 1-\rho_n, a, b,  - \tau (1-2 \rho_n )\Big) - F(a,b) \epsilon_n	\label{eq:g1}\\
= & ~ 	 \eta\Big(\rho_n, a, b\Big) - F(a,b) \epsilon_n	\label{eq:g2}\\
\geq & ~ 	\eta\Big(\frac{1}{2}, a, b\Big) - F(a,b) \epsilon_n	\label{eq:g3}
\end{align}
where \prettyref{eq:g1} is due to \prettyref{eq:pg}, \prettyref{eq:g2} is by definition of $\eta$, and \prettyref{eq:g3} follows from \prettyref{lmm:eta_behavior}.

Similarly, for $i \in C_2$, $A_{ij}\sigma_i \sigma_j$ is stochastically larger than $X-R-1$, where
$X \sim \Binom(n-K_n,\frac{a\log n}{n})$ and $R \sim \Binom( K_n,\frac{b\log n}{n})$.
Let $k'_n= \tau (1-2\rho_n) \log n + \frac{\log n}{\log \log n} +1$.
It follows from \prettyref{lmm:general} that
\begin{align*}
\prob{\sum_{j} A_{ij}\sigma_i \sigma_j \le  \lambda^\ast (n-2K_n) \log n/ n + \frac{\log n}{\log \log n} }
  \le &~\prob{\sum_{j} A_{ij}\sigma_i \sigma_j \le k'_n }  \\
  \le  &~ n^{-g(1-\rho_n, \rho_n, a, b, k'_n/\log n)} \leq n^{-\eta(1/2, a, b)+o(1)},
\end{align*}
where the last inequality follows from the same steps as in \prettyref{eq:g1} -- \prettyref{eq:g3}.
It follows from the definition of $d^\ast_i$ that
\begin{align}
\min_{i\in[n]} \prob{d^\ast_i \ge \frac{\log n}{\log \log n}}
\ge 1-n^{-\eta(1/2,a, b)+o(1)}.
\label{eq:dii}
\end{align}
Applying the union bound gives
\begin{align*}
\prob{ \min_{i \in [n] } d^\ast_i \ge \frac{\log n}{\log \log n} } \geq 1-n^{ 1-\eta(1/2, a, b)+o(1) }  \geq 1-n^{-\Omega(1) },
\end{align*}
where the last inequality follows from the assumption that $\eta(1/2,a,b)= \frac{1}{2} (\sqrt{a}-\sqrt{b})^2>1$.
Furthermore one can verify that $\inf_{x \perp \sigma^\ast} t_2(x) \ge p - O(\sqrt{\log n})$ with high probability, where the functions $t_1$ and $t_2$ are defined in \prettyref{eq:t1}--\prettyref{eq:t2}.
We divide the remaining analysis into the  two cases:

{\bf Case 1:} $K_n \le n/\sqrt{\log n}$ or $n-K_n \le n /\sqrt{ \log n}$. Notice that $\tau \le \frac{a+b}{2}$ and recall the definition of $\check{x}$
in the proof of  \prettyref{thm:SBMSharp_unbalanced}. Then
\begin{align*}
\left(\frac{p+q}{2}  - \lambda^\ast \right ) \check{x}^\top \allones \check{x} = 2(a+b - 2\tau) K_n (n-K_n) \log n /n^2 \le 2(a+b -2\tau) \sqrt{\log n}.
\end{align*}
It follows that with high probability,
\begin{align*}
\inf_{x \perp \sigma^\ast, \|x\|_2=1}  t_1(x) \ge \min_i d^\ast_i  - \left(\frac{p+q}{2}  - \lambda^\ast \right ) \check{x}^\top \allones \check{x}   \ge  \frac{\log n}{\log \log n} - O(\sqrt{\log n}).
\end{align*}
Thus the desired \prettyref{eq:PSDcheck_unknown} follows by the same argument used in the proof of  \prettyref{thm:SBMSharp_unbalanced}.

{\bf Case 2:} $K_n \ge n /\sqrt{\log n}$ and $n-K_n \ge n/\sqrt{ \log n}$. In this case, $\rho_n \ge 1/\sqrt{\log n}$.
The proof is exactly the same as in the proof of \prettyref{thm:SBMSharp_unbalanced} except that
$\check{x}^\top D \check{x}$ and $\| (D- \expect{D} ) \check{x} \| $ are Lipschitz continuous in $A$ with Lipschitz constants
bounded by $O(\log^{1/4} n)$. Thus for any constant $c$, by Talagrand's inequality, there exists a constant $c'>0$ such that
\begin{align*}
\prob{  \check{x}^\top D^* \check{x}  - \eexpect{\check{x}^\top D^* \check{x} } \ge - c'   \log^{3/4} n  }  & \ge 1 - n^{-c}. \\
\prob{ \|(D-E[D]) \check{x} \|  - \eexpect{\|(D-E[D]) \check{x} \|}  \le c'  \log^{3/4} n }  & \ge 1 - n^{-c}.
\end{align*}
Then the desired \prettyref{eq:PSDcheck_unknown} follows by the same argument used in the proof of  \prettyref{thm:SBMSharp_unbalanced}.

\end{proof}

\subsection{Proofs for \prettyref{sec:rary}: Multiple equal-sized clusters}
Theorem \ref{thm:SBMSharp} is proved after three lemmas are given.
 For $k\in [r]$, denote by $C_k \subset [n]$ the support of the $k^\Th$ cluster.
For a set $T$ of vertices, let  $e(i,T) \triangleq  \sum_{j \in T} A_{ij}$
and $e(T', T)= \sum_{i \in T'} e(i, T).$
Let $k(i)$ denote the index of the cluster containing vertex $i.$
Denote the number of neighbors of $i$ in its own cluster by  $s_i = e(i,C_{k(i)})$
and the maximum number of neighbors of $i$ in other clusters by
 $r_i = \max_{k'\neq k(i)} e(i,C_{k'}).$
 \begin{lemma}
\begin{align*}
\prob{ \min_{i\in[n]} ( s_i - r_i  ) \le \log n / \log \log n } \le  r n^{ 1-(\sqrt{a}-\sqrt{b})^2 /r +o(1) }.
\end{align*}
	\label{lmm:fri}
\end{lemma}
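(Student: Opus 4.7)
The plan is to prove the bound via a two-level union bound — first over the $n$ vertices and then over the $r-1$ competing clusters — reducing each resulting term to a single two-sample binomial lower-tail estimate handled directly by \prettyref{lmm:general}.

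First, fix a vertex $i$ with cluster index $k = k(i)$. Under the stochastic block model with equal clusters of size $K = n/r$, the counts $s_i = e(i, C_k) \sim \Binom(K-1,\, a\log n/n)$ and $\{ e(i, C_{k'}) \}_{k' \neq k}$, each distributed as $\Binom(K,\, b\log n/n)$, are mutually independent. Since $r_i = \max_{k' \neq k} e(i, C_{k'})$, one has the decomposition
\[
\{ s_i - r_i \le t \} \;=\; \bigcup_{k' \neq k} \{ s_i - e(i, C_{k'}) \le t \},
\]
and hence by the union bound $\prob{s_i - r_i \le t} \le (r-1) \max_{k' \neq k} \prob{s_i - e(i, C_{k'}) \le t}$.

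Next, apply \prettyref{lmm:general} to each term on the right with $\rho_1 = \rho_2 = 1/r$ and threshold $t = \log n / \log \log n = o(\log n)$, so that the effective coefficient $\alpha$ in that lemma is $0$. The hypothesis $\alpha \le a\rho_1 - b\rho_2 = (a-b)/r$ is satisfied because $a > b$. Evaluating the rate at $\alpha = 0$ gives $\gamma = 2\sqrt{ab}/r$ and
\[
g(1/r,\, 1/r,\, a,\, b,\, 0) \;=\; \frac{a+b}{r} - \frac{2\sqrt{ab}}{r} \;=\; \frac{(\sqrt{a}-\sqrt{b})^2}{r},
\]
so each single probability is at most $n^{-(\sqrt{a}-\sqrt{b})^2/r + o(1)}$. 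Combining this with the factor $r-1$ from the first step and taking a union bound over all $n$ vertices yields
\[
\prob{ \min_{i \in [n]}(s_i - r_i) \le \log n / \log \log n } \;\le\; n(r-1)\, n^{-(\sqrt{a}-\sqrt{b})^2/r + o(1)} \;\le\; r\, n^{\,1 - (\sqrt{a}-\sqrt{b})^2/r + o(1)},
\]
as claimed.

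No substantial obstacle is anticipated: the bound follows directly from \prettyref{lmm:general} once the tail event is decomposed into single-cluster comparisons. The only minor bookkeeping point is that $s_i$ is $\Binom(K-1, \cdot)$ rather than $\Binom(K, \cdot)$, which shifts its mean by $a\log n/n = o(1)$ and therefore does not affect the exponent in the large-deviation estimate.
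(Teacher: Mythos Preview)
Your proposal is correct and follows essentially the same two-level union-bound approach as the paper: decompose $\{s_i - r_i \le t\}$ into $r-1$ single-cluster comparisons, bound each by the $(\sqrt{a}-\sqrt{b})^2/r$ exponent, then union-bound over $i\in[n]$. The only cosmetic difference is that the paper cites \cite{Abbe14} for the per-pair tail bound, whereas you derive it directly from the paper's own \prettyref{lmm:general} with $\rho_1=\rho_2=1/r$ and $\alpha=0$; your route is slightly more self-contained and the computation of $g(1/r,1/r,a,b,0)=(\sqrt{a}-\sqrt{b})^2/r$ is correct.
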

\begin{proof}
Notice that $s_i \sim \Binom(K, p)$ and for $k' \neq k(i)$, $e(i, C_{k'}) \sim \Binom(K,q)$.
It is shown in \cite{Abbe14} that
\begin{align*}
\prob{ s_i - e(i, C_{k'} ) \le \log n / \log \log n } \le n^{ -(\sqrt{a}-\sqrt{b})^2 /r +o(1) }.
\end{align*}
It follows from the union bound that
\begin{align*}
\prob{ s_i - r_i  \le \log n / \log \log n } \le  r n^{ -(\sqrt{a}-\sqrt{b})^2 /r +o(1) }.
\end{align*}
Applying the union bound over all possible vertices, we complete the proof.
\end{proof}

\begin{lemma}
There exists a  constant $c>0$ depending only on $b$ and $r$ such that
 \begin{align*}
\prob{ \max_{k \in [r] } \frac{1}{K} \sum_{i \in C_k } r_i \le K q + c \sqrt{\log n }  }  \ge 1- r n^{-2}.
\end{align*}
\label{lmm:ri}
\end{lemma}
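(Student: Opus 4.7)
The plan is to exploit the fact that for $i \in C_k$, the random variables $\{r_i : i \in C_k\}$ are mutually independent: since for $k' \neq k$ the edge counts $e(i, C_{k'})$ only depend on edges between the single vertex $i$ and the disjoint vertex set $C_{k'}$, no edge is shared between any two distinct $i, i' \in C_k$. Moreover, each $r_i = \max_{k' \neq k} e(i, C_{k'})$ is the maximum of $r-1$ independent $\Binom(K,q)$ random variables. This independence will let me reduce the claim to a one-dimensional Bernstein-type concentration for a sum of $K$ independent nonnegative random variables, followed by a union bound over $k \in [r]$.

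First I would control $\Expect[r_i]$. Since each $e(i, C_{k'})$ has mean $Kq = b\log n + O(1)$ and variance at most $Kq$, the standard Chernoff bound plus a union bound over the $r-1$ choices of $k'$ yields $\Expect[r_i] \le Kq + c_1 \sqrt{\log n}$ for some constant $c_1 = c_1(b,r)$. For the variance, $|r_i - Kq| \le \max_{k'\neq k}|e(i,C_{k'}) - Kq|$, hence $\Expect[(r_i - Kq)^2] \le \sum_{k' \neq k} \Expect[(e(i,C_{k'}) - Kq)^2] = (r-1) Kq(1-q) = O(\log n)$, so also $\var(r_i) = O(\log n)$.

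Second, direct Bernstein fails because the trivial bound $r_i \leq K$ makes the Poisson-regime term dominate and only delivers a $\exp(-\Omega(\sqrt{\log n}))$ tail. To fix this, I would truncate. Choose a constant $c_2 = c_2(b,r)$ large enough that the Chernoff bound on $\Binom(K,q)$ gives $\Prob\{e(i,C_{k'}) \ge c_2 \log n\} \le n^{-3}$, and define $\tilde r_i \triangleq r_i \wedge (c_2 \log n)$. A union bound over $i \in C_k$ and $k' \neq k$ yields that, except on an event of probability $O(n^{-2})$, one has $\tilde r_i = r_i$ for every $i \in C_k$, so $\sum_{i \in C_k} \tilde r_i = \sum_{i \in C_k} r_i$. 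The truncation does not inflate the mean or variance of $r_i$ by more than constants, so the bounds from the previous paragraph transfer to $\tilde r_i$.

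Third, I would apply Bernstein's inequality to the independent sum $\sum_{i \in C_k}\tilde r_i$ with per-summand bound $M = c_2 \log n$ and total variance $\sigma^2 = O(K\log n)$. Setting the deviation to $tK\sqrt{\log n}$ for a constant $t$, the exponent becomes $\frac{t^2 K^2 \log n /2}{O(K\log n) + c_2 t K (\log n)^{3/2}/3} = \Omega(K/\sqrt{\log n})$, which is $\omega(\log n)$ since $K = \Theta(n)$; hence this tail is $n^{-\omega(1)}$. Combining this with $\Expect[\tilde r_i] \le Kq + c_1\sqrt{\log n}$ and choosing $c = c(b,r)$ appropriately, I obtain $\Prob\{\frac{1}{K}\sum_{i \in C_k} r_i > Kq + c\sqrt{\log n}\} \le n^{-2}$ for each fixed $k$, and the desired bound follows by a union bound over the $r$ clusters.

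The main obstacle is the truncation step: without it, the sub-exponential character of $r_i$ leaves Bernstein in the wrong regime and one cannot reach the required $n^{-2}$ probability with a deviation scaling only as $\sqrt{\log n}$. Everything else is routine moment estimates and Chernoff/Bernstein bookkeeping.
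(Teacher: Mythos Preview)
Your proposal is correct, but it takes a different route from the paper's proof.

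The paper also begins by showing $\Expect[r_i]\le Kq+O(\sqrt{\log n})$ (via Bernstein on each $e(i,C_{k'})$ and integrating the tail, essentially your first step). For the concentration of $\sum_{i\in C_k} r_i$, however, it does \emph{not} exploit the independence of the $r_i$'s across $i\in C_k$. Instead it views $\sum_{i\in C_k} r_i$ as a function of the $(r-1)K^2$ edge indicators $\{A_{ij}:i\in C_k,\,j\notin C_k\}$ and observes that flipping any single edge changes the sum by at most $1$. McDiarmid's bounded-differences inequality then gives
\[
\Prob\Big\{\sum_{i\in C_k} r_i - \Expect\Big[\sum_{i\in C_k} r_i\Big] \ge K\sqrt{r\log n}\Big\}\le \exp\!\Big(-\frac{2K^2 r\log n}{(r-1)K^2}\Big)\le n^{-2},
\]
so no truncation is needed and the constant in front of $n^{-2}$ comes out exactly.

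The trade-off: your approach (truncate $r_i$ at $c_2\log n$, then Bernstein on the i.i.d.\ sum) is perfectly valid and yields a much stronger tail $n^{-\omega(1)}$ for the Bernstein part, but it costs you the extra truncation step and the associated bookkeeping with the $O(n^{-2})$ truncation-failure event. The paper's McDiarmid argument is shorter and delivers exactly the $n^{-2}$ in one line, at the price of a weaker (but sufficient) exponent. Both arguments rely on the same mean estimate, and both reach the stated lemma after a union bound over $k\in[r]$.
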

\begin{proof}
We first show $\expect{r_i} \le Kq + O(\sqrt{\log n}) $.
Let $t_0 = \sqrt{\log n} $. Then
\begin{align*}
\expect{r_i} & = \expect{ \max_{k' \neq k(i) } e(i, C_{k'})} \\
& = \int_{0}^\infty \prob{  \max_{k' \neq k(i) } e(i, C_{k'}) \ge t } \diff t \\
& \le  \int_{0}^\infty \left( r \prob{  e( i, S_{k'} ) \ge t }  \wedge 1 \right) \diff t \\
& \le Kq+ t_0 + r \int_{ t_0  }^\infty  \prob{  e(i, S_{k'}) -Kq \ge t } \diff t \\
& \overset{(a)}{\le} Kq+ t_0 + r  \int_{ t_0 }^\infty  \exp \left( - \frac{t^2 }{2Kq + 2t /3 } \right) \diff t
\end{align*}
where $(a)$ follows from Bernstein's inequality. Furthermore,
\begin{align*}
\int_{t_0 }^\infty  \exp \left( - \frac{t^2 }{2Kq + 2t /3 } \right) \diff t & \le r \int_{t_0}^{Kq} \eexp^{-3 t^2 / (8Kq) }  \diff t +  \int_{Kq}^{\infty} \eexp^{-3 t /8} \diff t \\
 & \le  \int_{t_0}^{Kq} \eexp^{-3 t_0 t / (8Kq) } \diff t + \frac{8}{3} \eexp^{-3 Kq /8} \\
 & \le \frac{8Kq}{3t_0} \eexp^{-3 t_0^2 / (8Kq) } + \frac{8}{3} \eexp^{-3 Kq /8} =O(\sqrt{\log n}).
\end{align*}
Thus $\expect{r_i} \le Kq +O(\sqrt{\log n})$.
Denote $\sum_{i \in C_k} r_i \triangleq g( A_{ij}, i \in C_k, j \notin C_k )$.
Then $g$ satisfies the bounded difference inequality, \ie,
for all $i =1, 2, \ldots, m$ with $m=(r-1)K^2$,
\begin{align*}
\sup_{x_1, \ldots, x_{m}, x'_i }  | g( x_1, \ldots, x_{i-1}, x_i, x_{i+1}, \ldots, x_{m} ) - g (x_1, \ldots, x_{i-1}, x'_i, x_{i+1}, \ldots, x_{m} ) | \le 1.
\end{align*}
It follows from McDiarmid's inequality that
\begin{align*}
\prob{\sum_{i \in C_k} r_i - \sum_{i \in C_k } \expect{r_i}  \ge K \sqrt{ r \log n}  } \le \exp \left( - \frac{2K^2 r \log n }{K^2 r } \right)  = n^{-2}.
\end{align*}
Thus, with probability at most $n^{-2}$,  $\frac{1}{K} \sum_{i \in C_k} r_i \ge Kq + O(\sqrt{\log n}) $.
The lemma follows in view of the union bound.
\end{proof}


The following lemma provides a deterministic sufficient condition for the success
of \SDP \prettyref{eq:SDP_RZ} in the case $a>b$.
\begin{lemma}  \label{lmm:SDP_R}
Suppose there exist $D^*= \mbox{diag}\{d_i^*\}$ with $d_i^*>0$ for all $i$,
$B^* \in  {\cal S}^n$ with $B^*\geq 0$ and $B_{ij}>0$ whenever $i$ and $j$ are in distinct clusters,  and $\lambda^*\in \reals^n$
such that $S^* \triangleq   D^* - B^* -A + \lambda^*\mathbf{1}^\top +  \mathbf{1}( \lambda^\ast)^\top$ satisfies $S^*\succeq 0$,
and
\begin{eqnarray}
S^*\xi^\ast_k &=&0~~~k \in [r]  \\
B_{ij}^*Z_{ij}^* &=& 0  ~~~~ i,j \in [n]
\end{eqnarray}
Then $\widehat{Z}_{SDP}=Z^*$ is the unique solution to \prettyref{eq:SDP_RZ}.
\end{lemma}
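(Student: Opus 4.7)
I would follow the standard dual-certificate template already used for \prettyref{lmm:SBMKKT}, adapted to accommodate the additional constraints $Z_{ij} \ge 0$ and $Z\mathbf{1} = K\mathbf{1}$ in \prettyref{eq:SDP_RZ}. First I would write the Lagrangian, introducing $S \succeq 0$ for $Z \succeq 0$, a diagonal $D$ for $Z_{ii}=1$, an entrywise $B \ge 0$ for $Z_{ij}\ge 0$, and $\lambda \in \reals^n$ for $Z\mathbf{1}=K\mathbf{1}$, symmetrizing the coupling with $\lambda$ since $Z$ is symmetric. Stationarity in $Z$ yields exactly the identity $S^\ast = D^\ast - B^\ast - A + \lambda^\ast\mathbf{1}^\top + \mathbf{1}(\lambda^\ast)^\top$ assumed in the lemma (up to a harmless rescaling of $\lambda^\ast$ absorbing a factor of $1/2$ from the symmetrization).

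For any $Z$ feasible for \prettyref{eq:SDP_RZ}, weak duality then gives
\begin{align*}
\iprod{A}{Z} = L(Z,S^\ast,D^\ast,B^\ast,\lambda^\ast) - \iprod{S^\ast}{Z} - \iprod{B^\ast}{Z} \le L(Z,S^\ast,D^\ast,B^\ast,\lambda^\ast),
\end{align*}
and the right-hand side collapses to a constant depending only on the dual variables (namely $\iprod{D^\ast}{\identity}$ plus the $\lambda^\ast$-boundary terms) once $S^\ast$ is substituted. Evaluating at $Z^\ast$ and using $S^\ast \xi_k^\ast=0$ (so $\iprod{S^\ast}{Z^\ast}=\sum_k (\xi_k^\ast)^\top S^\ast \xi_k^\ast = 0$) and $B^\ast_{ij}Z^\ast_{ij}=0$ (so $\iprod{B^\ast}{Z^\ast}=0$), both slack terms vanish and I obtain $\iprod{A}{Z^\ast} = L(Z^\ast,\ldots) \ge \iprod{A}{Z}$, which proves optimality of $Z^\ast$.

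For uniqueness, let $\tilde Z$ be any other optimizer. Tracking equality in the chain above forces $\iprod{S^\ast}{\tilde Z}=0$ and $\iprod{B^\ast}{\tilde Z}=0$. From $S^\ast,\tilde Z \succeq 0$ with $\iprod{S^\ast}{\tilde Z}=0$ I conclude $S^\ast \tilde Z=0$, so the column space of $\tilde Z$ lies in $\mathrm{null}(S^\ast)$, which contains each $\xi_k^\ast$. From $B^\ast,\tilde Z \ge 0$ entrywise and $\iprod{B^\ast}{\tilde Z}=0$, together with the hypothesis $B^\ast_{ij}>0$ whenever $i,j$ belong to distinct clusters, I get $\tilde Z_{ij}=0$ across clusters, so $\tilde Z$ is block-diagonal with one $K \times K$ block per cluster. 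Each block is PSD, has all-ones diagonal, nonnegative entries, and rows summing to $K$ (from $\tilde Z\mathbf{1}=K\mathbf{1}$). The row-sum condition makes $\mathbf{1}$ an eigenvector with eigenvalue $K$; since the trace of the block is $K$ and its eigenvalues are nonnegative, all other eigenvalues must vanish, so the block is rank one and equals $\mathbf{1}\mathbf{1}^\top$. Summing over clusters gives $\tilde Z = \sum_k \xi_k^\ast(\xi_k^\ast)^\top = Z^\ast$.

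The argument is essentially a routine KKT verification, and I expect no serious obstacle inside the lemma itself; the only mildly delicate step is the uniqueness claim, where one must combine complementary slackness for $B^\ast$, nonnegativity, and the row-sum constraint to pin down the block-diagonal structure. The real work lies outside this lemma: in the companion proof of \prettyref{thm:SBMSharp} one must construct a random triple $(D^\ast,B^\ast,\lambda^\ast)$ simultaneously satisfying $S^\ast \succeq 0$, the nullspace condition $S^\ast\xi_k^\ast=0$, and $B^\ast_{ij}>0$ off the diagonal blocks, with high probability over the random graph.
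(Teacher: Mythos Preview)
Your proposal is correct and follows essentially the same dual-certificate route as the paper's proof: both substitute the identity $A = D^* - B^* - S^* + \lambda^*\mathbf{1}^\top + \mathbf{1}(\lambda^*)^\top$ and use complementary slackness ($\langle S^*, Z^*\rangle = \langle B^*, Z^*\rangle = 0$) to verify optimality, then deduce $\tilde Z_{ij}=0$ across clusters from $B^*_{ij}>0$ there.

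The only noteworthy difference is in the final uniqueness step. You argue spectrally: each $K\times K$ diagonal block of $\tilde Z$ is PSD with trace $K$ and has $\mathbf{1}$ as an eigenvector with eigenvalue $K$, forcing the block to be rank one and hence equal to $\mathbf{1}\mathbf{1}^\top$. The paper instead uses the entrywise bound $\tilde Z_{ij}\le \sqrt{\tilde Z_{ii}\tilde Z_{jj}}\le 1$ (from $\tilde Z\succeq 0$ and unit diagonal) together with the row-sum constraint $\tilde Z\mathbf{1}=K\mathbf{1}$ to force all within-cluster entries to equal $1$. Both arguments are short and self-contained; yours is arguably cleaner in that it never invokes the $Z_{ij}\ge 0$ constraint for the in-cluster entries, while the paper's is slightly more elementary. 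Neither proof actually uses the hypothesis $d_i^*>0$, and your aside about $\mathrm{null}(S^*)$ containing the $\xi_k^*$ is correct but not needed for your argument either.
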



\begin{proof} 
Let $H=Z-Z^*,$ where $Z$ is an arbitrary feasible matrix for the SDP \eqref{eq:SDP_RZ}.
Since  $Z$ and $Z^*$ are both feasible,  $\langle D^* , H \rangle = \langle \lambda^*\mathbf{1}^\top , H\rangle =
 \langle  \mathbf{1}( \lambda^\ast)^\top, H \rangle = 0.$
Since $A=D^* - B^* -S^* + \lambda^*\mathbf{1}^\top +  \mathbf{1}( \lambda^\ast)^\top,$
$$
\langle A, H \rangle =  - \langle B^*, H \rangle - \langle S^*,  H \rangle
$$
and the following hold:
\begin{itemize}
\item
$\langle B^*, H \rangle \geq 0,$ with equality if and only if $\langle B^*,Z\rangle=0.$   That is because
$B^*\geq 0,$  $Z\geq 0,$ and
$\langle B^*, Z^*  \rangle = 0.$
\item
$\langle S^*, H \rangle \geq 0,$ with equality if and only if  $\langle S^*, Z \rangle =0.$  That is because
$\langle S^*, Z \rangle \geq 0$ (because $S^*, Z \succeq 0$) and $\langle S^*, Z ^* \rangle = 0$ (because
$Z^*= \sum_{k=1}^r \xi^*_k (\xi^*_k)^\top $  and $S^*\xi^*_k=0$ for all $k \in [r]$).
\end{itemize}
Thus, $\langle A, H \rangle \leq 0,$  so that $Z^*$ is a solution to the SDP.

To prove that $Z^*$ is the unique solution, restrict attention to the case that $Z$ is another solution to the SDP.
We need to show $Z=Z^*.$   Since both $Z$ and $Z^*$ are solutions, $\langle A, H \rangle=0,$ so that
$  \langle B^*, H \rangle = \langle S^*,  H \rangle = 0.$   Therefore, by the above two
points:  $\langle B^*,Z\rangle=\langle S^*, Z \rangle = 0.$   For each $i$, $B^*_{i,j}=0$ if and only if vertices $i$ and $j$
are in the same cluster.  Also, the fact  $Z\succeq 0$ and $Z_{ii}\leq 1$ for all $i$ implies $Z_{ij}\leq 1$ for all $i,j.$
Thus, the only way $Z$ can meet the constraint $Z \mathbf{1} = K\mathbf{1}$ is that
$Z_{ij}=1$ whenever $i$ and $j$ are in the same cluster.  Therefore $Z=Z^*$ and hence $Z^*$ is the unique solution.
\end{proof}

\begin{proof}[Proof of \prettyref{thm:SBMSharp}]

We now begin the proof of Theorem  \ref{thm:SBMSharp}.
Let $E$ denote the subspace spanned by vectors $\{\xi^\ast_k\}_{k \in [r]}$, \ie, $E = \text{span}(\xi^\ast_k: k\in [r] ).$
Ultimately, we will show that
\begin{equation}
\prob{\inf_{x \Perp E, \|x\|_2=1} x^\top S^\ast x  > 0} \to 1.	
	\label{eq:lambda2}
\end{equation}
Note that $\expect{A}= (p-q) Z^\ast + q \allones - p \identity$ and $Z^\ast= \sum_{k \in [r]}  \xi^\ast_k (\xi^\ast_k)^\top$.
Thus for any $x$ such that $x \Perp E$ and $\|x\|_2=1$,
\begin{align}
x^\top S^\ast x &= x^\top D^\ast x- x^\top \expect{A} x  - x^\top B^\ast x + 2 x^\top \lambda^\ast  \mathbf{1}^\top x - x^\top  \left( A - \expect{A} \right) x \nonumber  \\
&  \overset{(a)}{=} x^\top D^\ast  x  - (p-q) x^\top Z^\ast x - q  x^\top \allones x + p  -x^\top B^\ast x -
x^\top  \left( A- \expect{A} \right) x  \nonumber \\
& \overset{(b)}{=} x^\top D^\ast  x  + p - x^\top B^\ast x - x^\top  \left( A- \expect{A} \right) x  \nonumber \\
& \ge x^\top D^\ast  x + p - x^\top B^\ast x  - \|  A- \expect{A}  \|
 \label{eq:SBMPSDCheck}
\end{align}
where $(a)$ holds because $x \Perp \mathbf{1}$; $(b)$ holds  due to $\iprod{x}{\xi^\ast_k}=0$  for all $k \in [r]$ and $x \Perp \mathbf{1}$.
In view of \cite[Theorem 5]{HajekWuXuSDP14}, $\|  A - \expect{A}\| \leq c' \sqrt{\log n} $
with high probability for a positive constant $c'$ depending only on $a$. 
To bounds \prettyref{eq:SBMPSDCheck} from below, it is convenient to choose $B^\ast$ such that
\begin{align}
x^\top B^\ast x=0, \quad \forall x  \Perp E. \label{eq:orthogonal}
\end{align}
Since $B^*$ is assumed to be symmetric,
\prettyref{eq:orthogonal} is  equivalent to requiring that
\begin{align}
B^\ast_{C_k \times C_{k'} } (i,j) =  B^\ast_{C_{k'}  \times C_{k} } (j,i)   &=   y^\ast_{kk'}(i) + z^\ast_{kk'} (j), \quad \forall 1 \le k< k' \le r  \label{eq:Bconstraint}
\end{align}
for some $y_{kk'} ^\ast$ and $z_{kk'} ^\ast$.    Next we ensure that $S^*\xi^\ast_k=0$ for $k \in [r]$.
Equivalently, we want to ensure that for any distinct $k, k' \in [r]$ and any $i \in C_k$,
 \begin{eqnarray}
 d_i^\ast = e(i, C_k) - \lambda^\ast_i  K - \sum_{i \in C_k} \lambda^\ast_i,   \label{eq:defd}  \\
 e(i,C_{k'}) + \sum_{j\in C_{k'}} (B^\ast_{ij} - \lambda^\ast_j) = K\lambda^\ast_i.   \label{eq:eij}
 \end{eqnarray}
Requiring \eqref{eq:eij} for all distinct $k, k' \in [r]$ and all $i\in C_k$  is equivalent to requiring
\begin{eqnarray}    \label{eq:eji}
  e(j,C_k) + \sum_{i \in C_k} (B^\ast_{ji} - \lambda^\ast_i) = K\lambda^\ast_j
\end{eqnarray}
for all distinct $k,k'\in[r]$ and all $j\in C_{k'}$  (by swapping $i$ for $j$ and $k$ for $k'$).
Moreover, it is equivalent to checking both \eqref{eq:eij} and \eqref{eq:eji} under the
additional assumption that $k < k'.$   Substituting  \eqref{eq:Bconstraint} into
\eqref{eq:eij} and \eqref{eq:eji} gives that for all $k,k' \in [r]$ with $k< k',$
\begin{eqnarray}
e(i, C_{k'} ) + K y^\ast_{kk'}(i) + \sum_{j\in C_{k'}} ( z^\ast_{kk'}(j) - \lambda^\ast_j )= K\lambda^\ast_i ~~~\mbox{for } i\in C_k    \label{eq.eijji} \\
e(j,C_{k}) +  \sum_{i\in C_{k}} ( y^\ast_{kk'}(i) - \lambda^\ast_i)   + Kz^\ast_{kk'}(j)  = K\lambda^\ast_j  ~~~\mbox{for } j\in C_{k'}.    \label{eq.ejiij}
\end{eqnarray}

For $k < k'$ and $i\in C_k,j \in C_{k'}$, set
 \begin{align}
y^\ast_{kk'}(i) = & ~ \frac{1}{K} \pth{ r_i - e(i,C_{k'})} + u_{kk'}	 \\
z^\ast_{kk'}(j) = & ~ \frac{1}{K} \pth{ r_j- e(j,C_{k})} + u_{kk'}
\end{align}
and for $i\in C_k$,
\begin{align}
\lambda^\ast_i = \frac{1}{K} (r_i- \alpha_k).
\end{align}
where $u_{kk'}$ and  $\alpha_k$ are to be determined.
Equations \eqref{eq.eijji} and \eqref{eq.ejiij} both reduce to:
$$
\alpha_k + \alpha_{k'} = \frac{1}{K} e(C_k,C_{k'}) - 2K u_{kk'},
$$
(which must hold whenever $k < k'$) and \eqref{eq:defd} becomes
\[
d^\ast_i = e(i,C_k) - r_i + 2 \alpha_k - \frac{1}{K} \sum_{i \in C_k} r_i .
\]
In view of  \prettyref{lmm:fri} and the assumption that $\sqrt{a}-\sqrt{b} > \sqrt{r}$,
$\min_i (s_i - r_i)  \geq \log n/ \log \log n $ with high probability. By \prettyref{lmm:ri},
$\max_{k \in [r]} \frac{1}{K} \sum_{i \in C_k} r_i \le Kq + O(\sqrt{\log n} ) $ with high probability.
Finally set
\[
\alpha_k = \frac{1}{2} \pth{ Kq - \sqrt{\log n} }, \quad u_{cc'} = \frac{1}{2 K} \sth{\frac{e(C_k,C_{k'})}{K} - Kq + \sqrt{\log n} }.
\]
It follows from the definition that
\begin{align}
\prob{ \min_i d^\ast_i \ge \log n / \log \log n - O (\sqrt{\log n} ) }  \to 1. \label{eq:dlowerbound}
\end{align}
Thus, the desired
\prettyref{eq:lambda2} holds in view of \prettyref{eq:dlowerbound} and \prettyref{eq:SBMPSDCheck}.
Also, note that
$e(C_k, C_{k'} ) \sim \Binom(K^2, q)$.  For $X \sim \Binom(n, p_0)$ with $p_0 \in [0,1]$,
Chernoff's bound yields
\begin{align*}
\prob{ X \le (1-\epsilon) np_0} \le \eexp^{-\epsilon^2 n p_0 /2 }.
\end{align*}
It follows that
\begin{align*}
\prob{e(C_k, C_{k'} ) \le  K^2 q  - K \sqrt{\log n}/2 } \le  n^{-1/(8q) }.
\end{align*}
Applying the union bound, we have that with high probability,  $e(C_k, C_{k'}) > K^2 q  - K \sqrt{\log n} $ for all $1 \le k < k' \le r$.
Hence $y^\ast_{kk'}(i) > 0$ and $z^\ast_{kk'}(j) > 0$ for all $1 \le k < k' \le r$ and $i\in C_k$ and $j\in C_{k'} $ so that $B^\ast_{ij} > 0$ for all $i,j$ in
distinct clusters as desired.
\end{proof}

\begin{proof}[Proof of \prettyref{thm:converse}]
A necessary condition for exact recovery follows from the condition for two clusters.
If a genie were to reveal the membership of all clusters except for clusters 1 and 2, then the exact recovery
problem would be equivalent to recovering two equal sized clusters in a network with $n' = 2K=\frac{2n}{r}$ vertices.
And $p=\frac{(\frac{2a}{r}+o(1))  \log n'}{n'}.$    Similarly, $q=\frac{(\frac{2b}{r}+o(1))  \log n'}{n'}.$
Thus, if $\sqrt{ \frac{2a}{r}  } - \sqrt{  \frac{2b}{r}  } <  \sqrt{2},$
or equivalently if  $\sqrt{a} - \sqrt{b} < \sqrt{r}$,  the converse recovery result of  \cite{Abbe14} implies recovery is
not possible for $r$ equal size clusters.
That is,  if  $a > b$ and $\sqrt{a} - \sqrt{b} < \sqrt{r}$, then for
any sequence of estimators $\widehat{Z}_n$,  $\pprob{ \widehat{Z}_n= Z^*}\rightarrow 0.$
\end{proof}

\subsection{Proofs for \prettyref{sec:cbm}: Binary censored block model}
Our analysis of the \SDP relies on two key ingredients: the spectrum
of labeled \ER random graph and the tail bounds for the binomial distributions,
which we first present.

Recall that $A$ is a symmetric and zero-diagonal random matrix, where the
entries $\{A_{ij}: i<j \}$ are independent and $A_{ij} \sim p(1-\epsilon) \delta_{+1}
+ p \epsilon \delta_{-1} + (1-p) \delta_{0}$ if $i, j$ are in the same cluster; otherwise
$A_{ij}  \sim p(1-\epsilon) \delta_{-1} + p \epsilon \delta_{+1} + (1-p) \delta_{0}$.
Then $\expect{A_{ij}}=p (1-2\epsilon) \sigma_i^\ast \sigma_j^\ast$.
Assume $p \ge c_0 \frac{\log n}{n}$
for any constant $c_0>0$.
We aim to show that $\lnorm{A-\expect{A}}{2} \le c' \sqrt{np}$ with high
probability for some constant $c'>0$.

\begin{theorem}\label{thm:adjconcentration_labeled}
For any $c>0$, there exists $c'>0$ such that for any $n \geq 1$, $\pprob{\lnorm{A-\expect{A}}{2} \le c' \sqrt{n p}} \geq 1-n^{-c}$.
\end{theorem}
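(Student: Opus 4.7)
The plan is to reduce this to the spectral concentration bound for the unlabeled Erdős-Rényi case, which is already available as \cite[Theorem 5]{HajekWuXuSDP14}. The key observation is the decomposition $A = A^+ - A^-$, where $A^+_{ij} \triangleq \indc{A_{ij}=+1}$ and $A^-_{ij} \triangleq \indc{A_{ij}=-1}$. Both $A^+$ and $A^-$ are symmetric, zero-diagonal random matrices whose entries above the diagonal are independent Bernoulli random variables: $A^+_{ij} \sim \Bern(p(1-\epsilon))$ if $\sigma_i^\ast\sigma_j^\ast=+1$ and $A^+_{ij}\sim \Bern(p\epsilon)$ otherwise, and symmetrically for $A^-$. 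Because $\epsilon \in [0,1/2]$, every such Bernoulli parameter is bounded above by $p$. Thus $A^\pm$ are adjacency matrices of inhomogeneous Erdős-Rényi graphs with edge probabilities at most $p$.

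By the triangle inequality,
\[
\|A-\Expect[A]\| \le \|A^+-\Expect[A^+]\| + \|A^- -\Expect[A^-]\|,
\]
so it suffices to bound each of the two summands. The Feige-Ofek-style proof of \cite[Theorem 5]{HajekWuXuSDP14} only uses that (i) the entries are independent and Bernoulli with probabilities bounded by $p$, (ii) $np \ge c_0\log n$ for some fixed $c_0>0$. Both conditions hold for $A^\pm$, and one can verify the bound $\|A^\pm - \Expect[A^\pm]\| \le c_1\sqrt{np}$ holds with probability at least $1-n^{-c-1}$ for a constant $c_1$ depending only on $c$ and $c_0$. Taking a union bound yields the theorem with $c'=2c_1$.

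Since \cite[Theorem 5]{HajekWuXuSDP14} may be formally stated for the homogeneous Erdős-Rényi case, I would spell out the standard argument to reassure the reader that it is insensitive to the inhomogeneity. Namely, split the vertex set into the ``light'' set $L$ of vertices with degree $O(np)$ in the support graph of $A^\pm$ and its complement $H$. Row/column sum Chernoff bounds show $|H| = o(n)$ with high probability. The restriction of $A^\pm - \Expect[A^\pm]$ to $L\times L$ can then be handled by the standard $\epsilon$-net argument on the unit sphere (with Bennett/Bernstein tail bounds, since the bounded-degree truncation tames the heavy-tail phenomenon that defeats the naive $\epsilon$-net approach in the sparse regime). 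The contribution of $H$ is bounded by a combinatorial discrepancy lemma (Feige-Ofek) controlling $e(S,T)$ for all $S,T \subseteq [n]$, which only uses the Bernoulli structure with parameters $\le p$.

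The main technical obstacle is simply ensuring that the Feige-Ofek machinery carries over without the homogeneity of the edge probabilities; this causes no real difficulty because every step of that argument is monotone in the edge probabilities and so can be dominated by the truly Erdős-Rényi graph $\calG(n,p)$. No additional ideas beyond those already deployed in \cite{HajekWuXuSDP14} are required.
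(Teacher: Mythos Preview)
Your approach is correct but takes a genuinely different route from the paper's. The paper does not decompose $A=A^{+}-A^{-}$; instead it works with $A$ directly and uses a symmetrization argument: introducing an independent copy $A'$ and a Rademacher sign matrix $M$, one shows $\Expect\|A-\Expect A\|\le 4\,\Expect\|E\|$ where $E$ has i.i.d.\ entries from $\tfrac{p}{2}\delta_{+1}+\tfrac{p}{2}\delta_{-1}+(1-p)\delta_0$. Seginer's theorem \cite{Seginer00} then controls $\Expect\|E\|$ by the expected maximum column $\ell_2$-norm, which is $O(\sqrt{np})$ via a Chernoff bound, and Talagrand's Lipschitz concentration upgrades the expectation bound to a tail bound. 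Your decomposition is more elementary in that it reduces everything to a black-box invocation of the unlabeled result \cite[Theorem~5]{HajekWuXuSDP14}; since that theorem already covers SBM adjacency matrices (hence two-valued, inhomogeneous Bernoulli entries with maximum probability $p$), no reworking of the Feige--Ofek argument is actually needed---your $A^{\pm}$ are exactly of that type, with edge probabilities $\{p\epsilon,\,p(1-\epsilon)\}\le p$. The paper's route, by contrast, is self-contained and illustrates how symmetrization plus Seginer handles $\{-1,0,+1\}$-valued entries without ever invoking the SBM concentration result; it also makes the role of $p$ (rather than any cluster structure) completely transparent.
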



\begin{proof}
Let $E=(E_{ij})$ denote an $n\times n$ matrix with independent entries drawn from
$\hat{\mu} \triangleq \frac{p}{2} \delta_1 + \frac{p}{2} \delta_{-1} + (1-p) \delta_0$, which is the distribution of
a Rademacher random variable multiplied with an independent Bernoulli with bias $p$. Define $E'$ as $E'_{ii}=E_{ii}$ and $E'_{ij}=-E_{ji}$ for all $i \neq j$.
Let $A'$ be an independent copy of $A$. Let $D$ be a zero-diagonal symmetric matrix whose entries are drawn from $\hat{\mu}$ and $D'$ be an independent copy of $D$.
Let $M=(M_{ij})$ denote an $n\times n$ zero-diagonal symmetric matrix whose entries are Rademacher and independent
from $C$ and $C'$.
We apply the usual symmetrization arguments:
\begin{align}
\Expect[\|A- \Expect[A]\|]
= & ~ \Expect[\|A - \Expect[A']\|] \overset{(a)}\leq \Expect[\|A-A'\|] \overset{(b)}{=} \Expect[\| (A-A') \circ M \| ] \overset{(c)}{\le} 2\Expect[ \|A \circ M\|] \nonumber \\
= & ~ 2 \Expect[\|D\|]  = 2\Expect[ \|D- \Expect[D'] \| ]
\overset{(d)}\leq 2 \Expect[\|D-D'\|] \overset{(e)}{=}2 \Expect[\|E-E'\| ] \overset{(f)}{\leq} 4 \, \Expect[\|E\|] ,\label{eq:sym}
\end{align}
where $(a),(d)$ follow from  the Jensen's inequality; $(b)$ follows because $A-A'$ has the same distribution as $(A-A')\circ M$, where $\circ$ denotes the element-wise product; $(c),(f)$ follow from the triangle inequality; $(e)$ follows from the fact that $D-D'$ has the same distribution as $E - E'$. In particular, first, the diagonal entries of $D-D'$ and $E-E'$ are all equal to zero.
Second, both $D-D'$ and $E-E'$ are symmetric matrices with independent upper triangular entries.
Third, $D_{ij} -D'_{ij}$ is equal in distribution to $E_{ij}-E'_{ij}$ for all $i<j$ by definition.

Then, we apply the result of Seginer \cite{Seginer00} which characterized the expected spectral norm of i.i.d.\ random matrices within universal constant factors. Let $X_j \triangleq \sum_{i=1}^{n} E_{ij}^2$, which are independent $\Binom(n, p)$. Since $\hat{\mu}$ is symmetric, \cite[Theorem 1.1]{Seginer00} and Jensen's inequality yield
\begin{equation}
\Expect[\|E\|]	\leq \kappa \,\expect{\pth{ \max_{j\in[n]}  X_j }^{1/2} } \leq \kappa \pth{ \expect{\max_{j\in[n]} X_j }}^{1/2}
	\label{eq:seginer}
\end{equation}
for some universal constant $\kappa$.
In view of the following Chernoff bound for the binomial distribution \cite[Theorem 4.4]{Mitzenmacher05}:
\begin{align*}
\prob{X_1 \ge t \log n } \le 2^{- t },
\end{align*}
for all $t \ge 6 np$, setting $t_0=6 \max\{np/\log n, 1\}$ and applying the union bound, we have
\begin{align}
\expect{\max_{j \in [n]} X_j}
= & ~ \int_0^\infty \prob{\max_{j \in [n]} X_j \geq t} \diff t \leq \int_0^\infty (n \, \prob{X_1 \geq t} \wedge 1) \diff t \nonumber \\
\leq & ~ t_0 \log n + n \int_{t_0\log n}^\infty 2^{-t} \diff t \leq (t_0+1) \log n \leq 6(1+2/c_0) np, \label{eq:maxX}
\end{align}
where the last inequality follows from $np \ge c_0\log n$.
Assembling \prettyref{eq:sym} -- \prettyref{eq:maxX},
we obtain
\begin{equation}
	\Expect[\|A - \Expect[A]\|] \leq c_2  \sqrt{np},
	\label{eq:spnorm-mean}
\end{equation}
for some positive constant $c_2$ depending only on $c_0, c_1$.
Since the entries of $A - \Expect[A]$ are valued in $[-1,1]$, Talagrand's concentration inequality for 1-Lipschitz convex functions yields
\[
\prob{\|A - \Expect[A]\| \geq \Expect[\|A - \Expect[A]\|]+t} \leq c_3 \exp(-c_4 t^2)
\]
for some absolute constants $c_3,c_4$, which implies that for any $c>0$,
there exists $c'>0$ depending on $c_0, c_1$, such that $\prob{\|A - \Expect[A]\| \geq c' \sqrt{np}} \leq n^{-c}.$
\end{proof}

Let $X_1, X_2, \ldots, X_m \iiddistr p (1-\epsilon) \delta_{+1}
+ p \epsilon \delta_{-1} + (1-p) \delta_{0}$ for $m \in \naturals$, $p \in [0,1]$ and a fixed constant $\epsilon \in [0, 1/2]$,
where $m=n+o(n)$ and $p=a \log n /n$ for some $a>0$ as $n \to \infty$.
The following upper tail bound for $\sum_{i=1}^m X_i$ follows from the Chernoff bound.
\begin{lemma}\label{lmm:binomialconcentration_labeled}
Assume that $k_n \in [m]$ and $k_n= (1+o(1)) \frac{\log n}{\log \log n}$.
Then
\begin{align}
\prob{ \sum_{i=1}^m X_i  \le k_n } \le  n^{ -  a \left( \sqrt{1-\epsilon}-\sqrt{\epsilon} \right)^2 +o(1)} . \label{eq:chernoffregularupper}
\end{align}
\end{lemma}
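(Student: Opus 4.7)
The plan is a standard Chernoff-bound calculation, optimized at the ``unbiased'' tilt. Let $S_m = \sum_{i=1}^m X_i$. For any $t>0$, Markov's inequality applied to $\exp(-tS_m)$ gives
\begin{equation*}
\prob{S_m \le k_n} \le \eexp^{t k_n} \bigl(\Expect[\eexp^{-tX_1}]\bigr)^m.
\end{equation*}
The moment generating function evaluates to $\Expect[\eexp^{-tX_1}] = 1 - p\bigl(1 - (1-\epsilon)\eexp^{-t} - \epsilon\eexp^{t}\bigr)$. Using $\log(1-x)\le -x$ and writing $mp = (a+o(1))\log n$, one obtains
\begin{equation*}
\log\prob{S_m \le k_n} \le t k_n - mp\bigl(1 - (1-\epsilon)\eexp^{-t} - \epsilon \eexp^{t}\bigr).
\end{equation*}

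First, I would choose the tilt $t^\ast = \tfrac12 \log\frac{1-\epsilon}{\epsilon}$, which is the minimizer of the right-hand side when $k_n=0$. Plugging in gives $(1-\epsilon)\eexp^{-t^\ast} + \epsilon \eexp^{t^\ast} = 2\sqrt{\epsilon(1-\epsilon)}$, so the second term becomes $-mp\bigl(1 - 2\sqrt{\epsilon(1-\epsilon)}\bigr) = -mp(\sqrt{1-\epsilon}-\sqrt{\epsilon})^2$. Since $mp = (a+o(1))\log n$, this yields the claimed exponent $-a(\sqrt{1-\epsilon}-\sqrt{\epsilon})^2 \log n$.

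Second, I must show that the $tk_n$ term contributes only to the $o(1)$ in the exponent. Because $t^\ast$ is a fixed constant depending only on $\epsilon$, and $k_n = (1+o(1))\frac{\log n}{\log\log n} = o(\log n)$, we have $t^\ast k_n = o(\log n)$, which is absorbed into $n^{o(1)}$. Similarly, the relation $m = n+o(n)$ only affects $mp$ by a multiplicative $1+o(1)$, also contributing $o(\log n)$ to the exponent. Assembling everything yields
\begin{equation*}
\prob{S_m \le k_n} \le n^{-a(\sqrt{1-\epsilon}-\sqrt{\epsilon})^2 + o(1)}.
\end{equation*}

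There is no real obstacle here; the calculation is routine. The only point worth checking is that the choice $t^\ast$ is independent of $k_n$ and $m$, so the same tilt works uniformly as $n\diverge$ and the error terms really are $o(1)$. (A strictly optimal tilt would solve $k_n + mp\epsilon\eexp^{t} - mp(1-\epsilon)\eexp^{-t}=0$, but since $k_n/mp \to 0$ this optimum differs from $t^\ast$ by $o(1)$, so nothing is lost by using the simpler constant tilt.)
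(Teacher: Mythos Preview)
Your argument is correct and follows essentially the same Chernoff-bound approach as the paper; the paper even arrives at the same tilt $\lambda^\ast=\tfrac12\log\tfrac{1-\epsilon}{\epsilon}+o(1)$ by explicitly optimizing and then substituting back, whereas you shortcut by plugging in the limiting value directly and verifying the residual $t^\ast k_n$ is $o(\log n)$.

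One small gap: your chosen tilt $t^\ast=\tfrac12\log\tfrac{1-\epsilon}{\epsilon}$ is only finite when $\epsilon>0$. The lemma is stated for $\epsilon\in[0,\tfrac12]$, and when $\epsilon=0$ we have $\sum_i X_i\sim\Binom(m,p)$, so a separate (even simpler) binomial tail bound is needed; the paper handles this edge case explicitly before turning to $\epsilon>0$. You should add a line to cover it.
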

\begin{proof}
If $\epsilon=0$, then $\sum_{i=1}^m X_i   \sim \Binom(m,p)$ and the lemma follows from \prettyref{lmm:binomialmaxminconcentration}.
Next we focus on the case $\epsilon>0$.
It follows from the Chernoff bound that
\begin{align}
\prob{ \sum_{i=1}^m X_i  \le k_n } \le \exp ( - m \ell ( k_n /m ) ),   \label{eq:ChernoffTail}
\end{align}
where $\ell (x) = \sup_{ \lambda \ge 0} - \lambda x - \log \expect{\eexp^{-\lambda X_1}}$.
Since $X_1 \sim p (1-\epsilon) \delta_{+1} + p \epsilon \delta_{-1} + (1-p) \delta_{0}$,
\begin{align*}
 \expect{e^{-\lambda X_1}} = 1 + p \left[ \eexp^{-\lambda} (1-\epsilon) + \eexp^{\lambda} \epsilon - 1  \right].
\end{align*}
Notice that $- \lambda x - \log \expect{\eexp^{-\lambda X_1}}$ is concave in $\lambda$, so it achieves the supremum at $\lambda^\ast$
such that
\begin{align*}
- x +  \frac{ p ( \eexp^{-\lambda^\ast} (1-\epsilon)  - \eexp^{\lambda^\ast} \epsilon)   }{1 + p \left[ \eexp^{-\lambda^\ast} (1-\epsilon) + \eexp^{\lambda^\ast} \epsilon - 1  \right] } =0.
\end{align*}
Hence, by setting $x= k_n/m$, we get $\lambda^\ast = \frac{1}{2 } \log \frac{1-\epsilon}{\epsilon} + o(1)$ and thus
\begin{align*}
\ell ( k_n /m ) &= - \lambda^\ast k_n /m - \log \left(1 + p \left[ \eexp^{-\lambda^\ast} (1-\epsilon) + \eexp^{\lambda^\ast} \epsilon - 1  \right] \right) \\
& = -  \frac{1}{2 } \log \frac{1-\epsilon}{\epsilon} \frac{k_n}{m} - \log \left( 1- p ( \sqrt{1-\epsilon} - \sqrt{\epsilon} )^2 \right) + o( k_n /m ) \\
& =  a   ( \sqrt{1-\epsilon} - \sqrt{\epsilon} )^2  \log n /n  + o( \log n /n),
\end{align*}
where the last equality holds due to the Taylor expansion of $\log (1-x)$ at $x=0$ and $p= a \log n /n$. 
Combining the last displayed equation with  \prettyref{eq:ChernoffTail} gives the desired \prettyref{eq:chernoffregularupper}.
\end{proof}
The following lemma establishes a lower tail bound for $\sum_{i=1}^m X_i$.
\begin{lemma} \label{lmm:binomialconcentrationlowerbound_labeled}
Let $k_n$ be defined in \prettyref{lmm:binomialconcentration_labeled}.
Then
\begin{align*}
\prob{ \sum_{i=1}^m X_i  \le - k_n } \ge  n^{ -  a \left(\sqrt{1-\epsilon}-\sqrt{\epsilon} \right)^2 +o(1)} .
\end{align*}
\end{lemma}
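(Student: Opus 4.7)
The plan is to prove the lower bound by constructing a specific sub-event of $\{\sum_{i=1}^m X_i \le -k_n\}$ whose probability we can evaluate directly and which matches the claimed rate. Decompose each $X_i$ as the product of a $\Bern(p)$ edge indicator and an independent $\pm 1$ label with bias $(1-\epsilon,\epsilon)$, so that $\sum_{i=1}^m X_i = N_+ - N_-$, where $N_\pm \triangleq \#\{i\colon X_i = \pm 1\}$ and $(N_+, N_-, m - N_+ - N_-)$ is multinomial with parameters $(m; p(1-\epsilon), p\epsilon, 1-p)$. The target probability is then lower bounded by the probability of any outcome with $N_- - N_+ \ge k_n$.

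The key choice is to force $N_+ \approx N_- \approx a\sqrt{\epsilon(1-\epsilon)}\log n$; this is precisely the ``typical'' configuration under the exponentially tilted measure that achieves the Chernoff rate in \prettyref{lmm:binomialconcentration_labeled}, since at the optimal tilt $\lambda^* = \tfrac{1}{2}\log\tfrac{1-\epsilon}{\epsilon}$ the tilted probabilities $\tilde{P}(X_i=+1)$ and $\tilde{P}(X_i=-1)$ both equal $p\sqrt{\epsilon(1-\epsilon)}$. Concretely, set $n_+^* \triangleq \lfloor a\sqrt{\epsilon(1-\epsilon)}\log n\rfloor$ and $n_-^* \triangleq n_+^* + k_n$; then the event $E \triangleq \{N_+ = n_+^*,\, N_- = n_-^*\}$ forces $\sum_i X_i = -k_n$ exactly, so $\prob{\sum_i X_i \le -k_n} \ge \prob{E}$.

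It remains to evaluate $\prob{E}$ from the multinomial formula via Stirling. Since $n_\pm^* = \Theta(\log n) \ll m$ and $(m - n_+^* - n_-^*)\log(1-p) = -mp + o(\log n)$, the factorial terms combine into
\begin{align*}
\log \prob{E} = n_+^* \log\frac{mp(1-\epsilon)}{n_+^*} + n_-^* \log\frac{mp\epsilon}{n_-^*} + n_+^* + n_-^* - mp + o(\log n).
\end{align*}
Plugging in $n_\pm^* = a\sqrt{\epsilon(1-\epsilon)} \log n \cdot (1+o(1))$ and $mp = a\log n \cdot (1+o(1))$, the first two terms cancel each other up to $o(\log n)$ (one equals $\tfrac{a\sqrt{\epsilon(1-\epsilon)}\log n}{2} \log\tfrac{1-\epsilon}{\epsilon}$ and the other its negative), leaving
\begin{align*}
\frac{\log \prob{E}}{\log n} \;\longrightarrow\; 2a\sqrt{\epsilon(1-\epsilon)} - a \;=\; -a\bigl(\sqrt{1-\epsilon} - \sqrt{\epsilon}\bigr)^2,
\end{align*}
which is the desired rate. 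The main technical obstacle is purely bookkeeping---tracking that every error term from Stirling (notably the $O(\log\log n)$ contributions from $\log n_\pm^*!$) and from the expansion of $(1-p)^{m-n_+^*-n_-^*}$ aggregates only to $o(\log n)$, so that these corrections disappear after normalization by $\log n$. Note also that the lemma implicitly assumes $\epsilon>0$; when $\epsilon=0$ the event on the left is vacuous, but this degenerate case is irrelevant to the converse for the censored block model where the lemma is applied.
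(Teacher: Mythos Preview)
Your proposal is correct and follows essentially the same approach as the paper: both lower-bound the probability by forcing $N_+ \approx N_- \approx a\sqrt{\epsilon(1-\epsilon)}\log n$ and then evaluate via Stirling-type estimates. The only cosmetic difference is that the paper first conditions on the total number of nonzero terms $\sum_i X_i^2 = k^\ast \triangleq \lfloor 2a\sqrt{\epsilon(1-\epsilon)}\log n\rfloor$ and then bounds the resulting binomial tail of the $\pm 1$ labels via textbook inequalities from \cite{ash-itbook}, whereas you go directly to a single multinomial point mass; the two computations are equivalent and yield the same exponent.
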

\begin{proof}
Let $k^\ast \triangleq \lfloor 2a \sqrt{\epsilon (1-\epsilon) } \log n \rfloor$. Notice that $\sum_{i=1}^m X^2_i \sim \Binom(m, p)$.
Let $Z_1, Z_2, \ldots, Z_n \iiddistr (1-\epsilon)\delta_{+1} + \epsilon \delta_{-1}.$  Then
\begin{align}
\prob{ \sum_{i=1}^m X_i  \le - k_n } & \ge \prob{\sum_{i=1}^m X_i  \le - k_n \bigg| \sum_{i=1}^m X^2_i  = k^\ast }\prob{\sum_{i=1}^m X^2_i  = k^\ast } \nonumber \\
& \overset{(a)}{=} \prob{\sum_{i=1}^{k^\ast} Z_i  \le - k_n } \prob{\sum_{i=1}^m X^2_i  = k^\ast }  \label{eq:lowertailbound}
\end{align}
where $(a)$ holds because conditioning on $ \sum_{i=1}^m X^2_i  = k^\ast$, $\sum_{i=1}^m X_i$ and $\sum_{i=1}^{k^\ast} Z_i $ have the same distribution.
Next we lower bound $ \prob{\sum_{i=1}^{k^\ast} Z_i  \le - k_n }$ and $\prob{\sum_{i=1}^m X^2_i  = k^\ast}$ separately.

We use the following non-asymptotic bound on the binomial tail probability \cite[Lemma 4.7.2]{ash-itbook}: For $U \sim \Binom(n,p)$,
\begin{align*}
(8 k (1-\lambda))^{-1/2} \exp(- n D(\lambda\|p)) \leq \prob{U \geq k} \leq \exp(- n D(\lambda\|p)),
\end{align*}
where $\lambda = \frac{k}{n} \in (0,1) \ge p$ and   $D(\lambda\|p) = \lambda \log \frac{\lambda}{p} + (1-\lambda) \log \frac{1-\lambda}{1-p}$ is the binary divergence function.
Let $W \sim \Binom(k^\ast, \epsilon)$. Then,
\begin{align*}
\prob{\sum_{i=1}^{k^\ast} Z_i  \le - k_n } = \prob{W \ge \frac{k^\ast+ k_n}{2} } & \geq  \sqrt{\frac{k^\ast}{2(k^\ast +k_n) (k^\ast -k_n)  } }  \exp \left[- k^\ast D \left( \frac{1}{2} + \frac{k_n }{2k^\ast} \bigg\|\epsilon \right) \right] \\
& = \exp \left[ - k^\ast D( 1/2 \| \epsilon )  + o(\log n) \right],
\end{align*}

Moreover,  using the following bound on binomial coefficients \cite[Lemma 4.7.1]{ash-itbook}:
\begin{align*}
\frac{\sqrt{\pi}}{2} \leq \frac{\binom{n}{k}}{(2 \pi n \lambda (1-\lambda))^{-1/2}  \exp(n h(\lambda))} \leq 1  .
\end{align*}
where $\lambda = \frac{k}{n} \in (0,1)$ and  $h(\lambda) = -\lambda \log \lambda - (1-\lambda) \log (1-\lambda)$ is the binary entropy function,
we have that
\begin{align*}
\prob{\sum_{i=1}^m X^2_i  = k^\ast } &= \binom{m}{k^\ast } p^{k^\ast} (1-p)^{k^\ast}  \geq \frac{1}{ 2\sqrt{ 2 k^\ast (1-k^\ast/n) } }    \exp \left(-m D( k^\ast /n \| p ) \right) \\
& = \exp \left( - a \log n+  k^\ast \log \frac{e a  \log n}{k^\ast } + o(\log n)  \right).
\end{align*}
Observe that by the definition of $k^\ast$, $\log \frac{a  \log n}{k^\ast }= D( 1/2 \| \epsilon )  +o(1) $ and it follows from \prettyref{eq:lowertailbound} that
\begin{align*}
\prob{ \sum_{i=1}^m X_i  \le - k_n } \ge  \exp \left[ - a \log n + 2a \sqrt{\epsilon (1-\epsilon) } \log n + o(\log n) \right] = n^{- a (\sqrt{1-\epsilon} - \sqrt{\epsilon} )^2 + o (1)  }.
\end{align*}

\end{proof}

The following lemma provides a deterministic sufficient condition for the success of \SDP \prettyref{eq:SBMconvex_labeled} in the case of $a>b$.
\begin{lemma}\label{lmm:SBMKKT_labeled}
Suppose there exist $D^\ast=\diag{d^\ast_i}$ such that $S^* \triangleq D^\ast-A $ satisfies $S^\ast \succeq 0$, $\lambda_2(S^\ast)>0$ and
\begin{align}
S^\ast \sigma^\ast = 0 . \label{eq:SBMKKT_labeled}
\end{align}
Then $\widehat{Y}_{\SDP}=Y^\ast$ is the unique solution to \prettyref{eq:SBMconvex_labeled}.
\end{lemma}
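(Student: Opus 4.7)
The plan is to mimic the KKT/dual-witness argument already used for \prettyref{lmm:SBMKKT} in the asymmetric binary SBM section, simplified by the absence of the cluster-size constraint. Since the only constraints of \prettyref{eq:SBMconvex_labeled} are $Y \succeq 0$ and $Y_{ii}=1$, the Lagrangian takes the form
\[
L(Y, S, D) = \langle A, Y\rangle + \langle S, Y\rangle - \langle D, Y - \identity\rangle,
\]
with dual variables $S \succeq 0$ and $D = \diag{d_i}$. I will plug in the proposed $S^\ast = D^\ast - A$ (so $A + S^\ast - D^\ast = 0$) as the dual certificate.

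First I will show that $Y^\ast$ is optimal. For any feasible $Y$,
\[
\langle A, Y\rangle \;\le\; L(Y, S^\ast, D^\ast) \;=\; \langle A + S^\ast - D^\ast, Y\rangle + \Iprod{D^\ast}{\identity} \;=\; \Tr(D^\ast),
\]
using $\langle S^\ast, Y\rangle \ge 0$ (since $S^\ast, Y \succeq 0$) and $Y_{ii}=1$. On the other hand, at $Y^\ast = \sigma^\ast(\sigma^\ast)^\top$ we have $\langle D^\ast, Y^\ast\rangle = \Tr(D^\ast)$ (diagonal entries of $Y^\ast$ are $1$) and $\langle S^\ast, Y^\ast\rangle = (\sigma^\ast)^\top S^\ast \sigma^\ast = 0$ by the hypothesis $S^\ast \sigma^\ast = 0$. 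Hence $\langle A, Y^\ast\rangle = \Tr(D^\ast) - \langle S^\ast, Y^\ast\rangle = \Tr(D^\ast)$, matching the upper bound, so $Y^\ast$ attains the maximum.

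Next I will establish uniqueness. Let $\widetilde{Y}$ be any optimizer. Then $\langle A, \widetilde{Y}\rangle = \Tr(D^\ast)$, and since $\widetilde{Y}_{ii}=1$ gives $\langle D^\ast, \widetilde{Y}\rangle = \Tr(D^\ast)$, we get
\[
\langle S^\ast, \widetilde{Y}\rangle \;=\; \langle D^\ast - A, \widetilde{Y}\rangle \;=\; 0.
\]
Since $\widetilde{Y} \succeq 0$ and $S^\ast \succeq 0$, this forces the column space of $\widetilde{Y}$ to lie in the kernel of $S^\ast$. The hypotheses $S^\ast \sigma^\ast = 0$ and $\lambda_2(S^\ast) > 0$ mean that $\ker(S^\ast) = \mathrm{span}(\sigma^\ast)$, so $\widetilde{Y}$ must be a scalar multiple of $\sigma^\ast(\sigma^\ast)^\top = Y^\ast$. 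The constraint $\widetilde{Y}_{ii}=1=Y^\ast_{ii}$ then pins down the scalar to be $1$, giving $\widetilde{Y} = Y^\ast$.

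This argument is essentially a direct transcription of the proof of \prettyref{lmm:SBMKKT}; no new obstacle arises because the censored SDP drops the $\langle \allones, Y\rangle$ constraint, which eliminates the need for the Lagrange multiplier $\lambda^\ast$ and the term $\lambda^\ast \allones$ in $S^\ast$. All that is being used is weak duality together with the standard complementary slackness identification of the null space of $S^\ast$.
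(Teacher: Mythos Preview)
Your proof is correct and follows essentially the same Lagrangian/dual-witness argument as the paper's own proof of \prettyref{lmm:SBMKKT_labeled}. The only cosmetic difference is that you invoke $A+S^\ast-D^\ast=0$ explicitly to collapse the Lagrangian to $\Tr(D^\ast)$, whereas the paper writes the same chain as $\Iprod{D^\ast}{I}=\Iprod{D^\ast}{Y^\ast}=\Iprod{A+S^\ast}{Y^\ast}=\Iprod{A}{Y^\ast}$; the uniqueness step via $\ker(S^\ast)=\mathrm{span}(\sigma^\ast)$ is identical.
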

\begin{proof}
The Lagrangian function is given by
\begin{align*}
L(Y, S, D) = \langle A, Y \rangle + \langle S, Y \rangle - \langle D, Y -\identity \rangle,
\end{align*}
where the Lagrangian multipliers are $S \succeq 0$ and $D=\diag{d_i}$.
Then for any $Y$ satisfying the constraints in \prettyref{eq:SBMconvex_labeled},
\begin{align*}
 \Iprod{A}{Y } \overset{(a)}{\le} L(Y, S^\ast, D^\ast) = \Iprod{D^\ast}{I}
=\Iprod{D^\ast}{Y^\ast}=\Iprod{A+S^\ast}{Y^\ast}\overset{(b)}=\Iprod{A}{Y^\ast},
\end{align*}
where $(a)$ holds because $\Iprod{S^\ast}{Y} \ge 0$; $(b)$ holds because $\Iprod{Y^\ast}{S^\ast}=(\sigma^\ast)^\top S^\ast \sigma^\ast =0$ by \prettyref{eq:SBMKKT_labeled}.
Hence, $Y^\ast$ is an optimal solution. It remains to establish its uniqueness. To this end, suppose $\tY$ is
an optimal solution. Then,
\begin{align*}
\Iprod{S^\ast}{\tY}=\Iprod{D^\ast-A}{\tY} \overset{(a)}{=} \Iprod{D^\ast-A}{Y^\ast} {=}\Iprod{S^\ast}{Y^\ast} =0.
\end{align*}
where $(a)$ holds because $\Iprod{A}{\tY}=\Iprod{A}{Y^\ast}$ and $\tY_{ii}=Y^*_{ii}=1$ for all $i \in [n]$.
In view of \prettyref{eq:SBMKKT_labeled}, since $\tY \succeq 0$, $S^\ast \succeq 0$ with $\lambda_2(S^*)>0$, $\tY$ must be a multiple of $Y^*=\sigma^\ast (\sigma^\ast)^\top$.
Because $\tY_{ii}=1$ for all $i \in [n]$, $\tY=Y^\ast$.
\end{proof}

\begin{proof}[Proof of \prettyref{thm:SBMSharp_labeled}]


Let $D^\ast=\diag{d^\ast_i}$ with
\begin{equation}
d^\ast_i  = \sum_{j=1}^n A_{ij} \sigma^\ast _i \sigma^* _j	.
	\label{eq:di-SBM_labeled}
\end{equation}
It suffices to show that $S^* = D^\ast-A$ satisfies the conditions in \prettyref{lmm:SBMKKT_labeled} with high probability.


By definition, $d^\ast_i \sigma_i^\ast = \sum_{j} A_{ij} \sigma^\ast _j$ for all $i$, \ie, $D^\ast \sigma^\ast =A \sigma^\ast$.
Thus \prettyref{eq:SBMKKT_labeled} holds, that is, $S^*\sigma^* = 0$. It remains to verify that $S^\ast \succeq 0$ and $\lambda_2(S^\ast)>0$ with probability
converging to one, which amounts to showing that
\begin{equation}
\prob{\inf_{x \Perp \sigma^\ast, \|x\|_2=1} x^\top S^\ast x  > 0} \to 1.	
	\label{eq:lambda2_labeled}
\end{equation}
Note that $\expect{A}= (1-2\epsilon) p (Y^\ast - \identity ) $ and $Y^\ast= \sigma^\ast (\sigma^\ast)^\top$. Thus for any $x$ such that $x \Perp \sigma^\ast$ and $\|x\|_2=1$,
\begin{align}
x^\top S^\ast x &= x^\top D^\ast x- x^\top \expect{A} x   - x^\top  \left( A - \expect{A} \right) x \nonumber  \\
&  = x^\top D^\ast  x  -(1-2\epsilon) p \; x^\top Y^\ast x + (1-2\epsilon) p -
x^\top  \left( A- \expect{A} \right) x  \nonumber \\
& \overset{(a)}{=} x^\top D^\ast  x + (1-2\epsilon) p   -
x^\top  \left( A- \expect{A} \right) x  \ge \min_{i\in[n]} d^\ast_i  + (1-2 \epsilon) p - \|  A - \expect{A}\|. \label{eq:SBMPSDCheck_labeled}
\end{align}
where $(a)$ holds since $\iprod{x}{\sigma^\ast}=0$.
It follows from \prettyref{thm:adjconcentration_labeled} that $\|  A - \expect{A}\| \leq c' \sqrt{\log n} $
with high probability for a positive constant $c'$ depending only on $a$.
Moreover, note that each $d_i$ is equal in distribution to $\sum_{i=1}^{n-1} X_i$, where $X_i$ are identically and independently distributed according to $p(1-\epsilon) \delta_{+1}+ p \epsilon \delta_{-1} + 1-p\delta_{0}$.
Hence, \prettyref{lmm:binomialconcentration_labeled} implies that
\begin{align*}
\prob{\sum_{i=1}^{n-1} X_i \ge \frac{\log n}{\log \log n}} \ge 1- n^{- a ( \sqrt{1-\epsilon} - \sqrt{\epsilon} )^2+o(1)}.
\end{align*}
Applying the union bound implies that  $\min_{i\in[n]} d^\ast_i  \ge \frac{\log n}{\log \log n}$ holds with probability at least $1 - n^{1-a ( \sqrt{1-\epsilon} - \sqrt{\epsilon} )^2+o(1)}$. It follows from the assumption $ a ( \sqrt{1-\epsilon} - \sqrt{\epsilon} )^2>1$ and \prettyref{eq:SBMPSDCheck_labeled} that the desired \prettyref{eq:lambda2_labeled} holds, completing the proof.
 \end{proof}

\begin{proof} [Proof of \prettyref{thm:PlantedSharpConverse_labeled}]
The prior distribution of $\sigma^\ast$ is uniform over $\{\pm 1\}^n$.
First consider the case of $\epsilon=0$. If $a<1$, then  the number of isolated vertices tends
to infinity in probability \cite{Erdos59}. Notice that for isolated vertices $i$, vertex
$\sigma^\ast_i$ is equally likely to be $+1$ or $-1$ conditional on the graph.
Hence, the probability of exact recovery converges to $0$.

Next we consider $\epsilon>0$. Since the prior distribution of $\sigma^\ast$ is uniform, the
\ML estimator minimizes the error probability among all estimators and thus we only need to find when the \ML estimator fails.
Let $e(i, T) \triangleq \sum_{j \in T} |A_{ij}|$, denoting the number of edges between vertex $i$ and vertices in set $T\subset [n]$.
Let $s_i= \sum_{j: \sigma^\ast_j=\sigma^\ast_i} A_{ij}$ and $r_i= \sum_{j: \sigma^\ast_j \neq \sigma^\ast_i} A_{ij}$.
Let $F$ denote the event that $\min_{i \in [n] } (s_i - r_i ) \le -1 $.
Notice that $F$ implies the existence of $i \in [n] $ such that $\sigma'$ with $\sigma'_i= - \sigma^\ast_i $ and $\sigma'_j =\sigma^\ast_j$ for $j \neq i$
achieves a strictly higher likelihood than $\sigma^\ast$.
Hence $\prob{\text{\ML fails} } \ge \prob{F}$. Next we bound $\prob{F}$ from the below.

Let $T$ denote the set of first $ \lfloor \frac{n}{\log^2 n} \rfloor$ vertices and $T^c=[n] \backslash T$. Let $s'_i=  \sum_{j \in T^c: \sigma^\ast_j=\sigma^\ast_i } A_{ij}$
and $r'_i= \sum_{j \in T^c: \sigma^\ast_j \neq \sigma^\ast_i} A_{ij}$. Then
\begin{equation}
\min_{i \in [n] } ( s_i - r_i )   \leq \min_{i \in T} ( s_i - r_i )  \leq \min_{i \in T} (  s'_i - r'_i  )  + \max_{i \in T} e(i, T).	
	\label{eq:wa_labeled}
\end{equation}
Let $E_1,E_2$ denote the event that $\max_{i \in T} e(i,T) \le \frac{\log n}{\log \log n} -1$, $\min_{i \in T} ( s'_i  -r'_i  ) \le - \frac{\log n}{\log \log n} $, respectively.
In view of \prettyref{eq:wa_labeled}, we have $F \supset E_1 \cap E_2$ and hence it boils down to proving that $\prob{E_i} \to 1$ for $i=1,2$.

Notice that $e(i, T) \sim \Binom(|T|, a \log n /n )$. In view of the following Chernoff bound for binomial distributions \cite[Theorem 4.4]{Mitzenmacher05}: For $r \ge 1$ and $X \sim \Binom(n,p)$, $\prob{X \ge r np }
\le ( \eexp/r)^{rnp},$
we have
\begin{align*}
\prob{ e(i, T) \ge \frac{\log n}{\log \log n} -1 } \le   \left( \frac{\log^2 n}{a \eexp \log \log n } \right) ^{- \log  n/ \log \log n+1} = n^{-2+o(1)}.
\end{align*}
Applying the union bound yields
\begin{align*}
\prob{E_1} \ge 1 - \sum_{i \in T} \prob{ e(i, T) \ge \frac{\log n}{\log \log n} -1 } \ge
1 - n^{-1+o(1)}.
\end{align*}
Moreover,
\begin{align*}
\prob{E_2} & \overset{(a)}{=} 1- \prod_{i\in T} \prob{ s'_i - r'_i  >  - \frac{\log n}{\log \log n} } \\
&  \overset{(b)}{\ge} 1- \left( 1- n^{- a ( \sqrt{1-\epsilon} - \sqrt{\epsilon}  )^2 + o(1)  } \right)^{|T|}
\overset{(c)}{\ge} 1- \exp \left( - n^{1- a ( \sqrt{1-\epsilon} - \sqrt{\epsilon}  )^2 + o(1) } \right) \overset{(d)}{\to} 1,
\end{align*}
where $(a)$ holds because $\{s'_i - r'_i\}_{ i \in T}$ are mutually independent; $(b)$ follows from \prettyref{lmm:binomialconcentrationlowerbound_labeled};
$(c$) is due to $1+x \le e^x$ for all $x \in \reals$;
$(d$) follows from the assumption that $a ( \sqrt{1-\epsilon} - \sqrt{\epsilon}  )^2 < 1$. Thus $\prob{F} \to 1$  and the theorem follows.
\end{proof}

\subsection{Proofs for \prettyref{sec:general_case}: General cluster structure}

We first present a dual certificate lemma which is useful for the proof of Theorem \ref{thm:SDP_gen}.
Recall that $\xi_k^*$ denotes the indicator vector of cluster $k$ for $k \in [r]$ and $Z^*=\sum_k \xi_k^\top \xi_k.$
\begin{lemma}  \label{lemma:SDP_R}
Suppose there exist $D^*= \diag{d_i^*}$ with $d_i^*>0$ for inlier vertices $i$ and $d_i^*=0$ for outlier vertices $i$,
$B^* \in  {\cal S}^n$ with $B^*\geq 0$ and $B^*_{ij}>0$ whenever $i$ and $j$ belong to distinct clusters,  $\eta^*\in \IR,$  and $\lambda^*\in \IR$
such that $S^* \triangleq   D^* - B^* -A + \eta^*\mathbf{I}+ \lambda^* \mathbf{J}$ satisfies $S^*\succeq 0$,
$\lambda_{r+1}(S^\ast)>0$ (where $\lambda_{r+1}(S^\ast)$ is the $(r+1)^{\Th}$ smallest eigenvalue of $S^*$),
and
\begin{eqnarray}
S^*\xi_k^*&=&0~~~k \in [r] , \\
B_{ij}^*Z_{ij}^* &=& 0  ~~~~ i,j \in [n].
\end{eqnarray}
(If the penalized SDP \eqref{eq.SDP_RZaug} is used, the same $\eta^*$ and $\lambda^*$ should be used in the SDP and in this lemma.)
Then $Z^*$ is the unique solution to both SDP  \eqref{eq.SDP_RZ} and \eqref{eq.SDP_RZaug} (\ie, $\widehat{Z}_{SDP}$ produced
by either SDP is equal to $Z^*$).
\end{lemma}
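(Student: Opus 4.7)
The plan is to mimic the Lagrangian / dual-certificate argument used in \prettyref{lmm:SBMKKT}, \prettyref{lmm:SDP_R} and \prettyref{lmm:SBMKKT_labeled}, handling both SDPs \eqref{eq.SDP_RZ} and \eqref{eq.SDP_RZaug} in parallel. The key observation is that the hypothesized identity $S^* = D^* - B^* - A + \eta^* \mathbf{I} + \lambda^* \mathbf{J}$ lets us rewrite the primal objective for any feasible $Z$ as
\[
\langle A, Z \rangle \;=\; \langle D^*, Z \rangle - \langle S^*, Z \rangle - \langle B^*, Z \rangle + \eta^* \langle \mathbf{I}, Z \rangle + \lambda^* \langle \mathbf{J}, Z \rangle.
\]
For SDP \eqref{eq.SDP_RZ}, the two trace-type constraints fix $\eta^* \langle \mathbf{I}, Z\rangle + \lambda^* \langle \mathbf{J}, Z\rangle$, and for the penalized SDP \eqref{eq.SDP_RZaug} the penalty cancels these same terms when I work with the modified objective $\langle A, Z \rangle - \eta^* \langle \mathbf{I}, Z \rangle - \lambda^* \langle \mathbf{J}, Z \rangle$. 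In either case the proof then collapses to the same three inequalities: $\langle S^*, Z \rangle \ge 0$ (by $S^*, Z \succeq 0$), $\langle B^*, Z \rangle \ge 0$ (by $B^*, Z \ge 0$), and $\langle D^*, Z \rangle = \sum_i d_i^* Z_{ii} \le \sum_i d_i^*$ (using $d_i^* \ge 0$ and $Z_{ii} \le 1$, together with $d_i^* = 0$ on outliers). This yields the upper bound $\sum_i d_i^* + \eta^* \sum_k K_k + \lambda^* \sum_k K_k^2$ (resp.\ $\sum_i d_i^*$) on the objective.

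Next I will verify that $Z^*$ attains this bound. Here the three hypothesized complementary-slackness conditions do all the work: $\langle S^*, Z^* \rangle = \sum_k (\xi_k^*)^\top S^* \xi_k^* = 0$, $\langle B^*, Z^* \rangle = \sum_{ij} B_{ij}^* Z_{ij}^* = 0$, and $\langle D^*, Z^* \rangle = \sum_{k, i \in C_k} d_i^* = \sum_i d_i^*$ (the outlier diagonal terms vanish because $d_i^* = 0$ on outliers, while $Z_{ii}^* = 1$ on inliers). Combined with $\langle \mathbf{I}, Z^* \rangle = \sum_k K_k$ and $\langle \mathbf{J}, Z^* \rangle = \sum_k K_k^2$, this proves optimality of $Z^*$ for both SDPs.

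For uniqueness, let $\tilde Z$ be any other optimal primal matrix. Then equality must hold in each of the three inequalities above. From $\langle S^*, \tilde Z \rangle = 0$, together with $\tilde Z \succeq 0$ and the assumption $\lambda_{r+1}(S^*) > 0$, the column space of $\tilde Z$ lies inside $\ker S^* = \mathrm{span}(\xi_1^*, \ldots, \xi_r^*)$ (the $\xi_k^*$ are linearly independent, having disjoint supports). Hence $\tilde Z = \sum_{k, \ell} \alpha_{k \ell}\, \xi_k^* (\xi_\ell^*)^\top$ for some symmetric coefficients $\alpha_{k\ell}$. Using $\langle B^*, \tilde Z \rangle = 0$ and the hypothesis that $B_{ij}^* > 0$ whenever $i, j$ lie in distinct clusters, together with $\tilde Z_{ij} \ge 0$, forces $\alpha_{k \ell} = 0$ for $k \ne \ell$. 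Finally, I will pin down the diagonal coefficients $\alpha_{kk}$: for SDP \eqref{eq.SDP_RZ}, the constraint $\langle \mathbf{I}, \tilde Z\rangle = \sum_k K_k$ gives $\sum_k \alpha_{kk} K_k = \sum_k K_k$, which combined with $\alpha_{kk} \le 1$ (from $\tilde Z_{ii} \le 1$) forces $\alpha_{kk} = 1$; for SDP \eqref{eq.SDP_RZaug}, the equality $\langle D^*, \tilde Z\rangle = \sum_i d_i^*$ gives $\sum_k \alpha_{kk} \bigl(\sum_{i \in C_k} d_i^*\bigr) = \sum_k \sum_{i \in C_k} d_i^*$, and the strict positivity $d_i^* > 0$ on inliers together with $\alpha_{kk} \le 1$ again forces $\alpha_{kk} = 1$. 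Either way, $\tilde Z = Z^*$.

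There is no real obstacle here beyond bookkeeping; the only subtlety is that the penalized SDP has no explicit trace or $\mathbf{J}$-inner-product constraint, so uniqueness on that side relies essentially on the strict positivity of $d_i^*$ on inliers (which is why the hypothesis is stated that way). Otherwise this is a direct generalization of the previous dual-certificate lemmas, with $D^* - B^* + \eta^* \mathbf{I} + \lambda^* \mathbf{J}$ playing the role of the single diagonal dual variable used in the simpler settings.
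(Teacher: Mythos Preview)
Your proposal is correct and follows essentially the same dual-certificate approach as the paper: both rewrite the objective using $S^* = D^* - B^* - A + \eta^*\mathbf{I} + \lambda^*\mathbf{J}$, bound the three resulting terms via complementary slackness, and for uniqueness use $\langle S^*,\tilde Z\rangle = 0$ with $\lambda_{r+1}(S^*)>0$ to force the range of $\tilde Z$ into $\mathrm{span}(\xi_1^*,\ldots,\xi_r^*)$. The only cosmetic differences are that the paper works with the difference $H = Z - Z^*$ throughout (you work with $Z$ directly), and in the final uniqueness step for SDP~\eqref{eq.SDP_RZ} the paper pins down the diagonal blocks via the equality case of $\langle D^*, H\rangle \le 0$ (i.e., $Z_{ii}=1$ for inliers) rather than your use of the trace constraint $\langle \mathbf{I},\tilde Z\rangle = \sum_k K_k$; both are valid.
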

\begin{proof} 
Let $H=Z-Z^*,$ where $Z$ is either an arbitrary feasible matrix for  the SDP \eqref{eq.SDP_RZ} or  an arbitrary feasible
matrix for the SDP \eqref{eq.SDP_RZaug}.
Since $A- \eta^* \mathbf{I} - \lambda^* \mathbf{J} =D^* - B^* -S^*,$
$$
\langle A, H \rangle  - \eta^*\langle \mathbf{I},H\rangle   -  \lambda^*\langle  \mathbf{J}, H\rangle= \langle D^*, H \rangle - \langle B^*, H \rangle - \langle S^*,  H \rangle
$$
and the following hold:
\begin{itemize}
\item
$\langle D^*, H \rangle \leq 0,$  with equality if and only if $Z_{ii}=1$ for all inlier s $i.$   That
is because for inliers $i$,  $d^*_i > 0$, $Z_{ii} \leq 1 =Z^*_{ii},$ and  for outliers $i,$   $d^*_i=0.$
\item
$\langle B^*, H \rangle \geq 0,$ with equality if and only if $\langle B^*,Z\rangle=0.$   That is because
$B^*\geq 0,$  $Z\geq 0,$ and
$\langle B^*, Z^*  \rangle = 0.$
\item
$\langle S^*, H \rangle \geq 0,$ with equality if and only if  $\langle S^*, Z \rangle =0.$  That is because
$\langle S^*, Z \rangle \geq 0$ (because $S^*, Z \succeq 0$) and $\langle S^*, Z ^* \rangle = 0$ (because
$Z^*$ is a sum of matrices of the form $\xi_k\xi_k^\top$ and $S^*\xi_k=0$ for all $k.$)
\end{itemize}
Thus, $\langle A, H \rangle  - \eta^*\langle \mathbf{I},H\rangle   -  \lambda^*\langle  \mathbf{J}, H\rangle \leq 0.$
Therefore, $Z^*$ is a solution to SDP \eqref{eq.SDP_RZaug}.
If $Z$ is a feasible solution for the SDP \eqref{eq.SDP_RZ},  (as  $Z^*$ is), then
 $\langle \mathbf{I}, H\rangle = \langle \mathbf{J}, H \rangle = 0,$ so we conclude that $\langle A, H \rangle \leq 0,$
so $Z^*$ is also a solution to SDP \eqref{eq.SDP_RZ}.

To prove that $Z^*$ is the unique solution, restrict attention to the case that $Z$ is another solution to either
one of the SDPs.
We need to show $Z=Z^*.$   Since both $Z$ and $Z^*$ are solutions,
$\langle A, H \rangle  - \eta^*\langle \mathbf{I},H\rangle   -  \lambda^*\langle  \mathbf{J}, H\rangle \leq 0,$  so that
$\langle D^*, H \rangle =  \langle B^*, H \rangle = \langle S^*,  H \rangle = 0.$   Therefore, by the above three
points:  $Z_{ii}=1$ for all inliers $i$,  and $\langle B^*,Z\rangle=\langle S^*, Z \rangle = 0.$

Since $B^*_{ij}>0$  whenever $i$ and $j$ are in distinct clusters,  and
$Z\geq 0$ and $B^*\geq 0$, the condition $\langle B^*, Z\rangle=0$
implies that $Z_{ij}=0$ whenever $i$ and $j$ are in distinct clusters.
By assumption, $\xi_k^*$ is an eigenvector of $S^*$ with corresponding
eigenvalue zero, for $1\leq k \leq r.$    Since $\lambda_{r+1}(S)>0$, it follows
that all the other eigenvalues of $S^*$ are strictly positive.   The condition
$\langle S^*, Z \rangle = 0$ thus
implies that all the other eigenvectors of $S^*$ are in the null space of $Z,$
so the eigenvectors of $Z$ corresponding to the positive eigenvalues of $Z$
must be in the span of $\xi^\ast_1, \ldots, \xi^\ast_r.$
It follows that $Z$ is a linear combination of matrices of the form $\xi^\ast_k (\xi^\ast_{k'})^\top,$ for
$k, k' \in [r].$    It follows that $Z_{ij}=0$ if either $i$ or $j$ is an outlier vertex, or both are
outlier vertices. Moreover, whenever $i$ and $j$ are in the same cluster,  $Z_{ij}=Z_{ii}=1$.
In conclusion, $Z=Z^\ast$.
\end{proof}

\begin{proof}[Proof of \prettyref{thm:SDP_gen}]
For $k\in [r]$, denote by $C_k \subset [n]$ the support of the $k^\Th$ cluster.
Also, let $C_0$ denote the set of outlier vertices.  
For a set $T$ of vertices, let  $e(i,T) \triangleq  \sum_{j \in T} A_{ij}$
and $e(T', T)= \sum_{i \in T'} e(i, T).$
Let $k(i)$ denote the index of the cluster containing vertex $i.$
Denote the number of neighbors of $i$ in its own cluster by  $s_i = e(i,C_{k(i)})$
and the maximum number of neighbors of $i$ in other clusters by
 $r_i = \max_{k'\neq k(i)} e(i,C_{k'}).$

Now, let us construct $(D^*, B^*, \eta^*, \lambda^*)$ such that the conditions of Lemma \ref{lemma:SDP_R} hold with high probability.
Notice that $d^*_i=0$ if $i$ is an outlier and $B^*_{C_k \times C_k} =0$ for $k \in [r]$.
In order that $(S^* \xi_k^*)_i=0$ for $i\in C_k$  and $k\in [r]$, we must choose:
$$
d^*_i= \left\{ \begin{array}{cl}
s_i  - \eta^* - \lambda^* K_k  &  i\in C_k, k\in [r] \\
0 & \mbox{$i$ is an outlier}
\end{array} \right..
$$
The condition $S^* \xi_k^*=0$ for $k\in [r]$  also partially constrains the symmetric matrix $B^*.$    We should try to
be economical in the choice of $B^*$ so that we have a chance to prove that $S^* \succeq 0.$

Let
$$
B^*_{C_k\times C_{k'}}(i,j)  =\left\{ \begin{array}{cl}
\lambda^* - \frac{e(i,C_{k'})}{K_{k'}}  - \frac{e(j,C_{k})}{K_{k}}  + \frac{e(C_k,C_{k'})}{K_kK_{k'}} & k\neq k', k,k' \in [r]   \\
\lambda^* - \frac{e(i,C_{k'})}{K_{k'}}   & k=0, k' \in [r]   \\
\lambda^* -  \frac{e(j,C_{k})}{K_{k}}    & k'=0, k \in [r]   \\
0 & k=k'\in \{0,\ldots ,r \}
\end{array} \right. .
$$
Then $B=B^\top,$   $S^*\xi_k^*=0$ for $ k \in [r],$ and $B^*_{ij}Z_{ij}^*\equiv 0.$   It remains to
show $d^*_i >0 $, $B^*_{ij} >0$ whenever $i$ and $j$ are in distinct clusters,  and $S \succeq 0$ for some choice of $\lambda^*$ and $\eta^*.$
Let $E_r=\mbox{span}\{\xi_1, \ldots , \xi_r\}.$  We need to show $x^\top S^* x \geq 0$ for $x\in \IR^n$ with $x\perp E_r.$
A nice thing about the choice of $B^*$ (and it uniquely determined the choice of $B^*$) is that, for $x\perp E_r$,
$$
x^\top B^* x =\sum_{k,k' \in \{ 0, \ldots , r\}, k\neq k'}  \sum_{i\in C_k}\sum_{j\in C_{k'}}   B^*_{C_k\times C_{k'}}(i,j) x_i x_j  =0,
$$
where we used the fact that for each pair of distinct $k$ and $k'$,  each of the terms in the definition of
$B^*_{C_k\times C_{k'}}(i,j) $ is either constant in $i$ or constant in $j$, or both, and if $k=0$ the
terms are constant in $j$ and if $k'=0$ the terms are constant in $i.$
The needed condition $d_i^* \geq 0$ involves getting a lower bound on
the number of edges a vertex $i$ has to other vertices in its own cluster (we can concentrate on the
smallest cluster for that purpose), while the needed
condition $B  \geq 0$ involves an upper bound on the number of edges between a vertex $i$ in
one cluster and the vertices of a different cluster.

Let us next examine conditions to ensure $\lambda_{r+1}(S^\ast)>0.$    We use $\Expect[A]=(p-q)Z^* - p\mathbf{I}_i - q\mathbf{I}_o + qJ$
where $\mathbf{I}_i + \mathbf{I}_o$ is a decomposition of the identity matrix for inlier vertices and outlier vertices.
For any $x \perp E_r,$ we have $x^\top B^* x=x^\top Z^* x=0.$   Therefore, for  any $x \perp E_r$, and
taking $\eta^*= \| A - \Expect[A]\|,$ we have

\begin{eqnarray*}
x^\top S^* x & = &  x^\top D^*x + (\lambda^*-q)x^\top J x + p \sum_{i\in C_1\cup \cdots \cup C_r} x_i^2 + q\sum_{i\in C_0} x_i^2  + \eta^* -x^\top(A-\Expect[A])x  \\
& \ge &  x^\top D^*x + (\lambda^*-q)x^\top J x + p \sum_{i\in C_1\cup \cdots \cup C_r} x_i^2 + q\sum_{i\in C_0} x_i^2  \\
& = &  x^\top D^* x + (\lambda^* -q)  \Bigg(\sum_{i \in C_0} x_i \Bigg)^2  + p \sum_{i\in C_1\cup \cdots \cup C_r} x_i^2 + q\sum_{i\in C_0} x_i^2, 
\end{eqnarray*}
where $\xi_0$ is the indicator function for the set of outlier vertices and we used the
fact that $\allones=\mathbf{1} \mathbf{1}^\top $ and $\mathbf{1}=(\mathbf{1} - \xi_0 )+ \xi_0.$
From this it is clear that if $\lambda^* \geq q,$  then   $\lambda_{r+1}(S^\ast)>0.$
So we will be sure to select $\lambda^* \geq q.$   In fact, that will be needed to ensure that $B_{ij}\geq 0$ for all $i,j.$

It remains to select $\lambda^*$ so that $d_i\geq 0$ and $B_{ij}\geq 0$ for all $i,j$ with high probability.
Let $\lambda^*= \tilde{\tau} \log n/n$ with $\tilde{\tau}= b+ \psi_1+\psi_2$, where $\psi_1$ and $\psi_2$ satisfy the assumptions \prettyref{eq:con1}-\prettyref{eq:con4}.
Then, for inlier vertex $i \in C_k$, in view of \prettyref{lmm:binomialmaxminconcentration} and the definition of $I(\cdot, \cdot)$ in \prettyref{eq:I},
\begin{equation*}
\prob{ s_i \leq \lambda^* K_k  + \frac{\log n}{\log \log n} } \le
n^{- \rho_k I(a, \tilde{\tau} ) + o(1) } \le n^{-\rho_{r} I(a, \tilde{\tau}) + o(1) }.
\end{equation*}
Applying the union bound yields that with probability at least $1-n^{ 1- \rho_r I(a, \tilde{\tau}) + o(1 ) }$,
for $1\leq k \leq r,$
$
\min_{i \in C_k} s_i  \ge \lambda^* K_k  + \log n/\log \log n.
$
The matrix concentration inequality given in \cite[Theorem 5]{HajekWuXuSDP14} shows that $\eta^\ast=||A-\expect{A} || = O(\sqrt{\log n} )$ with high probability.
Therefore, by the assumption $\rho_{r}  I(a, \tilde{\tau}) >1$ and the definition of $d_i^\ast$, it follows that with high probability, $\min_{i \notin C_0} d^\ast_i >0$.

Turning next to $B^\ast_{ij}$'s for $(i,j)\in C_k\times C_{k'},$  it suffices to consider the two
following cases:

\paragraph{Case 1: $k$ and $k'$ correspond to the smallest and second smallest clusters, $r-1$ and $r.$}
Note that   $\frac{e(C_k,C_{k'})}{K_kK_{k'}}$  will be
very close to $q$ with high probability,
so we can replace it by $q=\frac{b\log n}{n}$,
which is also the mean of   $\frac{e(i,C_{k'})}{K_{k'}}$   and  $ \frac{e(j,C_{k})}{K_{k}} .$
Specifically, it follows from the Chernoff bound that
\begin{align*}
\mathbb{P}\left\{
\frac{e(C_k,C_{k'} ) } {K_k K_{k'} }  \le
q- \frac{2 \sqrt{q \log n} } {\sqrt{K_k K_{k'} }  }
\right \}
= \mathbb{P} \left\{ e(C_k,C_{k'} ) \le \mu (1-\epsilon) \right \} \le e^{-\epsilon^2 \mu /2} = n^{-2},
\end{align*}
where $\mu= q K_k K_{k'}$ and $\epsilon= \frac{2 \sqrt{ \log n} }{\sqrt{q K_k K_{k'} }  }.$
In view of \prettyref{lmm:binomialmaxminconcentration}  and the union bound,
\begin{align*}
\prob{ \max_{i\in C_{r-1}} e(i,C_{r})    \geq   (b+ \psi_1)  K_r \log n/n - \frac{\log n}{\log \log n} } & \le  n^{1- \rho_r I(b, b+\psi_1) + o(1) },  \\
\prob{  \max_{i\in C_{r}} e(i,C_{r-1})   \geq  (b+ \psi_2)  K_{r-1} \log n/n - \frac{\log n}{\log \log n}  } & \le n^{1- \rho_{r-1} I(b, b+\psi_2) + o(1) }.
\end{align*}
By the assumptions $\rho_r I(b, b+\psi_1)>1$ and $\rho_{r-1} I(b, b+\psi_2) >1$, it follows that with high probability
$B^\ast_{C_k \times C_{k'}} >0$.

\paragraph{Case 2: $k$ corresponds to the smallest cluster and $C_{k'}$ is the set of outliers (i.e. $k=r, k'=0.$)}
In view of \prettyref{lmm:binomialmaxminconcentration} and the union bound,
\begin{align*}
\prob{ \max_{i\in C_{0}} e(i,C_{r})    \geq   \tilde{\tau}  K_r \log n/n } & \le  n^{1- \rho_r I(b, \tilde{\tau} ) + o(1) }.
\end{align*}
By the assumptions $\rho_r I(b, \tilde{\tau})>1$, it follows that with high probability
$B^\ast_{C_k \times C_{k'}} \ge 0$.

In conclusion, we have constructed $(D^*, B^*, \eta^*, \lambda^*)$  such that the conditions of Lemma \ref{lemma:SDP_R} hold with high probability.
Therefore, the theorem follows by applying Lemma \ref{lemma:SDP_R}.
\end{proof}
%

\begin{lemma}  Let $\tau = \frac{a-b}{\log(a/b)}$ for $0 < a < b.$
Then $ I(b,\tau) \leq (\sqrt{a}- \sqrt{b})^2 \leq 2I(b,\tau).$  \label{lemma.Iab}
\end{lemma}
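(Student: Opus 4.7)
The plan is to realize $I(b,\tau)$ as the maximum over $t\in[0,1]$ of a concave function $F(t)$ that vanishes at both endpoints and whose midpoint value is exactly $\tfrac{1}{2}(\sqrt{a}-\sqrt{b})^2$; the two desired inequalities then fall out of a one-line concavity argument that sandwiches $\max F$ between $F(1/2)$ and $2F(1/2)$.

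Concretely, I would introduce the Chernoff-type functional
\[
F(t) \triangleq ta + (1-t)b - a^{t}b^{1-t}, \qquad t\in[0,1].
\]
Since $t\mapsto a^{t}b^{1-t}=\exp(t\log a+(1-t)\log b)$ is convex (being the exponential of an affine function), $F$ is concave on $[0,1]$. One checks directly that $F(0)=F(1)=0$ and
\[
F(1/2)=\tfrac{a+b}{2}-\sqrt{ab}=\tfrac{1}{2}(\sqrt{a}-\sqrt{b})^{2}.
\]
The first-order condition $F'(t)=a-b-a^{t}b^{1-t}\log(a/b)=0$ gives $a^{t^{\ast}}b^{1-t^{\ast}}=(a-b)/\log(a/b)=\tau$, so $t^{\ast}=\log(\tau/b)/\log(a/b)\in(0,1)$, and plugging back yields
\[
\max_{t\in[0,1]} F(t)=F(t^{\ast})=b-\tau+\tau\log(\tau/b)=I(b,\tau).
\]

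The last step is the elementary fact: for any concave $F\colon[0,1]\to\IR$ with $F(0)=F(1)=0$, we have $F(1/2)\le \max_{t} F(t)\le 2F(1/2)$. The lower bound is trivial. For the upper bound, concavity together with $F(0)=0$ implies that $t\mapsto F(t)/t$ is non-increasing on $(0,1]$; hence if $t^{\ast}\ge 1/2$ then $F(t^{\ast})/t^{\ast}\le F(1/2)/(1/2)=2F(1/2)$, so $F(t^{\ast})\le 2t^{\ast}F(1/2)\le 2F(1/2)$. The case $t^{\ast}\le 1/2$ is handled symmetrically, using that $t\mapsto F(t)/(1-t)$ is non-decreasing (equivalently, apply the same argument to $s\mapsto F(1-s)$). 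Combining with the identifications above gives
\[
\tfrac{1}{2}(\sqrt{a}-\sqrt{b})^{2} \;\le\; I(b,\tau) \;\le\; (\sqrt{a}-\sqrt{b})^{2},
\]
which is exactly the claim.

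I do not anticipate any serious obstacle. The only non-routine observation is that the logarithmic mean $\tau=(a-b)/\log(a/b)$ coincides with the Chernoff optimizer of $F$, after which the proof reduces to a textbook concavity sandwich that works uniformly in $a,b>0$ (the inequalities are symmetric in $a,b$, so the sign convention $a<b$ versus $a>b$ is irrelevant) with no case analysis beyond the $t^{\ast}\gtrless 1/2$ dichotomy above.
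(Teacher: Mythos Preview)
Your proof is correct and takes a genuinely different route from the paper's.

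The paper works directly in the ``mean'' variable of $I(\cdot,\cdot)$: it observes the algebraic identity $I(a,\tau)=I(b,\tau)$, notes that $x\mapsto I(a,x)+I(b,x)$ is convex with minimum value $(\sqrt{a}-\sqrt{b})^2$ attained at $x=\sqrt{ab}$, and then gets the upper bound by evaluating this sum at $x=\tau$ and the lower bound from monotonicity of $I(a,\cdot)$ between $\sqrt{ab}$ and $\tau$ together with $I(b,\sqrt{ab})\ge 0$. You instead pass to the Legendre-dual (Chernoff) picture: with $F(t)=ta+(1-t)b-a^{t}b^{1-t}$ you identify $\max_{t}F=I(b,\tau)$ and $F(1/2)=\tfrac{1}{2}(\sqrt{a}-\sqrt{b})^2$, and then invoke the purely geometric fact that a concave $F$ on $[0,1]$ with $F(0)=F(1)=0$ satisfies $F(1/2)\le\max F\le 2F(1/2)$. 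Your argument makes the factor~$2$ completely transparent (it is exactly the sandwich $\max F\le 2F(1/2)$, tight for piecewise-linear tents) and is manifestly symmetric in $a,b$, so the sign convention in the hypothesis is irrelevant. The paper's argument, on the other hand, surfaces the identity $I(a,\tau)=I(b,\tau)$, which it uses elsewhere; if you adopt your proof you may want to record that identity separately.
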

\begin{proof}
Notice that $I(a,x)+I(b,x)$ is strictly convex in $x$. By setting the derivative
to be zero, we find that it achieves its minimum
value, $(\sqrt{a} - \sqrt{b})^2,$  at $x=\sqrt{ab}.$
By definition, $I(a,\tau)=I(b,\tau)$  and $\sqrt{ab} \leq \tau.$  Thus,
 $(\sqrt{a} -\sqrt{b})^2 \leq I(a,\tau)+I(b,\tau)=2I(b,\tau)$.
 Moreover, $I(a,x)$ is decreasing for $x \le a$ and $I(b,x)$ is non-negative.
 Therefore, $I(b,\tau)=I(a,\tau)\leq I(a,\sqrt{ab}) \leq (\sqrt{a} - \sqrt{b})^2 .$
 \end{proof}

\begin{lemma}  \label{lemma.Ibnd}
For any $\mu >0$ and $x > 0,$   $I(\mu, \mu+ 2x) \leq  4 I(\mu, \mu+ x).$
\end{lemma}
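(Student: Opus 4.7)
The plan is to reduce the inequality to a one-variable calculus problem by rescaling, and then to verify the reduced inequality by checking derivatives at the origin.

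First, I would normalize: substitute $u = x/\mu$ and note that
\[
I(\mu, \mu+x) \;=\; \mu - (\mu+x)\log\!\frac{e\mu}{\mu+x} \;=\; \mu\bigl[(1+u)\log(1+u) - u\bigr].
\]
Defining $g(u) \triangleq (1+u)\log(1+u) - u$, the desired inequality $I(\mu,\mu+2x) \leq 4 I(\mu,\mu+x)$ is equivalent to
\[
g(2u) \;\leq\; 4\,g(u), \qquad u > 0.
\]
So the claim collapses to a purely scalar statement about $g$ that does not depend on $\mu$.

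Next I would introduce $h(u) \triangleq 4g(u) - g(2u)$ and prove $h(u) \geq 0$ for $u \geq 0$ by differentiating twice. Direct computation gives $g'(u) = \log(1+u)$ and $g''(u) = 1/(1+u)$, so
\[
h'(u) \;=\; 4\log(1+u) - 2\log(1+2u), \qquad h''(u) \;=\; \frac{4}{1+u} - \frac{4}{1+2u} \;=\; \frac{4u}{(1+u)(1+2u)}.
\]
Thus $h''(u) \geq 0$ for all $u \geq 0$. Combined with $h'(0) = 0$, this gives $h'(u) \geq 0$, and combined with $h(0) = 0$, this yields $h(u) \geq 0$, which is what we wanted.

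There is no real obstacle here: the only point requiring a moment's thought is recognizing that the inequality is scale-invariant in the sense that the $\mu$ prefactor drops out, reducing the two-parameter statement to the one-parameter statement $g(2u) \leq 4 g(u)$. Once that reduction is in place, the proof is a short convexity argument. A minor sanity check is the small-$u$ expansion $g(u) = u^2/2 - u^3/6 + O(u^4)$, which shows that $4g(u) - g(2u) = O(u^3)$ and is nonnegative to leading order, confirming that the inequality is tight only at $u = 0$.
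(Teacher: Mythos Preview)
Your proof is correct and is essentially the same argument as the paper's: both rest on the fact that the second derivative of $s\mapsto I(\mu,\mu+s)$ is $1/(\mu+s)$, which is decreasing. You package this by rescaling to remove $\mu$ and showing $h(u)=4g(u)-g(2u)$ satisfies $h(0)=h'(0)=0$ and $h''(u)=4g''(u)-4g''(2u)\geq 0$; the paper instead writes $f(x)=\int_0^x\frac{x-s}{\mu+s}\,ds$, substitutes $s=2t$ to get $f(2x)=4\int_0^x\frac{x-t}{\mu+2t}\,dt$, and compares integrands pointwise---but this is just the integrated form of the same inequality $g''(2t)\leq g''(t)$.
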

\begin{proof}
Let $f(x)=I(\mu, \mu+ x)$ for $x \geq 0$.   Then $f(0)=f'(0)=0$ and
$f''(s) = \frac{1}{\mu+s}.$  Therefore,
$$
f(x) =  \int_0^x  \int_0^y f''(s) ds dy =
\int_0^x  \int_0^y \frac{1}{\mu+s}  ds dy  = \int_0^x  \int_s^x \frac{1}{\mu+s}  dy ds =  \int_0^x \frac{x-s}{\mu+s} ds.
$$
Thus, using a change of variables $s=2t$,
$$
f(2x)  = \int_0^{2x} \frac{2x-s}{\mu+s} ds =  4  \int_0^x  \frac{x-t}{\mu+2t} dt.
$$
Comparing the expressions for $f(x)$ and $f(2x)$ completes the proof.
\end{proof}

\appendices
\section{ Behavior of threshold function in \prettyref{eq:threshold}}
	\label{app:eta}
Recall $\eta(\rho,a,b)$ defined in \prettyref{eq:threshold} which governs the sharp recovery threshold for the asymmetric binary SBM.
The following lemma implies $\eta(\rho,a,b)$ is minimized at $\rho=1/2.$
\begin{lemma}  \label{lmm:eta_behavior}
For any $a>b>0$, $\eta(\rho, a, b)$ is convex in $\rho$ over $[0,1],$ and symmetric about $\rho=1/2.$
\end{lemma}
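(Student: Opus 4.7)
The plan is to dispatch the symmetry and convexity assertions separately, both by arguments that tap into structure already implicit in the paper.

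\emph{Symmetry.} I would read it off directly from the closed form \prettyref{eq:threshold}. Under the swap $\rho\leftrightarrow\bar\rho$, the three quantities $(a+b)/2$, $\tau$, and $\gamma$ are invariant---$\gamma$ manifestly, because it depends on $\rho,\bar\rho$ only through the symmetric combinations $(\bar\rho-\rho)^2$ and $\rho\bar\rho$. In the remaining term, the prefactor $(\bar\rho-\rho)\tau/2$ flips sign, while the argument $\frac{\rho(\gamma+(\bar\rho-\rho)\tau)}{\bar\rho(\gamma-(\bar\rho-\rho)\tau)}$ of the logarithm is sent to its reciprocal. The two sign flips cancel and the term is preserved, yielding $\eta(\rho,a,b)=\eta(\bar\rho,a,b)$.

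\emph{Convexity.} The cleanest route is to recover a variational (Chernoff) representation of $\eta$. The discussion immediately after \prettyref{thm:SBMSharp_unbalanced} identifies $\eta(\rho,a,b)$ with the large-deviation exponent of $\prob{e(i,C_1^*)-e(i,C_2^*)\le\tau(\rho-\bar\rho)\log n}$ for $i\in C_1^*$, which by \prettyref{lmm:general} equals $g(\rho,\bar\rho,a,b,\tau(\rho-\bar\rho))$. Matching against \prettyref{eq:threshold} is routine bookkeeping: split the $\log(a/b)$ piece out of the logarithm, use $\tau\log(a/b)=a-b$, and apply the identity $a\rho+b\bar\rho+\tfrac{(\bar\rho-\rho)(a-b)}{2}=\tfrac{a+b}{2}$. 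The Chernoff calculation inside the proof of \prettyref{lmm:general} further realizes $g$ as
\[
g(\rho_1,\rho_2,a,b,\alpha)=\sup_{t\ge 0}\left[-\alpha t-\rho_1 a(e^{-t}-1)-\rho_2 b(e^t-1)\right],
\]
the supremum being attained at the explicit optimizer $t^*$ exhibited there provided $\alpha\le\rho_1 a-\rho_2 b$; this last inequality is satisfied throughout $\rho\in(0,1)$ for the choice $\alpha=\tau(\rho-\bar\rho)$ because $b<\tau<a$. Substituting,
\[
\eta(\rho,a,b)=\sup_{t\ge 0}\left[-\tau(2\rho-1)\,t-\rho a(e^{-t}-1)-(1-\rho)b(e^t-1)\right].
\]
For each fixed $t$ the bracketed expression is \emph{affine} in $\rho$, so $\eta(\cdot,a,b)$ is a pointwise supremum of affine functions, hence convex on $(0,1)$. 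Continuity at the endpoints (via the limiting formula stated in the theorem) then extends convexity to $[0,1]$.

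\emph{Main obstacle.} Nothing here is truly difficult; the only step that needs a small computation is the identification of \prettyref{eq:threshold} with the Chernoff supremum, which is the bookkeeping already performed inside the proof of \prettyref{lmm:general}. The whole point of the variational viewpoint is that it makes convexity transparent---a head-on attempt to verify $\eta''(\rho)\ge 0$ from the closed form \prettyref{eq:threshold} would be very unpleasant because of the $\gamma$-dependent logarithm.
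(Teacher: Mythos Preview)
Your proposal is correct. The symmetry argument coincides with the paper's. For convexity, however, the paper does precisely what you call ``very unpleasant'': it differentiates the closed form \prettyref{eq:threshold} twice, carrying the $\gamma$-dependent logarithm through a page of algebra, and eventually reduces $\eta''\ge 0$ to the elementary inequality
\[
\bigl(\rho^2+(1-\rho)^2+2\rho(1-\rho)x\bigr)^2 \;\ge\; (1-2\rho)^2+4\rho(1-\rho)x,\qquad x=ab/\tau^2,
\]
which factors as $4\rho^2(1-\rho)^2(x-1)^2\ge 0$. Your route is genuinely different and substantially cleaner: by recognizing $\eta(\rho,a,b)=g(\rho,\bar\rho,a,b,\tau(\rho-\bar\rho))$ and invoking the Chernoff representation already worked out inside \prettyref{lmm:general}, you exhibit $\eta$ as a pointwise supremum over $t\ge 0$ of functions that are affine in $\rho$, so convexity is immediate and structural rather than computational. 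The only checks needed---that the unconstrained maximizer $t^*$ lands in $[0,\infty)$ throughout $\rho\in[0,1]$ (equivalently $\tau(\rho-\bar\rho)\le a\rho-b\bar\rho$, which follows from $b<\tau<a$), and the bookkeeping matching $g$ to \prettyref{eq:threshold}---are exactly as you describe. The paper's brute-force approach does produce an explicit formula for $\eta''$, but that formula is never used elsewhere, so nothing is lost by your argument.
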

\begin{proof}
 Recall that
$$
\eta(\rho, a, b)=\frac{a+b}{2}  - \gamma + \frac{(1-2\rho) \tau}{2} \log  \frac{(\gamma+(1-2\rho) \tau)\rho}{(\gamma- (1-2\rho) \tau)(1-\rho)}
$$
and from this expression it is easily checked that $\eta$ is symmetric about $\rho=1/2.$

Let $\eta', \eta''$ denote the first-order and second-order derivative of $\eta $ with respect to $\rho$, respectively.
We show that $\eta'' \ge 0$. Recall that $\gamma= \sqrt{ (1-2\rho)^2 \tau^2 + 4 \rho (1-\rho) a b }$. Hence,
\begin{align*}
\frac{\diff \gamma }{\diff \rho } = \frac{- 4 (1-2\rho) \tau^2 + 4 (1-2\rho) a b }{2\gamma}  = 2 (1-2\rho) \frac{ab - \tau^2}{\gamma}.
\end{align*}
Let $h(\rho)= \log  \frac{(\gamma+(1-2\rho) \tau) \rho}{(\gamma- (1-2\rho) \tau)(1-\rho)}$ and then
\begin{align*}
\frac{\diff h }{\diff \rho }  & = \frac{1}{\rho} + \frac{1}{1-\rho} + \frac{\diff \gamma / \diff \rho - 2 \tau }{\gamma+(1-2\rho) \tau) \rho}-
\frac{\diff \gamma / \diff \rho + 2 \tau }{\gamma-(1-2\rho) \tau) \rho} \\
&= \frac{1}{\rho(1-\rho) } - \frac{\diff \gamma }{\diff \rho } \frac{2 (1-2\rho) \tau}{4 \rho (1-\rho) a b} - \frac{ 4 \tau \gamma }{4\rho (1-\rho) a b} \\
& = \frac{\gamma - \tau }{\rho (1-\rho) \gamma }.
\end{align*}
It follows that
\begin{align*}
\eta' &=  -   \frac{\diff \gamma }{\diff \rho }  +  \frac{(1-2\rho) \tau}{2} \frac{\diff h }{\diff \rho } - \tau h \\
& =  - 2 (1-2\rho) \frac{ab - \tau^2}{\gamma} +  \frac{(1-2\rho) \tau}{2} \frac{\gamma - \tau }{\rho (1-\rho) \gamma } - \tau h \\
& \overset{(a)}{=} \frac{(1-2\rho) (\tau - \gamma) }{ 2 \rho (1-\rho)} - \tau h.
\end{align*}
where $(a)$ follows using the expression of $\gamma$. Therefore,
\begin{align*}
\eta'' &=  - \frac{1}{2 } \left(  \frac{1}{\rho^2} + \frac{1}{(1-\rho)^2}  \right) (\tau -\gamma)  - \frac{(1-2\rho)  }{ 2 \rho (1-\rho )}  \frac{\diff \gamma }{\diff \rho } - \tau \frac{\diff h }{\diff \rho }  \\
& = - \frac{\rho^2 + (1-\rho)^2 }{2 \rho^2 (1-\rho)^2 } (\tau - \gamma) + \frac{(1-2\rho)^2  (\tau^2- ab) }{\rho(1-\rho) \gamma} + \frac{\tau (\tau-\gamma)}{\rho(1-\rho) \gamma} \\
& = \frac{1}{ \rho(1-\rho) \gamma} \left[   \frac{\rho^2 + (1-\rho)^2}{2 \rho (1-\rho) } \gamma^2  - \left(\frac{\rho^2 + (1-\rho)^2}{2 \rho (1-\rho) } +1 \right) \gamma \tau +  \left( 1+ (1- 2\rho)^2 \right) \tau^2 \right] \\
& = \frac{1}{ \rho(1-\rho) \gamma} \left[   a b - \frac{\gamma \tau }{2 \rho (1-\rho) } + \frac{\rho^2 + (1-\rho)^2 }{2 \rho (1-\rho) } \tau^2 \right] \\
& = \frac{1}{ 2\rho^2(1-\rho)^2  \gamma} \left[    \left( \rho^2 + (1-\rho)^2 \right)  \tau^2 - \gamma \tau + 2 \rho (1-\rho) a b\right]  \\
& = \frac{\tau^2}{ 2\rho^2(1-\rho)^2  \gamma } \left[     \rho^2 + (1-\rho)^2 - \sqrt{(1- 2\rho)^2 + 4 \rho (1-\rho) ab /\tau^2 }+ 2 \rho (1-\rho) a b/\tau^2\right] \ge 0,
\end{align*}
where the last inequality follows because by letting $x = ab/\tau^2$,
\begin{align*}
 \left( \rho^2 + (1-\rho)^2 + 2 \rho (1-\rho) x  \right)^2  -  \left[  (1- 2\rho)^2 + 4 \rho (1-\rho) x \right] =
 4 \rho^2 (1-\rho)^2 (x-1)^2 \ge 0.
\end{align*}
Thus $\eta$ is convex in $\rho.$
\end{proof}

\section{A data-driven choice of the penalization parameter in \prettyref{eq:SDP2}}
	\label{app:thmSDP2}
Fix $a>b$ such that $\sqrt{a}> \sqrt{b}+\sqrt{2}$ and fix $\rho \in (0,\frac{1}{2}]$.
Recall that $p= a \log n/n$, $q= b \log n/n$, $K=\lceil \rho n \rceil$, and $\bar{\rho}=1-\rho$.
Let $d_i = \sum_j A_{ij}$ denote the degree of the $i^\Th$ vertex and set $w_i = \frac{d_i}{\log n}$.
Set	$w_- = a \rho+\bar \rho b$ and $w_+=a \bar \rho+ \rho b$.
Then $\Expect[w_i]=w_- + O(1/n)$ or $w_+ +O(1/n) $ if $\sigma_i=1$ or $-1$, and $w_+ \geq w_-$ with equality if and only if $\rho = \frac{1}{2}$.
Let $\hat{w}=\frac{1}{n} \sum w_i = \frac{2}{n \log n} \sum_{i<j} A_{ij}$, where $\sum_{i<j} A_{ij}$ is distributed as $\Binom\big( \binom{K}{2} + \binom{n-K}{2} , p \big)$ convolved with $\Binom( K(n-K), q )$.
It follows from Bernstein's inequality that
for any $c > 0$, there exists a constant $c'>0$ such that with probability at least $1-n^{-c}$,
\begin{align*}
\bigg| \sum_{i<j}  ( A_{ij} -   \expect{A_{ij} } ) \bigg| \le c' \sqrt{ n } \log n.
\end{align*}
Thus $\hat{w}=  \rho w_{-}+ \bar{\rho} w_{+} + \Oprob(n^{-1/2} ) $.

Set	$\widehat \rho = \frac{1}{n} \sum \indc{w_i \le \hat w}$, $\widehat{w}_{+} = \frac{1}{n} \sum w_i \indc{w_i > \hat w}$ and $\widehat{w}_{-} = \frac{1}{n} \sum w_i \indc{w_i < \hat w}$, which are consistent estimates for $\rho,w_+,w_-$, respectively. From these we can readily obtain consistent estimates for $(a,b,\rho)$ whenever $\rho \neq 1/2$.
Furthermore, when $\rho=1/2$,  we claim that
	\begin{equation}
	\widehat \rho = \frac{1}{2} + \oprob(\log^{-1/9} n).
	\label{eq:rhoconcentrate}
\end{equation}
Now we are ready to choose the penalty parameter $\widehat \lambda = \widehat \lambda(A)$, so that \prettyref{thm:SDP2} continues to hold upon replacing the deterministic $\lambda^*$ by $\hat \lambda$.	
	Let
\begin{equation}
	\hat \lambda = \begin{cases}
	\hat w \frac{\log n}{n} & |\widehat\rho - \frac{1}{2}| \leq \log^{-1/9} n\\
	\frac{\widehat{w}_+ - \widehat{w}_-}{1 - 2 \widehat\rho} \frac{1}{\log \frac{(\widehat{w}_{+} + \widehat{w}_-)(1 - 2 \widehat\rho) + (\widehat{w}_+ - \widehat{w}_-)}{(\widehat{w}_+ + \widehat{w}_-)(1 - 2 \widehat\rho) - (\widehat{w}_+ - \widehat{w}_-)}}  \frac{\log n}{n} & |\widehat\rho - \frac{1}{2}| > \log^{-1/9} n.
	\end{cases}
	\label{eq:lambdahat}
\end{equation}
To verify the correctness of the SDP, it suffices to show that $\widehat \lambda$ is close to the appropriate deterministic penalty term in probability.	
First consider $\rho = \frac{1}{2}$. Then \prettyref{eq:rhoconcentrate} implies that $\widehat \lambda = \hat w \frac{\log n}{n} =  (\frac{a+b}{2} + \oprob(1)) \frac{\log n}{n}$. The proof of \cite[Theorem 2]{HajekWuXuSDP14} for the binary symmetric SBM shows that any $\hat{\lambda} \geq q$ suffices.	
Next consider $\rho \neq \frac{1}{2}$. Then $\rho \in (\epsilon,\frac{1}{2}-\epsilon)$ for some $\epsilon > 0$. Since $\hat{\rho}$ is a consistent estimator of $\rho$ when $\rho \neq 1/2$, it follows that
$\widehat \lambda$ is set according to the second case of \prettyref{eq:lambdahat}.	Recall that $\lambda^* = \tau \frac{\log n}{n}$ and $\tau = \frac{a-b}{\log a-\log b}$.	Then $\widehat \lambda = (\tau + \oprob(1)) \frac{\log n}{n}$ and the proof of \prettyref{thm:SDP2} carries over.

It remains to prove \prettyref{eq:rhoconcentrate}.
To this end, note that with high probability,  $\hat w = \frac{a+b}{2} + \Oprob(n^{-1/2})$.
Set $\delta_n = n^{-1/3}$ and define $\widehat \rho_{\pm} = \frac{1}{n} \sum \indc{w_i \le (a+b)/2 \pm \delta_n}$. Since $\widehat \rho_{-} \leq \widehat \rho \leq \widehat \rho_{+}$ with probability tending to one, it suffices to show both $\widehat \rho_{\pm}$ satisfy \prettyref{eq:rhoconcentrate}. We only consider $\widehat \rho_-$ as the other case follows entirely analogously.
Define $X= (w_1-\expect{w_1})/\sqrt{\var(w_1)}$.
Denote by $F_{X}$ and $\Phi$ the cumulative distribution function of $w_1$ and the standard normal distribution, respectively.
It follows from Berry-Esseen inequality that
\begin{align*}
\sup_{x \in \reals} | F_{X} (x) - \Phi(x) | =O(1/\sqrt{\log n}).
\end{align*}
Since $\expect{w_1} = \frac{a+b}{2}  + O(\frac{1}{n})$ and $\var(w_1) = O(1/\log n)$, we have
\begin{align}
\expect{\widehat \rho_- \; } & = \prob{w_1 \le \frac{a+b}{2} - \delta_n}= F_{X} \left( O(\delta_n \sqrt{\log n} ) \right) \nonumber \\
&= \Phi \left( O(\sqrt{\log n}/n^{1/3} ) \right) + O(1/\sqrt{\log n}) = 1/2+ O(1/\sqrt{\log n}).  \label{eq:be}
\end{align}
Moreover, by definition
\begin{align}
\var(\hat \rho_- )=  \frac{1}{n} \prob{w_1 \le \frac{a+b}{2} - \delta_n}   + \frac{n(n-1)}{n^2}  \prob{w_1, w_2 \le \frac{a+b}{2} - \delta_n} - \mathbb{P}^2 \left \{ w_1 \le \frac{a+b}{2} - \delta_n \right\} . \label{eq:defhatrho}
\end{align}
Let $\tilde{d}_1=\sum_{j \neq 2} A_{1j}$ and $\tilde{d}_2=\sum_{j\neq 1} A_{2j}$.  Let $\tilde{w}_i=\tilde d_i/\log n$ for $i=1,2$. Then
\begin{align}
 \prob{w_1, w_2 \le \frac{a+b}{2} - \delta_n} \le  \prob{ \tilde{w}_1, \tilde{w}_2 \le \frac{a+b}{2} - \delta_n } = \mathbb{P}^2 \left \{ \tilde{w}_1 \le \frac{a+b}{2} - \delta_n  \right\}, \label{eq:boundw}
\end{align}
where the last equality holds because $\tilde{w}_1$ and $\tilde{w_2}$ are independent and identically distributed. Similarly to \prettyref{eq:be}, we have
$\prob{ \tilde{w}_1 \le \frac{a+b}{2} - \delta_n  } = 1/2+ O(1/\sqrt{\log n})$.
Therefore, in view of \prettyref{eq:defhatrho} and \prettyref{eq:boundw},
we have that $\var(\hat \rho_- )=  O(1/\sqrt{\log n})$.
By Chebyshev's inequality, with probability at least $1-\log^{-1/4} n$, $| \hat \rho_- - \expect{\widehat{\rho}_- \; }| \le O( \log^{-1/8 } n )$, completing the proof.

\bibliographystyle{abbrv}
\bibliography{../../one_community/graphical_combined}
\end{document}